 \crefname{appendix}{App.}{App.}
\crefname{equation}{}{}
\Crefname{equation}{}{}
\crefname{lemma}{Lem.}{Lem.}
\crefname{theorem}{Thm.}{Thm.}
\crefname{Corollary}{Cor.}{Cors.}
\crefname{algorithm}{Alg.}{Algs.}
\crefname{section}{Sec.}{Sec.}
\crefname{table}{Tab.}{Tab.}
\crefname{remark}{Rem.}{Rem.}
\crefname{definition}{Def.}{Def.}
\crefname{Proposition}{Prop.}{Prop.}
\crefname{myremark}{Rem.}{Rem.}
\crefname{mylemma}{Lem.}{Lem.}
\crefname{mydefinition}{Def.}{Defs.}
\crefname{myproposition}{Prop.}{Prop.}
\crefname{mycorollary}{Cor.}{Cors.}
\crefname{myassumption}{Assum.}{Assum.}
\crefname{figure}{Fig.}{Fig.}
\crefname{enumi}{}{}
\crefname{name}{}{} 
\newcommand{\bias}{\mathbb{B}}
\newcommand{\trainfold}{\mathcal{F}_{train}}
\newcommand{\testfold}{\mathcal{F}_{test}}
\newcommand{\MSE}{\mathrm{MSE}}
\newcommand{\iidsim}{\stackrel{iid}{\sim}}
\newcommand{\sige}{\sigma_{\eps}}
\newcommand{\cn}{\mathcal{N}}
\newcommand{\etarow}[1][i]{\cn_{row}(#1)}
\newcommand{\etacol}[1][j]{\cn_{col}(#1)}
\newcommand{\signalij}{\theta_{i,j}}
\newcommand{\hatij}{\what{\theta}_{i,j}}
\newcommand{\deno}{\cn_{row,col}(i, j)}
\newcommand{\sumn}[1][i]{\sum_{#1=1}^n}
\newcommand{\x}{x}
\newcommand{\axi}[1][i]{\x_{#1}}
\newcommand{\eps}{\epsilon}
\newcommand{\pseqxn}[1][n]{(\axi[i])_{i\geq 1}} 
\newcommand{\pseqxnn}[1][n]{(\axi[i])_{i=1}^n} 
\newcommand{\parenth}[1]{\left( #1 \right)}
\newcommand{\braces}[1]{\left\{ #1 \right \}}
\newcommand{\abss}[1]{\left| #1 \right |}
\newcommand{\real}{\ensuremath{\mathbb{R}}}
\newcommand{\Prob}{\ensuremath{{\mathbb{P}}}}
\newcommand{\sig}{\sigma}
\def\balign#1\ealign{\begin{align}#1\end{align}}
\def\baligns#1\ealigns{\begin{align*}#1\end{align*}}
\def\balignat#1\ealign{\begin{alignat}#1\end{alignat}}
\def\balignats#1\ealigns{\begin{alignat*}#1\end{alignat*}}
\def\bitemize#1\eitemize{\begin{itemize}#1\end{itemize}}
\def\benumerate#1\eenumerate{\begin{enumerate}#1\end{enumerate}}
\newenvironment{talign*}
 {\csname align*\endcsname}
 {\endalign}
\newenvironment{talign}
 {\csname align\endcsname}
 {\endalign}
\def\balignst#1\ealignst{\begin{talign*}#1\end{talign*}}
\def\balignt#1\ealignt{\begin{talign}#1\end{talign}}
\newcommand{\qtext}[1]{\quad\text{#1}\quad}
\let\originalleft\left
\let\originalright\right
\renewcommand{\left}{\mathopen{}\mathclose\bgroup\originalleft}
\renewcommand{\right}{\aftergroup\egroup\originalright}
\def\Holder{H\"older\xspace}
\def\tinycitep*#1{{\tiny\citep*{#1}}}
\def\tinycitealt*#1{{\tiny\citealt*{#1}}}
\def\tinycite*#1{{\tiny\cite*{#1}}}
\def\smallcitep*#1{{\scriptsize\citep*{#1}}}
\def\smallcitealt*#1{{\scriptsize\citealt*{#1}}}
\def\smallcite*#1{{\scriptsize\cite*{#1}}}
\def\mbi#1{\boldsymbol{#1}} 
\def\mbb#1{\mathbb{#1}}
\def\mc#1{\mathcal{#1}}
\def\mrm#1{\mathrm{#1}}
\def\trm#1{\textrm{#1}}
\def\tbf#1{\textbf{#1}}
\def\<{\left\langle} 
\def\>{\right\rangle}
\def\implies{\quad\Longrightarrow\quad}
\def\what#1{\widehat{#1}}
\def\indic#1{\mbb{I}\left[{#1}\right]} 
\newcommand{\Gsn}{\mathcal{N}}
\newcommand{\Ber}{\textnormal{Ber}}
\newcommand{\Unif}{\textnormal{Unif}}
\newcommand{\iid}{\textrm{i.i.d.}\xspace}
\newenvironment{proof-sketch}{\noindent\textbf{Proof Sketch}
  \hspace*{1em}}{\qed\bigskip\\}
\newenvironment{proof-idea}{\noindent\textbf{Proof Idea}
  \hspace*{1em}}{\qed\bigskip\\}
\newenvironment{proof-of-lemma}[1][{}]{\noindent\textbf{Proof of Lemma {#1}}
  \hspace*{1em}}{\qed\\}
\newenvironment{proof-of-theorem}[1][{}]{\noindent\textbf{Proof of Theorem {#1}}
  \hspace*{1em}}{\qed\\}
\newenvironment{proof-attempt}{\noindent\textbf{Proof Attempt}
  \hspace*{1em}}{\qed\bigskip\\}
\begin{document}

\title{Two-Sided Nearest Neighbors: An adaptive and minimax optimal procedure for matrix completion}

\author{\name Tathagata Sadhukhan$^*$ \email ts767@cornell.edu \\
       \addr Department of Statistics and Data Science\\
       Cornell University\\
       Ithaca, NY 14853-1198, USA
       \AND
       \name Manit Paul$^*$ \email paulman@wharton.upenn.edu \\
       \addr Department of Statistics and Data Science\\
       The Wharton School, University of Pennsylvania\\
       Philadelphia, PA 19104-1686, USA
       \AND
       \name Raaz Dwivedi \email rd597@cornell.edu \\
       \addr Operations Research and Information Engineering\\
       Cornell Tech, Cornell University\\
       New York, NY 10044, USA\\
       $*$ $=$ equal contribution}

\editor{}

\maketitle

\begin{abstract}
Nearest neighbor (NN) algorithms have been extensively used for missing data problems in recommender systems and sequential decision-making systems. Prior theoretical analysis has established favorable guarantees for NN when the underlying data is sufficiently smooth and the missingness probabilities are lower bounded. Here we analyze NN with non-smooth non-linear functions with vast amounts of missingness. In particular, we consider matrix completion settings where the entries of the underlying matrix follow a latent non-linear factor model, with the non-linearity belonging to a \Holder function class that is less smooth than Lipschitz. Our results establish following favorable properties for a suitable two-sided NN: (1) The mean squared error (MSE) of NN adapts to the smoothness of the non-linearity, (2) under certain regularity conditions, the NN error rate matches the rate obtained by an oracle equipped with the knowledge of both the row and column latent factors, and finally (3) NN's MSE is non-trivial for a wide range of settings even when several matrix entries might be missing deterministically. We support our theoretical findings via extensive numerical simulations and a case study with data from a mobile health study, HeartSteps.
\end{abstract}

\begin{keywords}
  Nearest neighbor, Holder function, Matrix completion, Non-asymptotic pointwise guarantees; Missing not at random
\end{keywords}

\section{Introduction}
\label{sec:intro}
Latent factor models are ubiquitous in recommendation systems, panel data settings, sequential decision-making problems, and in various other scenarios. Matrix completion is a crucial problem in this context. Suppose $\Theta = ((\theta_{i,j})) \in \mathbb{R}^{n \times m}$ denotes the matrix of ground truths and $X \in \mathbb{R}^{n \times m}$ denotes the observed matrix. Let $A_{i, j}$ be the indicator variable denoting whether the $(i, j)$-th element of the matrix has been observed or not. We have the following model,
\begin{align}
\label{eq:main_model}
    X_{i, j} = \begin{cases}
        \theta_{i,j} + \epsilon_{i, j} \quad & \mbox{if  } A_{i, j} = 1 ,\\
        * \quad & \mbox{if  } A_{i, j} = 0.
    \end{cases}
\end{align}
Here $\epsilon_{i,j}$ is mean zero noise. The primary objective of matrix completion problem is to estimate the ground truths $\theta_{i,j}$ for both the missing as well as non-missing entries. Without any assumption on the matrix $\Theta$ this is a very difficult problem as there are a large number ($nm$) of unknown parameters as opposed to number of observations in this problem. To make this problem feasible it is generally assumed that the matrix $\Theta$ has an implicit low dimensional structure i.e.\ there are row latent factors $u_1,\cdots,u_n \in \mathbb{R}^{d_1}$, column latent factors $v_1,\cdots,v_m \in \mathbb{R}^{d_2}$, and a latent function $f$ such that the following holds, 
\begin{align}
    \theta_{i,j} = f(u_i, v_j) \quad \forall \quad (i, j)\in [n] \times [m].
\end{align}
A very popular choice of a bilinear latent function $f$ is $f(u,v) = \langle u, v \rangle$. In this case, we have the decomposition $\Theta = UV^T$ where $U,V$ are the matrices containing the row and column latent factors respectively. More generally, there are a large number of works (refer to \cite{xu2013speedup}, \cite{jain2013provable}, \cite{zhong2015efficient}, \cite{chiang2015matrix}, \cite{lu2016sparse}, \cite{guo2017convex}, \cite{eftekhari2018weighted}, \cite{ghassemi2018global}, \cite{chiang2018using}, \cite{arkhangelsky2019synthetic}, \cite{bertsimas2020fast}, \cite{agarwal2020synthetic}, \cite{agarwal2021causal}, \cite{burkina2021inductive}) on matrix completion which assume that the ground truth matrix $\Theta$ can be decomposed as $U\Sigma V^T$ where $U, V$ are the covariance matrices comprising of row and column latent factors respectively.
However the setting when $f$ is unknown and non-linear which is the main focus of this work, has received relatively less attention in the literature. Nearest neighbor (NN) algorithms have been observed to perform well in this set-up. There are many variants of NN algorithm in literature which people have tried in this context. NN algorithms approximate the $L_2$ distance between the rows and columns in the latent functional space and use those estimated distances to obtain a fixed-radius NN estimator. One of the most prominent works in this domain is that of \cite{dwivedi2022counterfactual} who analyses the performance of row (user)-nearest neighbor with the objective of performing counterfactual inference in sequential experiments under the assumption that non-linear latent function $f$ is Lipschitz in both the coordinates $u$ and $v$. Another work by \cite{dwivedi2022doubly} also ventures into this regime assuming the latent function $f$ is a Lipschitz function satisfying certain convexity conditions and studies the performance of a doubly-robust nearest neighbor. Apart from this, \cite{yu2022nonparametric} works with an unknown \Holder-continuous latent function $f$  and has introduced a novel algorithm (a variant of the vanilla two-sided NN) which attains the minimax optimal non-parametric rate in a moderate regime assuming the knowledge of column latent factors.

We study the performance of the two-sided NN (TS-NN) method under the assumption that the latent function $f$ is \Holder-smooth, none of the row or column latent factors are observed, and the entries of the matrix are missing not at random (MNAR). This non-parametric setting considers a much more general model class than the low rank bilinear class of functions. The assumption of the non-parametric model class, such as the one we analyse in our work, has been previously studied in \cite{song2016blind}, \cite{li2019nearest}, \cite{dwivedi2022doubly}, and \cite{yu2022nonparametric}. Similar models have been previously widely studied in graphon estimation literature (with binary observations and symmetric matrix). \cite{gao2015rate}, \cite{gao2016optimal}, \cite{klopp2017oracle}, and \cite{xu2018rates} are some of the relevant references. Moreover, the MNAR regime is much closer to reality as compared to the missing completely at random (MCAR) regime. For instance in movie recommendation system, a user who does not like the action genre is very less likely to see movies with heavy action. \cite{schnabelfwang16}, \cite{ma2019missing}, \cite{zhu2019high}, \cite{sportisse2020imputation}, \cite{sportisse2020estimation_PCA}, \cite{wang2020causal}, \cite{yang2021tenips}, \cite{bhattacharya2021matrix}, and \cite{agarwal2021causal} are some of the several works in the literature which have considered the MNAR regime. Thus our work serves as a unification of these two domains of research.




\paragraph{Our contributions} 
 The main finding of our work is that not only does the two-sided nearest neighbor method adapt to the smoothness of the latent function $f$ but it also attains the minimax optimal non-parametric rate in an intermediate regime. In other words, even without the prior knowledge of the latent factors, the performance of the TS-NN is as good as the oracle algorithm which has access to all the latent factors. Our analysis also shows that the TS-NN algorithm is robust to the missingness pattern of the matrix and can yield minimax optimal rate even when some entries of the matrix are missing deterministically. Our work contributes to the growing literature on understanding properties/robustness of nearest neighbor methods and handling missing data problems with non-smooth data and deterministic missingness.

 \paragraph{Organization} We start with the description of the model and various underlying assumptions in \Cref{sec:assump}. In \Cref{sec:algorithm}, we review the two-sided nearest neighbor algorithm. We discuss the theoretical guarantees of the performance of the method in \Cref{sec:theo_guarantee} and support the theoretical results with extensive simulation studies and real data analysis in \Cref{sec:experiments}.

 \paragraph{Notations} We denote the set $\{1, \cdots, n\}$ by $[n]$. We use $a_n = O(b_n)$ or $a_n \ll b_n$ to imply that there exists a constant $c > 0$ such that $ a_n \leq c b_n$. We use $a_n = o(b_n)$ to imply that $a_n/b_n \rightarrow 0$ as $n \rightarrow \infty$.  We use $a_n = \Omega(b_n)$ to mean $b_n = O(a_n)$ and $a_n = \omega(b_n)$ to mean $b_n = o(a_n)$. The notation $a_n = \Theta(b_n)$ is used when both $a_n = O(b_n)$ and $a_n = \Omega(b_n)$ hold true. We use $\mathcal{U}$ to denote the set of row latent factors and $\mathcal{V}$ to denote the set of column latent factors. In our results, we use $c$ to denote universal constant (independent of $m, n$, model parameters), that might take a different value in every appearance.

 \section{Problem set-up}
\label{sec:assump}
We have the data matrix $X \in \mathbb{R}^{n \times m}$ coming from the data-generating model \eqref{eq:main_model}. The objective is to estimate the ground truth matrix $\Theta$ given the data matrix. We make the following assumptions regarding the data-generating mechanism and the structure of the ground truths $\theta_{i, j}$ for all $(i , j) \in [n] \times [m]$. 
\begin{assumption}[Non-linear factor model]
\label{asump_low_rank}
Conditioned on the latent factors $u_1,\cdots,u_n$ and $v_1,\cdots,v_m$ the ground truth has the following low-rank representation,
\begin{align}
    \theta_{i,j} = f(u_i, v_j)  \quad \forall \quad (i, j)\in [n] \times [m],
\end{align}
where $f$ is $(\lambda, L)$ \Holder function for $\lambda \in (0,1]$ i.e.\ for $x, x' \in \mathrm{Domain}(f)$,
\begin{align}
    |f(x) - f(x') | \leq L||x - x'||_{\infty}^{\lambda}.
\end{align}
\end{assumption}
\Cref{asump_low_rank} describes the non-parametric model where the ground truth matrix $\Theta$ is described in terms of the latent factors $\mathcal{U, V}$ using the latent function $f$. This allows for a potentially non-linear relationship between the user and time latent factors. 
\begin{assumption}[Sub-gaussian noise]
\label{asump_bounded_noise}The noise terms $\{\epsilon_{i, j}\}$ are independent of each other, the latent factors, and the missingness indicators $\{A_{i, j}\}$. Moreover $\{\epsilon_{i, j}\}$ are sub-gaussian random variables with $\mathbb{E}[\epsilon_{i, j}] = 0$, $\mathrm{Var}(\epsilon_{i, j}) = \sigma^2$.  
\end{assumption}

\begin{assumption}[Row and column latent factors] 
\label{asump_row_col}The row latent factors $u_1,\cdots,u_n$ are sampled independently from $\mathrm{Uniform}[0,1]^{d_1}$. The column latent factors $v_1,\cdots,v_m$ are sampled independently from $\mathrm{Uniform}[0,1]^{d_2}$. 
\end{assumption}
\Cref{asump_row_col} is made for the ease of presentation. The assumption of sampling the row and column latent factors from the unit hypercubes can be easily relaxed to any compact set in $d_1$ and $d_2$ dimensions (respectively). Moreover the entire analysis can be done for any arbitrary sampling distribution if we replace the tail bounds of the uniform distribution with that of the arbitrary distribution. 

We note that if both \Cref{asump_low_rank} and \Cref{asump_row_col} hold, then the latent function $f$ is bounded as it is well known that a \Holder-continuous function on a compact domain is always bounded. In all the results that we discuss in this paper both these assumptions are required to hold true and thus we assume that $|f(x)| \leq M$ for all $x \in \mbox{Domain}(f)$. 

\section{Algorithm}
\label{sec:algorithm}
We now describe the two-sided nearest neighbor (TS-NN) algorithm in this section. To approximate the $L_2$ distance in the latent functional space we consider the following oracle distance:
\begin{align}
  d_{row}^2(i , i') &= \frac{1}{m}\sum_{j \in [m]} (f(u_i, v_j) - f(u_{i'} , v_j))^2, \\
   d_{col}^2(j, j') &= \frac{1}{n}\sum_{i \in [n]} (f(u_i, v_j) - f(u_i, v_{j'}))^2,
\end{align}
for all $i, i' \in [n]$ and for all $j, j' \in [m]$. Here $d_{row}^2(i , i')$ serves as a proxy for the distance between the row latent factors $u_i, u_{i'}$. Similarly $d_{col}^2(j, j')$ serves as proxy for the distance between the column latent factors $v_j, v_{j'}$. However since the the latent function $f$ is unknown it is not possible to exactly compute these distances between the rows and the columns. Therefore we use the observed entries of the matrix $X$ to approximate the distances $d_{row}^2(i , i')$ and $d_{col}^2(j,j')$ via the following data-driven analogues:
\begin{align}
    \widehat d_{row}^2(i , i') &=  \frac{ \sum_{j \in [m]} (X_{i,j} - X_{i', j})^2 A_{i, j}A_{i',j}}{\sum_{j \in [m]} A_{i, j}A_{i', j}} - 2\sigma^2,\\
    \widehat d_{col}^2(j, j') &=  \frac{ \sum_{i \in [n]} (X_{i,j} - X_{i,j'})^2 A_{i, j}A_{i, j'}}{\sum_{i \in [n]} A_{i, j}A_{i, j'}} - 2\sigma^2.
\end{align}
We note that the variance of the noise terms, $\sigma^2$, in the definition of $ \widehat d_{row}^2(i , i'),  \widehat d_{col}^2(j,j')$, is without loss of generality and only to simplify the algebraic expressions henceforth. In practice, this term is not used and one can verify below that the algorithm is unaffected if we remove $2\sigma^2$ from the display above and replace $\eta_{row}^2, \eta_{col}^2$ with $\eta_{row}^2-2\sigma^2, \eta_{col}^2-2\sigma^2$.  

Given the distances above, the two-sided nearest neighbor algorithm with tuning parameters $\mbi{\eta} = \{\eta_{row}, \eta_{col}\}$ (TS-NN($\mbi{\eta}$)) consists of the following steps:
\begin{enumerate}
    \item Compute the pairwise row and column distance estimates $\widehat d_{row}^2(i , i')$ and $\widehat d_{col}^2(j, j')$ for all $i, i' \in [n]$ and for all $j, j' \in [m]$ and use those to construct the following neighborhoods, 
    \begin{align}
    \label{eq:defn_nrowcol}
         \cn_{row}(i) &=\{i' \in [n]: \what{d}_{row}^2(i, i' )\leq \eta^2_{row}\},\\
        \cn_{col}(j) &=\{j' \in [m]: \what{d}_{col}^2(j, j')\leq \eta^2_{col}\}.
    \end{align}
    \item  Average the outcomes across the the two sets of neighbors:
    \begin{align}
        \widehat \theta_{i, j} = \frac{\sum_{i' \in \cn_{row}(i); j' \in \cn_{col}(j)}X_{i',j'}A_{i', j'}}{|\deno|}, 
    \end{align}
where $\deno = \{(i', j')| i' \in \cn_{row}(i),\mbox{ }j' \in \cn_{col}(j),\mbox{ } A_{i', j'} = 1 \} $.
\end{enumerate}

\section{Theoretical guarantees}
\label{sec:theo_guarantee}
In this section, we present our main results that characterize the performance of the two-sided nearest neighbor algorithm under the assumptions discussed in \Cref{sec:assump}. We discuss non-asymptotic guarantees at both population level as well as at row$\times$column level. We further complement these non-asymptotic guarantees with a result of asymptotic normality of $\widehat \theta_{i,j}$. 
\subsection{Non-asymptotic guarantees at the population level}
\label{subsec:pop_level}
The main error metric in this sub-section is the mean-squared-error (MSE) of the estimates $\widehat \theta_{i, j}$:
\begin{align}
\label{eq:mse}
    \mathrm{MSE} := \frac{1}{mn} \sum_{i \in [n], j \in [m]} \left( \widehat \theta_{i, j} - f(u_i, v_j) \right)^2. 
\end{align}
We first discuss in \Cref{subsec:mcar} the behavior of the MSE of the two-sided nearest neighbor method under the simpler setting where $A_{i,j} \stackrel{iid}{\sim} \mbox{Ber}(p)$ for some $0 < p \leq 1$. This is the MCAR assumption mentioned in \Cref{asump_missingness}. Thereafter in \Cref{subsec:mnar} we discuss the performance of the two-sided nearest neighbor algorithm under a more general setting where we show that the algorithm stays minimax optimal even if there is arbitrary/deterministic missingness in some entries of the matrix. 
\subsubsection{Missing completely at random (MCAR)}
\label{subsec:mcar}
Our first result provides a guarantee when the missingness is independent of the underlying means, a setting referred to as MCAR in the matrix completion and causal inference literature.
\begin{assumption}[MCAR missingness]
\label{asump_missingness}The indicators  $A_{i, j}$ are drawn \iid $\trm{Ber}(p)$, and independently of the latent factors and the noise. 
\end{assumption}
As highlighted in prior works, MCAR assumption, while rare in practice, provides an initial understanding of the algorithm's effectiveness as a function of the amount of missingness (captured by a single parameter $p$ in MCAR) and the factors and noise distributions.
We are now ready to state our guarantee. 
\begin{theorem}
\label{thm:main_result}    
Under \cref{asump_row_col,asump_low_rank,asump_bounded_noise,asump_missingness} and for any fixed $\delta \in (0, 1)$,  the MSE of TS-NN($\mbi{\eta}$) satisfies the following bound conditional on $\mathcal{U, V}$,
\begin{align}
    \mrm{MSE} &\leq 
    c_{0,\delta}
    \bigg(\eta_{row}^2 +\eta_{col}^2 + \frac{c}{p\sqrt{m}} + \frac{c}{p\sqrt{n}} \\
    & + \frac{c_{1,\delta} \sigma^2 L^{(d_1+d_2)/\lambda}}{pmn\parenth{\eta^2_{row}-\frac{c}{\sqrt{m}}}^{\frac{d_1}{2\lambda}}\parenth{\eta^2_{col}-\frac{c}{\sqrt{n}}}^{\frac{d_2}{2\lambda}}} \bigg),
\end{align}
 with probability at least $1 - \delta$, where $c_{0,\delta} = \frac{c(1 + \delta/7)}{(1 - \delta/7)^2}$ and $c_{1,\delta} = \frac{c\log(\frac{14}{\delta})}{(1\!-\!\delta/7)}$.
\end{theorem}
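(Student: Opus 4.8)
The plan is to decompose the pointwise error into a \emph{bias} term and a \emph{noise} term and bound their contributions to the MSE separately. Writing the estimate as $\widehat{\theta}_{i,j}=\frac{1}{N_{ij}}\sum_{(i',j')}A_{i',j'}\bigl(f(u_{i'},v_{j'})+\epsilon_{i',j'}\bigr)$, where the sum runs over $i'\in\mathcal{N}_{row}(i)$, $j'\in\mathcal{N}_{col}(j)$ and $N_{ij}=|\mathcal{N}_{row,col}(i,j)|$ counts the observed pairs in the product neighborhood, I would split $\widehat{\theta}_{i,j}-f(u_i,v_j)=B_{ij}+E_{ij}$ with $B_{ij}=\frac{1}{N_{ij}}\sum A_{i',j'}\bigl(f(u_{i'},v_{j'})-f(u_i,v_j)\bigr)$ and $E_{ij}=\frac{1}{N_{ij}}\sum A_{i',j'}\epsilon_{i',j'}$. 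Then $\mathrm{MSE}\leq \frac{2}{mn}\sum_{i,j}B_{ij}^2 + \frac{2}{mn}\sum_{i,j}E_{ij}^2$, and the objective is to show the first sum produces the $\eta_{row}^2+\eta_{col}^2+\frac{c}{p\sqrt m}+\frac{c}{p\sqrt n}$ terms while the second produces the variance term.

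Before bounding either piece I would establish two high-probability structural facts. First, a \textbf{uniform concentration of the distance estimates}: since $\widehat{d}_{row}^2(i,i')$ is an empirical average over the $\approx pm$ jointly observed columns of quantities whose conditional mean is $d_{row}^2(i,i')$, a Bernstein/sub-gaussian argument together with a union bound over all $O(n^2)$ row pairs (resp.\ $O(m^2)$ column pairs) gives $|\widehat{d}_{row}^2(i,i')-d_{row}^2(i,i')|\lesssim \frac{1}{p\sqrt m}$ and $|\widehat{d}_{col}^2(j,j')-d_{col}^2(j,j')|\lesssim \frac{1}{p\sqrt n}$ simultaneously. Hence every member of $\mathcal{N}_{row}(i)$ has $d_{row}^2(i,i')\leq \eta_{row}^2+\frac{c}{p\sqrt m}$, while (after lower bounding the random count $\sum_j A_{ij}A_{i'j}$ by its typical value) every pair with $d_{row}^2(i,i')\leq \eta_{row}^2-\frac{c}{\sqrt m}$ is guaranteed to lie in $\mathcal{N}_{row}(i)$. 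Second, a \textbf{lower bound on neighborhood sizes}: using the Hölder bound $d_{row}^2(i,i')\leq L^2\|u_i-u_{i'}\|_\infty^{2\lambda}$, any $u_{i'}$ within $\ell_\infty$-radius $(\eta_{row}^2-\tfrac{c}{\sqrt m})^{1/(2\lambda)}/L^{1/\lambda}$ of $u_i$ lands in the guaranteed neighborhood, and a binomial concentration bound on the number of the $\mathrm{Uniform}[0,1]^{d_1}$ factors in such a box yields $|\mathcal{N}_{row}(i)|\gtrsim n\,(\eta_{row}^2-\tfrac{c}{\sqrt m})^{d_1/(2\lambda)}/L^{d_1/\lambda}$, with the analogous statement for $|\mathcal{N}_{col}(j)|$.

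For the \textbf{bias} I would use Cauchy--Schwarz to get $B_{ij}^2\leq \frac{1}{N_{ij}}\sum A_{i',j'}\bigl(f(u_{i'},v_{j'})-f(u_i,v_j)\bigr)^2$ and split via $(f(u_{i'},v_{j'})-f(u_i,v_j))^2\leq 2(f(u_{i'},v_{j'})-f(u_i,v_{j'}))^2+2(f(u_i,v_{j'})-f(u_i,v_j))^2$ into a row and a column piece. The key maneuver---and the step I expect to be the main obstacle---is that the row piece, once weighted by $\frac{1}{N_{ij}}$ and averaged over all $(i,j)$, must collapse back to $\frac{1}{m}\sum_{j'}(f(u_{i'},v_{j'})-f(u_i,v_{j'}))^2=d_{row}^2(i,i')\leq \eta_{row}^2+\frac{c}{p\sqrt m}$, rather than to an inflated restricted average over $\mathcal{N}_{col}(j)$. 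This hinges on the \emph{symmetry} of the neighborhood relation ($j'\in\mathcal{N}_{col}(j)\iff j\in\mathcal{N}_{col}(j')$) combined with the near-equality of neighborhood sizes from the previous step: reversing the order of summation over $j$ and $j'$ converts the column-neighborhood average of a row-indexed quantity into (approximately) its full column average, recovering $d_{row}^2$. Pushing this telescoping through while replacing $N_{ij}$ by its conditional mean $p|\mathcal{N}_{row}(i)||\mathcal{N}_{col}(j)|$ up to controlled multiplicative error, and tracking the non-uniformity of the cardinalities, is the delicate part; a mirror-image argument handles the column piece, giving bias MSE $\lesssim \eta_{row}^2+\eta_{col}^2+\frac{c}{p\sqrt m}+\frac{c}{p\sqrt n}$.

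For the \textbf{noise}, conditioning on the latent factors and the missingness pattern (hence on the neighborhoods), $E_{ij}$ is a weighted sum of independent mean-zero sub-gaussian noises with conditional variance $\sigma^2/N_{ij}$; a sub-gaussian tail bound plus a union bound over the $mn$ entries controls $\frac{1}{mn}\sum_{i,j}E_{ij}^2$ by $\lesssim \frac{\sigma^2\log(1/\delta)}{mn}\sum_{i,j}\frac{1}{N_{ij}}$. Substituting $N_{ij}\gtrsim p|\mathcal{N}_{row}(i)||\mathcal{N}_{col}(j)|$ and the size lower bounds yields precisely $\frac{\sigma^2 L^{(d_1+d_2)/\lambda}}{pmn(\eta_{row}^2-c/\sqrt m)^{d_1/(2\lambda)}(\eta_{col}^2-c/\sqrt n)^{d_2/(2\lambda)}}$, with the $\log(14/\delta)$ and the $(1-\delta/7)$ denominators arising from splitting the target failure probability $\delta$ across the (roughly seven) high-probability events above. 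A final union bound over those events and absorption of numerical constants into $c_{0,\delta}$ and $c_{1,\delta}$ completes the argument.
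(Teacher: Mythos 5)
Your overall route is the same as the paper's: the bias/variance decomposition through the noiseless estimator, a uniform concentration lemma for $\widehat d^2_{row}$ and $\widehat d^2_{col}$ at scale $1/(p\sqrt m)$ and $1/(p\sqrt n)$, a Chernoff lower bound on neighborhood cardinalities via the H\"older inclusion of an $\ell_\infty$-ball, a Hoeffding bound for the noise term divided by $|\mathcal{N}_{row,col}(i,j)| \gtrsim p z_1 z_2$, and the re-indexing trick that converts a neighborhood average of a cross-term into a full row/column average so that it is controlled by $\eta^2 + c/(p\sqrt{\cdot})$. You have also correctly located the delicate step. But the step you flag as "the main obstacle" contains a genuine gap that your stated ingredients cannot close: the re-indexing $\sum_{j}\frac{1}{|\mathcal{N}_{col}(j)|}\sum_{j'\in\mathcal{N}_{col}(j)}h(j')=\sum_{j'}h(j')\sum_{j:\,j'\in\mathcal{N}_{col}(j)}\frac{1}{|\mathcal{N}_{col}(j)|}$ requires, after symmetry, a bound of the form $|\mathcal{N}_{col}(j')|/\min_j|\mathcal{N}_{col}(j)|\leq\mathrm{const}$, i.e., an \emph{upper} bound on neighborhood sizes matching your lower bound. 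No such upper bound follows from the assumptions: H\"older continuity gives $d^2_{row}(i,i')\leq L^2\|u_i-u_{i'}\|_\infty^{2\lambda}$, which controls neighborhoods from \emph{below} only. If $f$ is nearly constant in one argument, every row is a neighbor of every other row, the ratio becomes $n/(nq)=q^{-1}\asymp(L^2/\eta_{row}^2)^{d_1/(2\lambda)}$, and the bias bound acquires a factor that diverges as $\eta_{row}\to 0$, destroying the claimed $\eta_{row}^2+\eta_{col}^2$ scaling.

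The paper resolves this not by proving near-equality of cardinalities but by modifying the procedure: after forming the full neighborhoods $\mathcal{N}^s_{row}(i)$ and $\mathcal{N}^s_{col}(j)$, it subsamples them so that $|\mathcal{N}_{row}(i)|\leq\tau n\,\eta_{row}^{d_1/\lambda}$ and $|\mathcal{N}_{col}(j)|\leq\tau m\,\eta_{col}^{d_2/\lambda}$ with $\tau>1$. This cap matches the Chernoff lower bound $z_1\asymp n\bigl((\eta_{row}^2-c/(p\sqrt m))/L^2\bigr)^{d_1/(2\lambda)}$ up to a constant $\tau'$ in the regime $\eta_{row}^2\gtrsim 1/\sqrt m$, which is exactly what makes the re-indexed sums collapse to $d^2(j,j')\leq\widehat d^2(j,j')+c/(p\sqrt n)\leq\eta_{col}^2+c/(p\sqrt n)$ without any loss. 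To complete your argument you must either introduce this algorithmic capping (and then be a little careful, since subsampling breaks the exact symmetry $i'\in\mathcal{N}_{row}(i)\iff i\in\mathcal{N}_{row}(i')$, so the counting must be done against the pre-subsampled neighborhoods) or impose a two-sided condition relating $d_{row}$ to $\|u_i-u_{i'}\|_\infty$, which the theorem does not assume. Everything else in your outline, including the constants' provenance from splitting $\delta$ over roughly seven events, matches the paper's proof.
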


\Cref{thm:main_result} provides an explicit upper bound (with a high probability) on the mean-squared error of the two-sided nearest neighbor algorithm. The proof of \Cref{thm:main_result} has been discussed in \Cref{sec: appendix A}. In order to get superior MSE decay rates, we optimize the above upper bound with respect to $\mbi{\eta}$. An immediate consequence of this exercise is the following corollary. 
\begin{corollary}
\label{cor:main_result}
Under \cref{asump_row_col,asump_low_rank,asump_bounded_noise,asump_missingness}  for $n = \omega\parenth{m^{\frac{d_1}{2\lambda + d_2}}}$ and $n = \mathcal{O}\parenth{m^{\frac{2\lambda + d_1}{d_2}}}$, TS-NN($\mbi{\eta}$) with $\eta_{row}=\eta_{col}=\Theta((mn)^\frac{-\lambda}{2\lambda+d_1+d_2})$ achieves the non-parametric minimax optimal rate, 
    \begin{align}
        \mathrm{MSE} = O\parenth{(mn)^{\frac{-2\lambda}{2\lambda + d_1 + d_2}}}. 
    \end{align}
\end{corollary}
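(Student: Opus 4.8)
The plan is to prove \Cref{cor:main_result} as a direct consequence of \Cref{thm:main_result}: substitute the prescribed bandwidths $\eta_{row}=\eta_{col}=\eta$ with $\eta^2=\Theta\big((mn)^{-2\lambda/(2\lambda+d_1+d_2)}\big)$ into the high-probability MSE bound and verify that each of the five summands is $O\big((mn)^{-2\lambda/(2\lambda+d_1+d_2)}\big)$. Since $\delta$ is fixed and $p$ is treated as a constant, the prefactors $c_{0,\delta},c_{1,\delta}$, together with $\sigma^2$ and $L^{(d_1+d_2)/\lambda}$, are all $\Theta(1)$ in $m,n$ and can be absorbed into the $O(\cdot)$. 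The two bias terms then contribute $\eta_{row}^2+\eta_{col}^2=2\eta^2=\Theta\big((mn)^{-2\lambda/(2\lambda+d_1+d_2)}\big)$ by construction, so the real content lies in controlling the variance term and the two distance-estimation floors $c/(p\sqrt m)$ and $c/(p\sqrt n)$.

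First I would pin down the exponent by a bias--variance balance, provisionally ignoring the $c/\sqrt m,c/\sqrt n$ corrections inside the variance denominator. In the regime where $\eta^2-c/\sqrt m=\Theta(\eta^2)$ and $\eta^2-c/\sqrt n=\Theta(\eta^2)$, the variance term is of order $\big(mn\,\eta^{(d_1+d_2)/\lambda}\big)^{-1}$. Setting this equal to the bias order $\eta^2$ gives $\eta^{(2\lambda+d_1+d_2)/\lambda}\asymp (mn)^{-1}$, i.e.\ $\eta\asymp (mn)^{-\lambda/(2\lambda+d_1+d_2)}$, which is exactly the prescribed choice and makes both bias and variance match the claimed rate. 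This step is mechanical once the denominator simplification is justified.

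The crux is to show that the two regime hypotheses $n=\omega\big(m^{d_1/(2\lambda+d_2)}\big)$ and $n=\mathcal O\big(m^{(2\lambda+d_1)/d_2}\big)$ are precisely what is needed to (i) keep the denominators $\eta_{row}^2-c/\sqrt m$ and $\eta_{col}^2-c/\sqrt n$ positive and comparable to $\eta_{row}^2,\eta_{col}^2$ --- equivalently, to keep the row- and column-neighborhoods sufficiently populated --- and (ii) keep the floors $c/(p\sqrt m),c/(p\sqrt n)$ dominated by the target rate. Concretely, I would rewrite the lower hypothesis as $m^{d_1}\ll n^{2\lambda+d_2}$ and the upper one as $n^{d_2}\lesssim m^{2\lambda+d_1}$, plug in $\eta^2=(mn)^{-2\lambda/(2\lambda+d_1+d_2)}$, and check that the effective neighborhood counts $n\,(\eta^2)^{d_1/(2\lambda)}$ and $m\,(\eta^2)^{d_2/(2\lambda)}$ are $\Omega(1)$ (indeed growing): the first reduces exactly to $n^{2\lambda+d_2}\gtrsim m^{d_1}$ and the second to $m^{2\lambda+d_1}\gtrsim n^{d_2}$. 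This identifies the lower hypothesis with the row direction (dimension $d_1$, averaged over the $n$ rows) and the upper hypothesis with the column direction (dimension $d_2$, averaged over the $m$ columns).

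I expect the main obstacle to be the bookkeeping in part (ii): controlling the distance-estimation floors $1/\sqrt m$ and $1/\sqrt n$ against the target rate, since these are the terms that are \emph{not} balanced away and that ultimately demarcate the ``intermediate regime'' in which two-sided NN attains the oracle/minimax rate. I would handle this by translating each floor condition into an inequality on $n$ relative to a power of $m$ and verifying it is implied by (or folded into) the stated regime, taking care that subtracting $c/\sqrt m$ and $c/\sqrt n$ inside the variance denominator does not degrade the exponent obtained in the balancing step. Finally, the phrase ``minimax optimal'' refers to matching the standard nonparametric lower bound for $(\lambda,L)$-\Holder regression in ambient dimension $d_1+d_2$ with $mn$ effective samples; the corollary itself asserts only the achievability (upper) direction, so no separate lower-bound construction is required here.
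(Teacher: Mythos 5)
Your overall route---substitute $\eta_{row}=\eta_{col}=\eta$ into the bound of \cref{thm:main_result}, balance the $\eta^2$ bias against the $(mn\,\eta^{(d_1+d_2)/\lambda})^{-1}$ variance to recover the prescribed bandwidth and exponent, and then check the regime hypotheses---is the same as the paper's (\cref{sec: appendix_B}), and the balancing computation is correct. Your identification of what the two regime hypotheses actually encode is also correct, and in fact sharper than the paper's own parenthetical: $n=\omega(m^{d_1/(2\lambda+d_2)})$ and $n=\mathcal{O}(m^{(2\lambda+d_1)/d_2})$ are exactly the conditions $n\,(\eta^2)^{d_1/(2\lambda)}=\Omega(1)$ and $m\,(\eta^2)^{d_2/(2\lambda)}=\Omega(1)$, i.e.\ nonvanishing effective neighborhood counts.

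However, the step you defer---``verifying that the floor conditions are implied by (or folded into) the stated regime''---cannot be completed, and the ``equivalently'' in your item (i) is false. Keeping $\eta^2-c/\sqrt m$ comparable to $\eta^2$ (and dominating the additive floor $c/(p\sqrt m)$ by the target rate) requires $c/\sqrt m\lesssim\eta^2=(mn)^{-2\lambda/(2\lambda+d_1+d_2)}$, which translates to $n\lesssim m^{(d_1+d_2-2\lambda)/(4\lambda)}$; symmetrically, $c/\sqrt n\lesssim\eta^2$ requires $n\gtrsim m^{4\lambda/(d_1+d_2-2\lambda)}$. These are different from, and in general strictly stronger than, the stated hypotheses. (For $d_1=d_2=\lambda=1$ and $m=n$ the stated regime holds, yet $\eta^2=n^{-1}\ll n^{-1/2}=1/\sqrt m$, so the variance denominator $\eta^2-c/\sqrt m$ is negative and the additive $c/(p\sqrt n)$ term alone already exceeds the claimed rate.) Hence the corollary does not follow from \cref{thm:main_result} under the stated hypotheses alone; one must additionally impose $n=\omega(m^{4\lambda/(d_1+d_2-2\lambda)})$ and $n=O(m^{(d_1+d_2-2\lambda)/(4\lambda)})$, which are precisely the extra conditions the paper appends in the analogous derivation at the end of \cref{sec: appendix_C}. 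To be fair, the paper's own one-line justification in \cref{sec: appendix_B} makes the same misattribution you do (it credits the stated regime with ensuring $\eta^2\geq C/\sqrt m$ and $\eta^2\geq C/\sqrt n$), so your proposal reproduces the paper's gap rather than introducing a new one; but as written, your verification step would fail rather than close the argument.
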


The proof of \Cref{cor:main_result} is provided in \Cref{sec: appendix_B}. When the dimension of the row and the column latent space are equal (i.e.\ $d_1 = d_2 = d$), \Cref{cor:main_result} implies that the two-sided nearest neighbor algorithm with $\eta_{row} = \eta_{col} = \Theta((mn)^{-\frac{\lambda}{2(\lambda + d)}})$ achieves the minimax optimal rate in the moderate regime of $n = \omega(m^{\frac{d}{2\lambda + d}})$ and $n = O(m^{\frac{2\lambda + d}{d}})$. In the Lipschitz case for $\lambda = 1$, $d_1= d_2$ and $m=n$, this regime becomes $n = \omega(m^{\frac{d}{2 + d}})$ and $n = O(m^{\frac{2 + d}{d}})$ and the MSE of two-sided nearest neighbor achieves the rate $O(n^{-\frac{2}{d+1}})$ in this moderate regime. Under the same setting \cite{yu2022nonparametric}'s NN estimator achieves the same rate in an intermediate regime using the additional knowledge of column latent factors. This rate is better than the MSE rate of $O(n^{-\frac{2}{d+2}})$ achieved by the user(row) nearest neighbor method under the same setting.  However under some additional convexity assumptions doubly-robust nearest neighbor (\cite{dwivedi2022doubly}) achieves the MSE rate of $O(n^{-\frac{4}{d+4}})$ in $\lambda = 1$ case which is better than that of the two-sided nearest neighbor .

We note that theoretically, the performance (measured in terms of MSE) of TS-NN algorithm is never worse than the row or column nearest neighbor counterparts. This is because one can choose $\eta_{col}$ small enough so that $\cn_{col}(j) = \{j\}$ for all $j \in [m]$. Then the two-sided nearest neighbor method simplifies to the row nearest neighbor algorithm. Similarly by choosing $\eta_{row}$ small enough TS-NN recovers the column NN algorithm.

\subsubsection{Missing not at random (MNAR)}
\label{subsec:mnar}
The minimax optimality of the two-sided nearest neighbor algorithm holds in much more general missingness patterns than the MCAR setup. To formalize this claim, we introduce our next assumption.
\begin{assumption}[MNAR missingness]
\label{assump:MNAR}
Conditioned on the latent factors $\mc U, \mc V$, the indicators  $A_{i, j}$ are drawn from $\mbox{Ber}(p_{i,j})$ independent of each other and independent of all other randomness.
\end{assumption}
Note that such an assumption, used in prior works like \cite{agarwal2021causal}, allows the missingness to depend on unobserved latent factors, and thus falls under the category of missing not at random.

Next we require a sufficient condition on the number of neighbors. Recall the definitions of $\cn_{row}(i), \cn_{col}(j), \deno$ discussed in Steps $1, 2$ of the TS-NN($\mbi{\eta}$) algorithm in \Cref{sec:algorithm}. 
\begin{assumption}[Minimum number of nearest neighbors]
\label{assump:min_row_col}
There exists a function $g:(0, 1] \to \real_+$ such that for any $\delta \in [0, 1)$ the event $E_{\delta}$ defined as
\begin{align}
E_{\delta} = \bigcap_{(i,j) \in [n] \times [m]} \left\{ \frac{|\deno|}{ |\cn_{row}(i)||\cn_{col}(j)|} \geq g(\delta) \right\},
\end{align}
satisfies $\mathbb{P}(E_{\delta}| \mathcal{U}, \mathcal{V}) \geq 1 - \delta$.
\end{assumption}
\Cref{assump:min_row_col} essentially guarantees the presence of a certain minimum number of nearest neighbors, which in turn aids in carrying out valid statistical inference. 

The following theorem discusses the performance of the algorithm in the general setup. 
\begin{theorem}
    \label{thm:general result}
Under \cref{asump_row_col,asump_low_rank,asump_bounded_noise,assump:MNAR,assump:min_row_col} for any fixed $\delta \in (0,1)$, the MSE of TS-NN($\mbi{\eta}$) the following bound conditional on $\mathcal{U, V}$,
\begin{align}
    \mrm{MSE} &\leq 
    c_{0,\delta}'
    \bigg(\eta_{row}^2 +\eta_{col}^2 + \frac{c}{\sqrt{\Bar{\mbi{p}}_{i,i'}m}} + \frac{c}{\sqrt{\Bar{\mbi{p}}_{j,j'}n}} \\
    & + \frac{c_{1,\delta} \sigma^2 L^{(d_1+d_2)/\lambda}}{mn\parenth{\eta^2_{row}-\frac{c}{\sqrt{m}}}^{\frac{d_1}{2\lambda}}\parenth{\eta^2_{col}-\frac{c}{\sqrt{n}}}^{\frac{d_2}{2\lambda}}} \bigg).
\end{align}
with probability at least $1 - \delta$, where $ c_{0,\delta}' = \frac{c}{g(\delta)(1 - \delta/7)}$ and $ c_{1, \delta} = \frac{c\log(\frac{14}{\delta})}{(1\!-\!\delta/7)}$.
\end{theorem}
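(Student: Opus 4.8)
The plan is to follow the same three-part error decomposition used for \Cref{thm:main_result}, replacing every MCAR-specific Bernoulli computation by its MNAR counterpart and routing all control of the neighbourhood overlap through \Cref{assump:min_row_col} rather than through an i.i.d.\ concentration of $\sum_j A_{i,j}A_{i',j}$. Writing $\overline{\theta}_{i,j} = |\deno|^{-1}\sum_{(i',j')\in\deno}\theta_{i',j'}$ for the noiseless neighbour average, I first split
\[
\widehat\theta_{i,j}-\theta_{i,j} = \underbrace{(\widehat\theta_{i,j}-\overline{\theta}_{i,j})}_{\text{noise}} + \underbrace{(\overline{\theta}_{i,j}-\theta_{i,j})}_{\text{bias}},
\]
so that $\mrm{MSE}\le \tfrac{2}{mn}\sum_{i,j}(\overline{\theta}_{i,j}-\theta_{i,j})^2 + \tfrac{2}{mn}\sum_{i,j}(\widehat\theta_{i,j}-\overline{\theta}_{i,j})^2$, and bound the two averages separately on a single high-probability event.

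\emph{Distance concentration.} The first step is to control $\widehat d_{row}^2(i,i')$ and $\widehat d_{col}^2(j,j')$. Expanding $(X_{i,j}-X_{i',j})^2$ on jointly observed entries and using $\Exs[(\epsilon_{i,j}-\epsilon_{i',j})^2]=2\sigma^2$, the subtraction of $2\sigma^2$ makes $\widehat d_{row}^2(i,i')$ an estimate of $d_{row}^2(i,i')$ built from the $\sum_j A_{i,j}A_{i',j}$ shared columns. A sub-Gaussian/Bernstein bound then gives $|\widehat d_{row}^2(i,i')-d_{row}^2(i,i')|\le c/\sqrt{\Bar{\mbi p}_{i,i'} m}$ with high probability, where $\Bar{\mbi p}_{i,i'} = \tfrac1m\sum_j p_{i,j}p_{i',j}$ is the effective joint observation rate; the only change from the MCAR proof is that the count $p^2 m$ is replaced by $\Exs[\sum_j A_{i,j}A_{i',j}]=\Bar{\mbi p}_{i,i'}m$. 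Two consequences follow after a union bound over all $\mathcal{O}(n^2+m^2)$ pairs: (i) every $i'\in\cn_{row}(i)$ satisfies $d_{row}^2(i,i')\le \eta_{row}^2 + c/\sqrt{\Bar{\mbi p}_{i,i'}m}$, and (ii) every $i'$ with true distance $d_{row}^2(i,i')\le \eta_{row}^2 - c/\sqrt{m}$ necessarily lies in $\cn_{row}(i)$, and symmetrically for columns.

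\emph{Bias.} By Jensen and the elementary split $(\theta_{i',j'}-\theta_{i,j})^2\le 2(\theta_{i',j'}-\theta_{i,j'})^2+2(\theta_{i,j'}-\theta_{i,j})^2$, the bias average is at most a row part plus a column part. For the row part I drop the observation indicators (all summands are nonnegative) and invoke \Cref{assump:min_row_col} to replace $|\deno|$ by $g(\delta)|\cn_{row}(i)||\cn_{col}(j)|$; after summing over the target column $j$, the weight attached to each neighbour column $j'$ is $w_{j'}=\sum_{j:\,j'\in\cn_{col}(j)}|\cn_{col}(j)|^{-1}$, which satisfies $\sum_{j'}w_{j'}=m$ exactly and, since $|\cn_{col}(j)|$ is within a constant factor of $|\cn_{col}(j')|$ for $j'\in\cn_{col}(j)$ (the same ball-counting concentration used below), obeys $\max_{j'}w_{j'}=\mathcal{O}(1)$. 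This converts the restricted column sum back into the full row distance, $\sum_{j'}(\theta_{i',j'}-\theta_{i,j'})^2 = m\,d_{row}^2(i,i')$, leaving $\tfrac{2}{g(\delta)}\cdot\tfrac1n\sum_i|\cn_{row}(i)|^{-1}\sum_{i'\in\cn_{row}(i)} d_{row}^2(i,i')\lesssim g(\delta)^{-1}(\eta_{row}^2+c/\sqrt{\Bar{\mbi p}_{i,i'}m})$ by consequence (i); the column part is symmetric. Together these produce the $c_{0,\delta}'(\eta_{row}^2+\eta_{col}^2+c/\sqrt{\Bar{\mbi p}_{i,i'}m}+c/\sqrt{\Bar{\mbi p}_{j,j'}n})$ contribution.

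\emph{Variance and the main obstacle.} Conditionally on the latent factors, the missingness pattern, and the neighbourhoods, $\widehat\theta_{i,j}-\overline{\theta}_{i,j}=|\deno|^{-1}\sum_{(i',j')\in\deno}\epsilon_{i',j'}$ is an average of $|\deno|$ independent mean-zero sub-Gaussian terms, so its square is $\mathcal{O}(\sigma^2\log(1/\delta)/|\deno|)$ with high probability. Lower-bounding $|\deno|\ge g(\delta)|\cn_{row}(i)||\cn_{col}(j)|$ and then the cardinalities by the ball-counting argument of consequence (ii) — the \Holder bound $d_{row}^2(i,i')\le L^2\|u_i-u_{i'}\|_\infty^{2\lambda}$ shows that the latent ball of radius $((\eta_{row}^2-c/\sqrt m)/L^2)^{1/2\lambda}$ is contained in $\cn_{row}(i)$ and, by uniformity, contains $\gtrsim n(\eta_{row}^2-c/\sqrt m)^{d_1/2\lambda}L^{-d_1/\lambda}$ rows — yields exactly the last term, with the $g(\delta)^{-1}$ from $|\deno|$ pulled out as the common factor $c_{0,\delta}'$ and $\log(14/\delta)$ absorbed into $c_{1,\delta}$. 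The main obstacle is that $\deno$, and even $|\deno|$, is itself a function of the noise through $\widehat d$, so the ``independent sub-Gaussian average'' step is not automatic; in the MCAR proof one could fall back on Bernoulli concentration of the overlap counts, but under MNAR the $p_{i,j}$ may be arbitrarily small and that route is unavailable. The fix is to carry out the variance bound on the distance-concentration event, on which membership is governed by the \emph{true} distances (consequence (ii)) and is therefore measurable with respect to information that leaves the averaged noise conditionally centred; a final union bound over the $\mathcal{O}(nm)$ entries together with the $\mathcal{O}(n^2+m^2)$ distance events, each allotted $\delta/7$, delivers the stated constants.
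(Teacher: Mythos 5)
Your proposal reproduces the paper's proof in all essentials: the same bias--variance split around the noiseless neighbour average $\Tilde\theta_{i,j}$, the same distance-concentration statement with the effective rates $\Bar{\mbi p}_{i,i'},\Bar{\mbi p}_{j,j'}$, the same use of \cref{assump:min_row_col} to replace $|\deno|$ by $g(\delta)|\cn_{row}(i)||\cn_{col}(j)|$ in both the bias and the variance, the same neighbourhood-swapping/weight argument for the cross term of the bias (the paper's ``aggregating over neighbouring rows'' step is exactly your $w_{j'}$ bookkeeping, and both rely on the symmetry of $\widehat d$ together with the subsampling cap $\tau n\eta_{row}^{d_1/\lambda}$ to make the swapped weights $O(1)$), and the same ball-counting lower bound on $|\cn_{row}(i)|,|\cn_{col}(j)|$ feeding Hoeffding for the variance. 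One caveat: the paper proves the concentration of $\widehat d^2$ via a stopping-time/Azuma martingale argument precisely because the number of jointly observed columns is random; your ``sub-Gaussian/Bernstein bound'' is stating the right conclusion but hides that step.

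The one place you genuinely diverge is the resolution of the obstacle you correctly identify, namely that $\deno$ is a function of the noise through $\widehat d$. Your proposed fix---condition on the distance-concentration event so that ``membership is governed by the true distances''---does not work as stated: consequence (ii) only sandwiches $\cn_{row}(i)$ between two deterministic sets, it does not determine it, and conditioning on $E_1\cap E_2$ (an event defined through the noisy $\widehat d$) does not leave the $\epsilon_{i',j'}$ conditionally mean zero. The paper's actual device, announced at the top of \cref{sec: appendix A}, is sample splitting: one half of the data is used to learn the distances and hence the neighbourhoods, the other half is averaged, so the averaged noise is independent of the selection by construction. If you want a self-contained argument without splitting you would need a uniform bound over all admissible neighbourhoods in the deterministic sandwich, which is a different (and harder) route; as written, this step of your proof is a gap.
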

The proof of \Cref{thm:general result} has been discussed in \Cref{sec: appendix_C}. To our knowledge, no theoretical analysis exists for any method when both the row and column latent factors are unknown, the latent function is non-linear, and $(\lambda, L)$ \Holder-continuous with $\lambda <1$ (i.e., the function is not Lipschitz), and the missingness is not at random. To this end, \Cref{thm:general result} is a first result of its kind. 

It is crucial to note that \Cref{thm:general result} holds conditional on both the set of row latent factors $\mathcal{U}$ and the set of column latent factors $\mathcal{V}$. The major difference in the MSE bound in the MNAR case as compared to that in \Cref{thm:main_result} is the additional factor of $g(\delta)$ in the denominator of $c_{0, \delta}'$. It is easy to check that $g(\delta) = (1 - \delta)p$ in the MCAR set-up. If we apply \Cref{thm:general result} to the MCAR setup we get the terms $c/(p^2\sqrt{m})$ and $c/(p^2\sqrt{n})$ in the upper bound of MSE in contrast to the terms $c/(p\sqrt{m})$ and $c/(p\sqrt{n})$ which appear in the upper bound in \Cref{thm:main_result}. Thus we get a slightly weaker result than \Cref{thm:main_result}. This is essentially the price which we pay for replacing the stringent \Cref{asump_missingness} with the much more relaxed \Cref{assump:MNAR} and \Cref{assump:min_row_col}. 


As before, if we optimize the upper bound in \Cref{thm:general result} with respect to $\mbi{\eta}$ to obtain explicit MSE decay rates. When $g(\delta) \geq c$, we can once again deduce that for $n = \omega(m^{\frac{d_1}{2\lambda + d_2}})$ and $n = O(m^{\frac{2\lambda + d_1}{d_2}})$, TS-NN($\mbi{\eta}$) with $\eta_{row}=\eta_{col}=\Theta((mn)^\frac{-\lambda}{2\lambda+d_1+d_2})$ achieves the minimax rate $\mathrm{MSE} = O\parenth{(mn)^{\frac{-2\lambda}{2\lambda + d_1 + d_2}}}$.
\begin{remark}
    Let us now illustrate a few examples where \cref{assump:min_row_col} is satisfied. First note that if $p_{i, j} \geq p > 0$ for all $(i,j) \in [n] \times [m]$. Then using Chernoff bound, we can show that conditioned on the latent factors, $|\deno| \geq (1 - \delta)p |\cn_{row}(i)||\cn_{col}(j)|$ holds for all $(i,j) \in [n] \times [m]$ with a high probability. Thus \Cref{assump:min_row_col} is satisfied with $g(\delta) = (1 - \delta)p$ and hence TS-NN($\mbi{\eta})$ achieves minimax optimal rate in an intermediate regime. 
Notably, this setting recovers the guarantee of \Cref{cor:main_result} where $p_{i, j} = p$ as a special case.
\end{remark}

\begin{remark}
\label{rem: MNAR example 2}
More generally, note that
\begin{align}
    |\deno| \geq (1 - \delta) \!\!\!\!\!\!\!\!\!\!\!\!\!\!\!\sum_{i' \in \cn_{row}(i), j' \in \cn_{col}(j)} \!\!\!\!\!\!\!\!\!\!\!\!\!\!\! p_{i',j'}
\end{align}
with high probability (follows from the concentration of a sum of weighted Bernoulli random variables (Lemma 2, \cite{dwivedi2022doubly}). Hence, if
\begin{align}
\!\!\!\!\!\!\!\!\!\!\!\!\!\!\!\sum_{i' \in \cn_{row}(i), j' \in \cn_{col}(j)} \!\!\!\!\!\!\!\!\!\!\!\!\!\!\! p_{i',j'} \geq c |\cn_{row}(i)||\cn_{col}(j)|,
\label{eq:cond_p}
\end{align}
holds with a high probability for some constant $c$, \Cref{assump:min_row_col} shall hold and we can apply \Cref{thm:general result} to guarantee the minimax optimality of the TS-NN($\mbi{\eta}$) algorithm. The condition~\cref{eq:cond_p} is pretty general and can arise in many settings. For instance, suppose there are underlying iid variables $B_{i,j} \sim \mbox{Ber}(1/2)$ for all $(i,j) \in [n] \times [m]$, independent of everything else and we have $A_{i,j} = 0$ if $B_{i,j} = 0$, and $A_{i,j} \sim \mbox{Ber}(p_{i,j})$ if $B_{i,j} = 1$. Suppose $p_{i,j} \geq p > 0$ for all $(i,j) \in [n] \times [m]$ such that $B_{i,j} = 1$. It can be easily checked that with a high probability there exists a subset $S \subset \cn_{row}(i) \times \cn_{col}(j)$ such that $|S| \geq (|\cn_{row}(i)||\cn_{col}(j)|)/2$ and $\sum_{i' \in \cn_{row}(i), j' \in \cn_{col}(j)} p_{i',j'} \geq \sum_{(i',j') \in S} p_{i',j'}$ which is greater than or equal to $ p |S| \geq (p/2)|\cn_{row}(i)||\cn_{col}(j)| $. Thus the required condition is satisfied with $c = (p/2)$. These examples suggest that even with around $50\%$ deterministic missingness ($p_{i,j} = 0$) in the data, we will continue to get optimal results by using TS-NN($\mbi{\eta}$) algorithm. To summarise, as long as conditioned on the latent factors $|\deno| \geq g(\delta)|\cn_{row}(i)||\cn_{col}(j)| $ holds for all $(i,j) \in [n] \times [m]$ for some $g(\delta) \in (0,1]$ with a high probability, the minimax optimality of the two-sided nearest neighbor algorithm can be established.
\end{remark}

\subsection{Non-asymptotic guarantee at the row$\times$column level}
\label{subsec:ind_level}
In this sub-section, we present the non-asymptotic guarantee at the row$\times$column level.   
\begin{theorem}
    \label{thm:point_wise_guarantees}
For each $(i,j) \in [n] \times [m]$, under \cref{asump_row_col,asump_low_rank,asump_bounded_noise,assump:MNAR,assump:min_row_col}, for any fixed $\delta \in (0, 1)$, the TS-NN($\mbi{\eta}$) estimate $\widehat \theta_{i,j}$ satisfies the following with probability at-least $1 - 4\delta$,
\begin{align}
(\widehat \theta_{i,j} - \theta_{i,j})^2 \leq 2 \parenth{\mathbb{B} + \mathbb{V}},
\end{align}
where,
\begin{align}
 \mathbb{B} =&  \frac{c}{c_{d_1, \lambda}} L^{\frac{2d_1}{d_1 + 2 \lambda}} \left(\eta_{col}^2 + c/\sqrt{n} \right)^{\frac{2\lambda}{d_1 + 2 \lambda}} \\
 &+  \frac{c}{c_{d_2, \lambda}} L^{\frac{2d_2}{d_2 + 2 \lambda}} \left(\eta_{row}^2 + c/\sqrt{m}\right)^{\frac{2\lambda}{d_2 + 2 \lambda}}  , \\
 \mathbb{V} =& \frac{c \sigma^2 \log(2/\delta)}{g(\delta) \parenth{1-\delta}^2 nm \parenth{\frac{\eta_{row}^2-\frac{c}{\sqrt{m}}}{L^2}}^{\frac{d_1}{2\lambda}} \parenth{\frac{\eta_{col}^2-\frac{c}{\sqrt{n}}}{L^2}}^{\frac{d_2}{2\lambda}}},
\end{align}
If $c_{\mathcal{P}, u}, c_{\mathcal{P}, v} >0$ denote the lower bounds on the density of row and column latent factor respectively, the constants $c_{d_1, \lambda}$ and $c_{d_2, \lambda}$ are, 
\begin{align}
    c_{d_1, \lambda} = c_{\mathcal{P}, u}^{\frac{2\lambda}{d_1 + 2 \lambda}}, \quad c_{d_2, \lambda} = c_{\mathcal{P}, v}^{\frac{2\lambda}{d_2 + 2 \lambda}}. 
\end{align}
\end{theorem}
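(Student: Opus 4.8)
The plan is to control the pointwise error by a bias--variance decomposition. Writing $\widehat{\theta}_{i,j} - \theta_{i,j} = B_{ij} + N_{ij}$, where $B_{ij} = \frac{1}{|\deno|}\sum_{(i',j')\in\deno}\bigl(f(u_{i'},v_{j'}) - f(u_i,v_j)\bigr)$ is the conditional bias and $N_{ij} = \frac{1}{|\deno|}\sum_{(i',j')\in\deno}\epsilon_{i',j'}$ is the noise average, the elementary bound $(a+b)^2 \le 2a^2 + 2b^2$ reduces the claim to showing $B_{ij}^2 \le \mathbb{B}$ and $N_{ij}^2 \le \mathbb{V}$ on a high-probability event. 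I would further split the bias by inserting $f(u_i,v_{j'})$, writing each summand as $\bigl(f(u_{i'},v_{j'})-f(u_i,v_{j'})\bigr) + \bigl(f(u_i,v_{j'})-f(u_i,v_j)\bigr)$, so that $B_{ij}$ separates into a row-direction increment (governed by the proximity of $u_{i'}$ to $u_i$) and a column-direction increment (governed by the proximity of $v_{j'}$ to $v_j$); these yield the two summands of $\mathbb{B}$.

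For the variance term I would condition on the latent factors and the missingness pattern, so that $N_{ij}$ is an average of $|\deno|$ independent mean-zero sub-Gaussian variables, hence sub-Gaussian with proxy $\sigma^2/|\deno|$; a standard tail bound gives $N_{ij}^2 \le 2\sigma^2\log(2/\delta)/|\deno|$ with probability $\ge 1-\delta$. It then remains to lower bound $|\deno|$. \Cref{assump:min_row_col} supplies $|\deno| \ge g(\delta)\,|\cn_{row}(i)|\,|\cn_{col}(j)|$ on $E_\delta$, and I would lower bound each neighbourhood count volumetrically: by Hölder continuity every row $i'$ with $\|u_{i'}-u_i\|_\infty \le ((\eta_{row}^2-c/\sqrt m)/L^2)^{1/2\lambda}$ satisfies $d_{row}^2(i,i') \le \eta_{row}^2 - c/\sqrt m$ and therefore (after the distance-concentration step below) lies in $\cn_{row}(i)$; the density lower bound $c_{\mathcal P,u}$ together with a binomial lower-tail bound then gives $|\cn_{row}(i)| \gtrsim (1-\delta)\,n\,c_{\mathcal P,u}\bigl((\eta_{row}^2-c/\sqrt m)/L^2\bigr)^{d_1/2\lambda}$, and symmetrically for the columns with $c/\sqrt n$ and $d_2$. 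Multiplying these reproduces the $(1-\delta)^2$, the $g(\delta)$, and the two radius powers in the denominator of $\mathbb{V}$.

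The bias term is the heart of the argument. Each increment is bounded by Hölder continuity, e.g. $|f(u_{i'},v_{j'})-f(u_i,v_{j'})| \le L\|u_{i'}-u_i\|_\infty^{\lambda}$, but the subtlety is that membership in $\cn_{row}(i)$ is decided by the estimated $L^2$ distance, which upper-bounds but does not lower-bound $\|u_{i'}-u_i\|$: a row neighbour can be far in latent space yet close in $d_{row}$. To recover the nonparametric exponent I would split the neighbourhood at an internal radius $\rho$, so that neighbours with $\|u_{i'}-u_i\|_\infty\le\rho$ each contribute at most $L\rho^\lambda$ while the remaining neighbours are controlled through their count relative to the guaranteed core of genuinely-close neighbours from the volumetric bound above; balancing the Hölder increment against this inverse-volume factor and optimising over $\rho$ produces the exponent $\tfrac{2\lambda}{d+2\lambda}$, the fractional power of $L$, and the density constant $c_{\mathcal P}$ that appear in $\mathbb{B}$, with the column-direction increment handled identically using $d_2$ and $c_{\mathcal P,v}$.

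Finally I would record the concentration of the distance estimates: $\widehat d_{row}^2(i,i')$ is an average of (sub-exponential) squared sub-Gaussian terms, so $|\widehat d_{row}^2 - d_{row}^2| \le c/\sqrt m$ uniformly over $i'\in[n]$ after a union bound, and analogously $c/\sqrt n$ over $j'\in[m]$ for the columns; this is what inserts the $c/\sqrt m$ and $c/\sqrt n$ corrections into every radius above. Collecting the at most four failure events --- row-distance concentration, column-distance concentration, the minimum-neighbour event $E_\delta$ (carrying the count concentration), and the noise tail --- and applying a union bound yields the overall probability $1-4\delta$. I expect the main obstacle to be precisely the bias step: because only a one-sided Hölder bound is available, neighbours selected by the estimated $L^2$ distance need not be latent-space close, so the fixed radius $\eta$ cannot simply be converted into an $\eta^{\lambda}$ bias as it effectively is in the population MSE of \Cref{thm:main_result}; the internal-radius balance is what sharpens this to the pointwise exponent, and making its bookkeeping interact correctly with the density lower bounds and the distance-concentration slack is the delicate part.
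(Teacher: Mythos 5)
Your skeleton matches the paper's proof: the $(a+b)^2\le 2a^2+2b^2$ decomposition, the Hoeffding bound on the noise average combined with $|\deno|\ge g(\delta)|\cn_{row}(i)||\cn_{col}(j)|$ and the volumetric/binomial lower bound on each neighbourhood count, the $c/\sqrt m$ and $c/\sqrt n$ distance-concentration corrections, and the four-event union bound are all exactly what the paper does. The gap is in the bias step, which you rightly call the heart of the argument but resolve with a mechanism that does not work. Splitting $\cn_{row}(i)$ at an internal latent radius $\rho$ and controlling the latent-far neighbours ``through their count relative to the guaranteed core'' fails because nothing upper-bounds that count: membership in $\cn_{row}(i)$ constrains only the functional distance $d^2_{row}(i,i')$, and for a function such as $f(u,v)=\cos(2\pi u)h(v)$ one has $d^2_{row}(i,i')=0$ for every $i'$ with $u_{i'}\approx 1-u_i$, so latent-far rows can form a constant fraction of the neighbourhood for every choice of $\rho$. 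Bounding each of their contributions by $2\|f\|_\infty$ then leaves an $O(1)$ bias, and no optimisation over $\rho$ recovers the exponent $\tfrac{2\lambda}{d+2\lambda}$.

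The paper's resolution is different in kind: it never argues that neighbours are latent-close. It observes that the row discrepancy $\theta_{i',j}-\theta_{i,j}=g(v_j)$, where $g:v\mapsto f(u_{i'},v)-f(u_i,v)$ is itself a $(\lambda,2L)$-H\"older function on the \emph{column} latent space whose second moment $\rho^*_{i,i'}=\E_v[g^2\mid\mathcal U]$ is exactly what the neighbourhood radius controls, namely $\rho^*_{i,i'}\le \eta_{row}^2+c/\sqrt m$ after distance concentration. \Cref{lem:chaper_lemma} then converts this $L^2$ control into a sup-norm bound,
\begin{align}
\|g\|_\infty \;\lesssim\; L^{\frac{d_2}{d_2+2\lambda}}\parenth{\frac{\rho^*_{i,i'}}{c_{\mathcal P,v}}}^{\frac{\lambda}{d_2+2\lambda}},
\end{align}
by noting that a H\"older function attaining value $B$ exceeds $B/2$ on a ball of radius $(B/2L)^{1/\lambda}$; that is where your ``internal-radius balance'' actually lives, but applied to the discrepancy function on the opposite factor's latent space rather than to neighbourhood membership in the same one. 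A tell-tale symptom of the mismatch is the pairing of constants: your scheme would attach $d_1$ and $c_{\mathcal P,u}$ to the $\eta_{row}$ term, whereas the theorem attaches $d_2$ and $c_{\mathcal P,v}$ to $\eta_{row}$ (and $d_1$, $c_{\mathcal P,u}$ to $\eta_{col}$), precisely because the interpolation is carried out over the other latent space. Your variance and concentration steps can be kept as written; the bias step needs to be replaced by this $L^2$-to-$L^\infty$ argument.
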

The proof of \Cref{thm:point_wise_guarantees} has been discussed in \Cref{sec:pointwise_bounds}. We note that the bounds in both \Cref{thm:general result} and \Cref{thm:point_wise_guarantees} have similar scaling for the variance term, $\mathbb{V}$. The main difference between the population level guarantee in \Cref{thm:general result} and the row$\times$column level guarantee in \Cref{thm:point_wise_guarantees} lies in the bias term $\mathbb{B}$, 
\begin{align}
    \mathbb{B} = \begin{cases}
      & c_{0,\delta}\left( \eta_{col}^2 + \frac{c}{\sqrt{n}}  \right. \\
     & \left. + \eta_{row}^2 + \frac{c}{\sqrt{m}}\right) \quad \mbox{(\Cref{thm:general result})},\\
     & \\
     & \frac{c}{c_{d_1, \lambda}} L^{\frac{2d_1}{d_1 + 2 \lambda}} \left(\eta_{col}^2 + c/\sqrt{n} \right)^{\frac{2\lambda}{d_1 + 2 \lambda}} \\
 &+  \frac{c}{c_{d_2, \lambda}} L^{\frac{2d_2}{d_2 + 2 \lambda}} \left(\eta_{row}^2 + c/\sqrt{m}\right)^{\frac{2\lambda}{d_2 + 2 \lambda}} \quad \mbox{(\Cref{thm:point_wise_guarantees})}. 
    \end{cases}
\end{align}
We have hidden the dependence on the missingness probabilities $p_{i.j}$ in the bias representations above. The bias term in \Cref{thm:point_wise_guarantees} suffers from the curse of dimensionality with respect to $\eta_{row}, \eta_{col}$ (the tuning parameters) and $m,n$ (the size of the matrix). However the bias term in \Cref{thm:general result} does not suffer from this issue. These results agree with our intuition that bias at the entry-wise level is higher than that at the population level. It is interesting to see that the dependence of the bias term at the row$\times$column level on the smoothness of the latent function ($\lambda$), and on the dimensionality of the row and column latent space ($d_1, d_2$) vanishes on averaging the bias across all the entries at the population level.

It is important to note that \Cref{thm:point_wise_guarantees} is the first result in the literature to obtain concentration bounds for the pointwise estimates of TS-NN($\mbi{\eta}$) under a non-linear holder-continuous latent function and under general missingness patterns. \Cref{thm:point_wise_guarantees} generalizes the pointwise-guarantee derived in Theorem-4.1 of \cite{dwivedi2022counterfactual} for row-NN under a non-linear Lipschitz latent factor model. We can optimize the upper bound in \Cref{thm:point_wise_guarantees} with respect to $\mbi{\eta}$ to obtain exact MSE decay rates. When $g(\delta) \geq c > 0$, we can deduce that for $n = \omega(m^{\frac{d_2 + 2\lambda}{4\lambda^2 + 2\lambda d_2 + d_1d_2 }} )$ (this is required to ensure that $\eta_{col}^2 \geq c/\sqrt{n}$) and $n = O(m^{\frac{4\lambda^2 + 2\lambda d_1 + d_1d_2}{d_1 + 2\lambda}})$ (this is required to ensure that $\eta_{row}^2 \geq c/\sqrt{m}$), TS-NN($\mbi{\eta}$) with $\eta_{row} = (mn)^{\frac{-(d_2 + 2\lambda)}{2(2\lambda + d_1 + d_2 + (d_1d_2/\lambda))}}$ and $\eta_{col} = (mn)^{\frac{-(d_1 + 2\lambda)}{2(2\lambda + d_1 + d_2 + (d_1d_2/\lambda))}}$ achieves the rate $(\widehat \theta_{i,j} - \theta_{i,j})^2= O((mn)^{\frac{-2\lambda}{2\lambda + d_1 + d_2 + (d_1d_2/\lambda)}})$ for each $(i,j) \in [n] \times [m]$. The optimal pointwise error rate is understandably slower that the minimax MSE rate obtained in \Cref{subsec:mnar} because of the additional $d_1d_2/\lambda$ term appearing in the optimal pointwise error rate. The term $d_1d_2/\lambda$ governing the gap between the optimal pointwise error rate and the optimal MSE rate suggests that we obtain faster pointwise error rates when we have to search over smaller latent spaces ($d_1, d_2$ small) and when the underlying latent function is more smooth ($\lambda$ is high). 

\subsection{Asymptotic guarantee at the row$\times$column level}
In this sub-section we establish asymptotic normality guarantee for the pointwise estimates $\widehat \theta_{i,j}$. The asymptotic normality result enables us to obtain asymptotically valid confidence intervals for $\widehat \theta_{i,j}$ ($(i,j) \in [n] \times [m]$). We make an asumption regarding the size of sub-sampled nearest neighbors. 
\begin{assumption}[Upper bound on nearest neighbors]
\label{assump:subsample}
A sub-sampling process is done so that the number of nearest neighbours ($|\deno|$) is bounded above by some sequence $\{T_{n,m}\}$ for all $(i,j) \in [n] \times [m]$ which satisfy,
\begin{align}
   T_{n,m}\left\{\eta_{row}^{\frac{4\lambda}{d_2 + 2 \lambda}} + \eta_{col}^{\frac{4\lambda}{d_1 + 2 \lambda}} \right\} = o_P(1) ,
\end{align}
as $m, n \rightarrow \infty$ where the tuning parameters $\mbi{\eta}$ satisfy $\eta_{row}^2 = \Omega(1/\sqrt{m})$ and $\eta_{col}^2 = \Omega (1/\sqrt{n})$. 
\end{assumption}
\Cref{assump:subsample} caps the number of nearest neighbors $\deno$ of the $(i,j)$-th entry at $T_{n, m}$. This is done to ensure that $\deno$ does not grow in an un-restricted manner when $m, n \rightarrow \infty$. This capping ensures that the bias term on being scaled with $\deno$ decays to $0$. We now state our asymptotic guarantee. 
\begin{theorem}
    \label{thm: TSNN CLT}
For each $(i,j) \in [n] \times [m]$, under \cref{asump_row_col,asump_low_rank,asump_bounded_noise,assump:MNAR,assump:min_row_col,assump:subsample}, the TS-NN($\mbi{\eta}$) estimate $\widehat \theta_{i,j}$ satisfies the following distributional convergence,
\begin{align}
    \sqrt{\abss{\deno}}  \parenth{\what{\theta}_{i, j} - \theta_{i, j}} \stackrel{d}{\rightarrow}  \mathcal{N}(0, \sigma^2). 
\end{align}
\end{theorem}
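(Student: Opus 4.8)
The plan is to split the scaled, centered estimate into a bias term and a noise term and to handle them separately. Writing $X_{i',j'}=\theta_{i',j'}+\epsilon_{i',j'}$ on each observed entry, the definition of $\what\theta_{i,j}$ gives
\[
\begin{aligned}
\sqrt{\abss{\deno}}\bigparenth{\what\theta_{i,j}-\theta_{i,j}}
&=\frac{1}{\sqrt{\abss{\deno}}}\sum_{(i',j')\in\deno}\bigparenth{\theta_{i',j'}-\theta_{i,j}}\\
&\quad+\frac{1}{\sqrt{\abss{\deno}}}\sum_{(i',j')\in\deno}\epsilon_{i',j'}.
\end{aligned}
\]
I would show that the first (bias) term is $o_P(1)$ and that the second (noise) term converges in distribution to $\mathcal{N}(0,\sigma^2)$; the theorem then follows by Slutsky's theorem.

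For the bias term I would reuse the squared-bias bound underlying \Cref{thm:point_wise_guarantees}, namely that on a high-probability event $\bigparenth{\abss{\deno}^{-1}\sum_{(i',j')\in\deno}(\theta_{i',j'}-\theta_{i,j})}^2\le\mathbb{B}$. Since \Cref{assump:subsample} imposes $\eta_{row}^2=\Omega(1/\sqrt m)$ and $\eta_{col}^2=\Omega(1/\sqrt n)$, the additive $c/\sqrt n,\,c/\sqrt m$ inside $\mathbb{B}$ are absorbed and $\mathbb{B}\lesssim \eta_{col}^{4\lambda/(d_1+2\lambda)}+\eta_{row}^{4\lambda/(d_2+2\lambda)}$. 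Multiplying by $\abss{\deno}$ and using the cap $\abss{\deno}\le T_{n,m}$ from \Cref{assump:subsample},
\[
\begin{aligned}
\bigparenth{\tfrac{1}{\sqrt{\abss{\deno}}}\sum_{(i',j')\in\deno}(\theta_{i',j'}-\theta_{i,j})}^2
&\le \abss{\deno}\,\mathbb{B}\\
&\lesssim T_{n,m}\bigparenth{\eta_{col}^{4\lambda/(d_1+2\lambda)}+\eta_{row}^{4\lambda/(d_2+2\lambda)}}=o_P(1),
\end{aligned}
\]
which is exactly the product \Cref{assump:subsample} forces to vanish. Hence the scaled bias is $o_P(1)$.

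The heart of the argument is the noise term, and the difficulty is that $\deno$ depends on the very noise it averages. I would decouple the two by introducing the oracle neighborhoods $\mathcal{N}^*_{row}(i)=\{i':d_{row}^2(i,i')\le\eta_{row}^2\}$ and $\mathcal{N}^*_{col}(j)=\{j':d_{col}^2(j,j')\le\eta_{col}^2\}$, which depend only on $\mathcal{U},\mathcal{V}$, together with the oracle pair set $\mathcal{S}^\star=\{(i',j'):i'\in\mathcal{N}^*_{row}(i),\,j'\in\mathcal{N}^*_{col}(j),\,A_{i',j'}=1\}$; the latter is measurable with respect to $\mathcal{F}:=\sigma(\mathcal{U},\mathcal{V},\{A_{i',j'}\})$ and hence independent of the noise. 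Conditional on $\mathcal{F}$ the variables $\{\epsilon_{i',j'}:(i',j')\in\mathcal{S}^\star\}$ are independent, mean zero, of variance $\sigma^2$, and uniformly sub-gaussian, with $\abss{\mathcal{S}^\star}\toprob\infty$, so the Lindeberg--Feller CLT gives $\abss{\mathcal{S}^\star}^{-1/2}\sum_{\mathcal{S}^\star}\epsilon_{i',j'}\todist\mathcal{N}(0,\sigma^2)$ for almost every realization of $\mathcal{F}$; as the limit is free of $\mathcal{F}$, the convergence also holds unconditionally. To pass from $\mathcal{S}^\star$ back to $\deno$, I would invoke the uniform concentration of $\what d_{row}^2,\what d_{col}^2$ about $d_{row}^2,d_{col}^2$ established en route to \Cref{thm:general result}: the two selected sets can disagree only on pairs whose oracle distance lies in a vanishing band around the threshold, and the continuity of the latent-factor law makes this symmetric difference $o_P(\abss{\mathcal{S}^\star})$. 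This yields $\abss{\deno}/\abss{\mathcal{S}^\star}\toprob 1$ and, via a maximal inequality over the small set $\deno\triangle\mathcal{S}^\star$, $\abss{\deno}^{-1/2}\sum_{\deno\triangle\mathcal{S}^\star}\epsilon_{i',j'}\toprob 0$, so Slutsky transfers the CLT from $\mathcal{S}^\star$ to $\deno$.

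The main obstacle is precisely this decoupling. Each $\epsilon_{i',j'}$ enters the estimated distances $\what d_{row}^2(i,i')$ and $\what d_{col}^2(j,j')$ that decide whether $(i',j')$ is retained, so $\deno$ is not independent of the averaged noise; the redeeming feature is that a single entry perturbs each distance by only $O(1/m)+O(1/n)$, so a leave-one-out coupling controls the selection bias at the boundary. Making the symmetric-difference estimate $o_P(\abss{\mathcal{S}^\star})$ hold uniformly while simultaneously bounding $\max_{\abss{S}\le K}\abss{\sum_{(i',j')\in S}\epsilon_{i',j'}}$ over the fluctuating set $S=\deno\triangle\mathcal{S}^\star$ is the delicate step; once independence is recovered, the conditional CLT and the vanishing of the scaled bias are routine.
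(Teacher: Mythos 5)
Your decomposition, your treatment of the bias term, and the concluding Slutsky step coincide with the paper's proof in \cref{sec:clt_proof}: the paper likewise bounds $\sqrt{\abss{\deno}}\,\abss{\mathbb{B}}$ by $O_p\bigl(\sqrt{T_{n,m}\{\eta_{row}^{4\lambda/(d_2+2\lambda)}+\eta_{col}^{4\lambda/(d_1+2\lambda)}\}}\bigr)=o_P(1)$ using \cref{lem:chaper_lemma}, \cref{lem:nn_conc_lemma}, and \cref{assump:subsample} exactly as you do. Where you diverge is the noise term. The paper simply conditions on $\mathcal{U},\mathcal{V}$ and invokes the ``standard central limit theorem on the iid error terms''; the dependence of $\deno$ on the averaged noise that worries you is dispatched not by a decoupling argument but by sample splitting, announced at the top of \cref{sec: appendix A}: the distances (hence the neighborhoods) are learned on one half of the data and the averaging is done on the other, so conditional on the neighborhoods the averaged $\epsilon_{i',j'}$ really are an independent mean-zero array and the CLT applies directly. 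Your oracle-neighborhood route avoids sample splitting and is therefore more general and more faithful to the algorithm as literally stated, but it is also where your argument is least complete: the claim that $\abss{\deno\,\triangle\,\mathcal{S}^\star}=o_P(\abss{\mathcal{S}^\star})$ requires an anti-concentration statement for the oracle distances near the threshold $\eta^2$ (only pairs in a band of width $O(1/\sqrt{\bar{\mbi{p}}m})$ around $\eta_{row}^2$ can flip), and the maximal inequality over the random set $\deno\,\triangle\,\mathcal{S}^\star$ must be taken uniformly over all admissible symmetric differences, neither of which you establish. You also implicitly need $\abss{\mathcal{S}^\star}\toprob\infty$, which in the paper follows from \cref{lem:bound_nn} together with \cref{assump:min_row_col}. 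So: same skeleton, identical bias analysis, but your noise-term argument trades the paper's sample-splitting device for a leave-one-out decoupling whose key estimates remain at the level of a sketch.
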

The proof of \Cref{thm: TSNN CLT} has been provided in \Cref{sec:clt_proof}. If $\widehat \sigma^2$ is a consistent estimate of $\sigma^2$ then we can use \Cref{thm: TSNN CLT} to obtain the following asymptotically valid $(1 - \alpha)$ confidence interval for $\theta_{i,j}$ (for $0 <\alpha <1 $), 
\begin{align}
\label{eq: conf interval}
    \parenth{\widehat \theta_{i,j} - \frac{z_{\alpha/2}\widehat \sigma}{\sqrt{\abss{\deno}}}, \widehat \theta_{i,j} +\frac{z_{\alpha/2}\widehat \sigma}{\sqrt{\abss{\deno}}}},
\end{align}
where $z_{\alpha/2}$ is the $1 - (\alpha/2)$-th quantile of the standard normal distribution. 
\begin{remark} \label{rem: consistency of noise sd} We can construct a consistent estimate of $\sigma^2$ by following a similar idea as illustrated in Appendix E.1 of \cite{dwivedi2022counterfactual}. Let us denote the estimate of $\theta_{i,j}$ obtained by TS-NN($\mbi{\eta}$) as $\widehat \theta_{i,j;\mbi{\eta}}$ . We define the following, 
\begin{align}
   \widehat \sigma^2_{n,m;\mbi{\eta}} &= \frac{\sum_{(i,j) \in [n]\times [m]} (X_{i,j} - \widehat \theta_{i,j;\mbi{\eta}})^2 A_{i,j}}{\sum_{(i,j) \in [n]\times [m]} A_{i,j}}.
\end{align}

In the regime $\sum_{(i,j) \in [n]\times [m]}p_{i,j} = \omega(\log(mn))$, it can be shown that $\widehat \sigma^2_{n,m;\mbi{\eta}}$ is a consistent estimate of $\sigma^2$ under the assumptions in \Cref{thm: TSNN CLT}. 
\end{remark}

\begin{remark}
\Cref{thm: TSNN CLT} provides asymptotic normality for a given choice of tuning parameters $\mbi{\eta}$. Getting an asymptotic normality result for the pointwise estimates for an adaptively chosen tuning parameter $\mbi{\eta}$ from the data remains an interesting future direction.   
\end{remark}

\section{Experiments}
\label{sec:experiments}
In this section, we illustrate the practical usability of TS-NN to complement our theoretical findings with empirical evidence via two vignettes: one with synthetic data and another a case study with the real-life dataset HeartSteps. All of our tests have been run on a MacBook Pro with an M2 chip and 32 GB of RAM.

\begin{figure*}[ht]
    \centering
    \begin{tabular}{cc}
         \quad\qquad \textbf{MCAR} & \quad \qquad \textbf{MNAR} \\
        \includegraphics[trim=0in 0.5in 0in 0in, clip, width=0.45\textwidth]{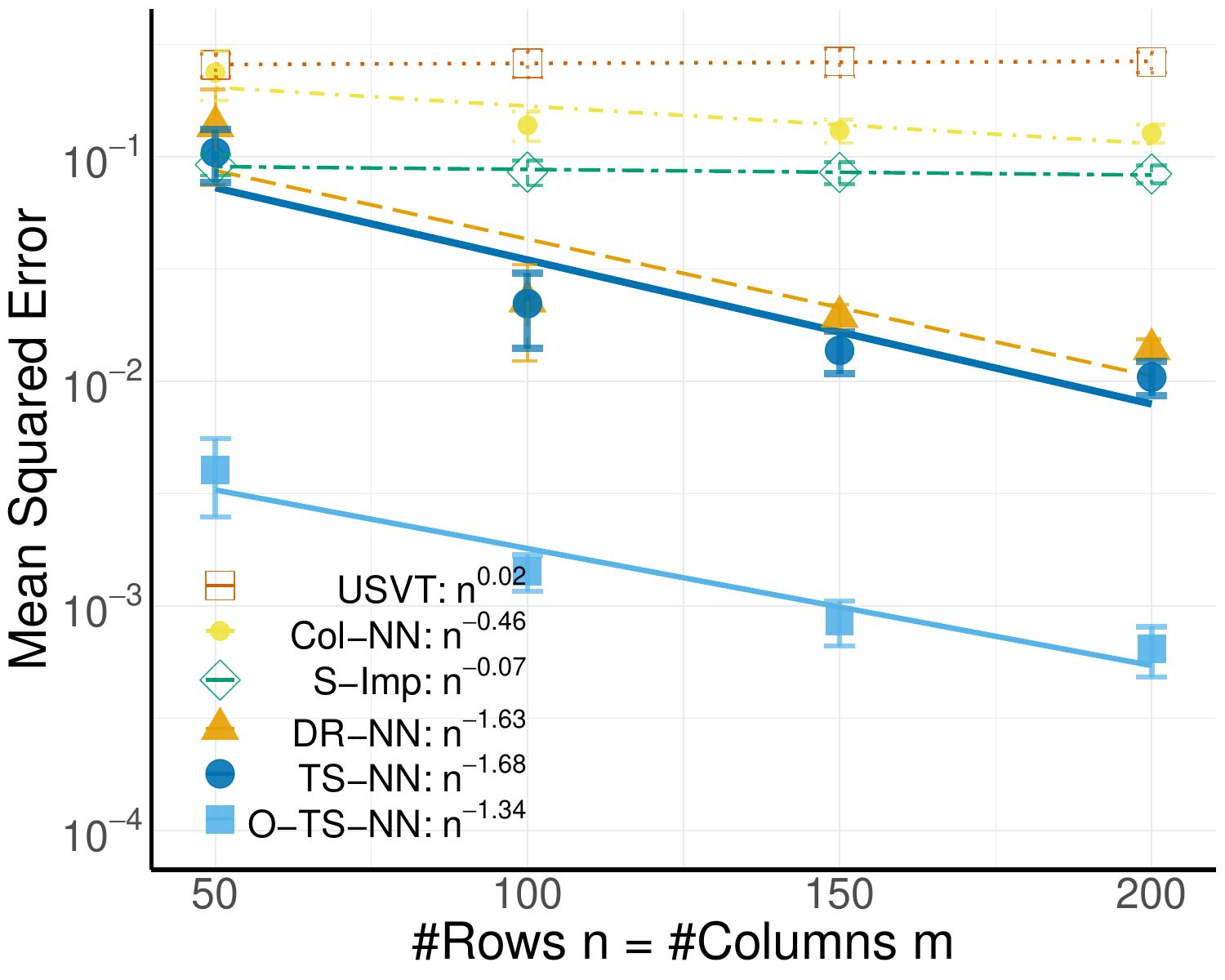} &
        \includegraphics[trim=0in 0.5in 0in 0in, clip, width=0.45\textwidth]{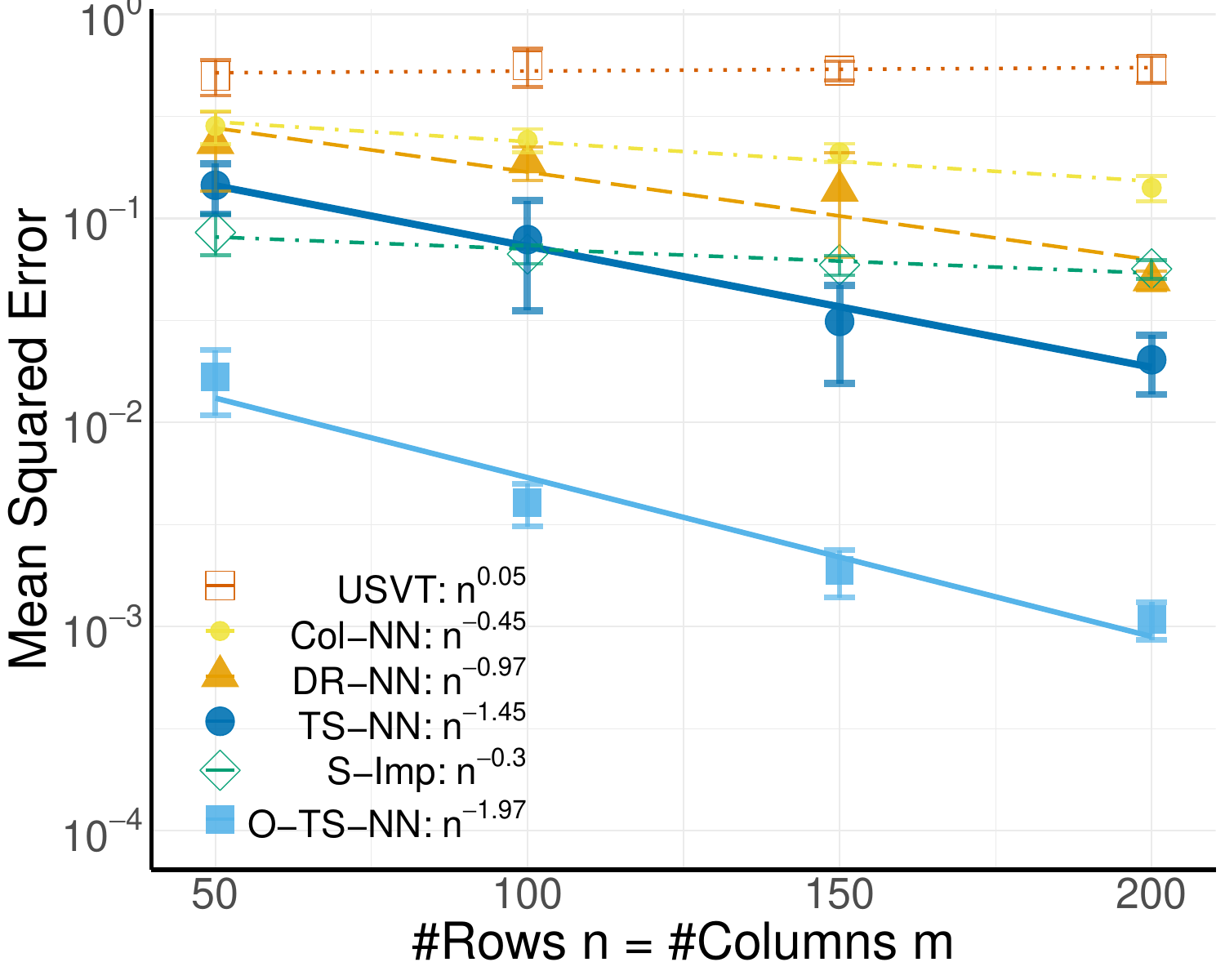}\\[-1mm]
        \ \ \quad \# Rows $n$ &\ \  \quad  \# Rows $n$
    \end{tabular}
    \\[2mm]
    {(a) MSE error rates for various algorithms for $\lambda=0.75$-smooth function and SNR $ \approx 1.41$.}\\
    \rule{\textwidth}{0.5pt}\vspace{1em}
    \begin{tabular}{cc}
         \quad\qquad \textbf{MCAR} & \quad \qquad \textbf{MNAR} \\
        \includegraphics[trim=0in 0.5in 0in 0.5in, clip, width=0.45\textwidth]{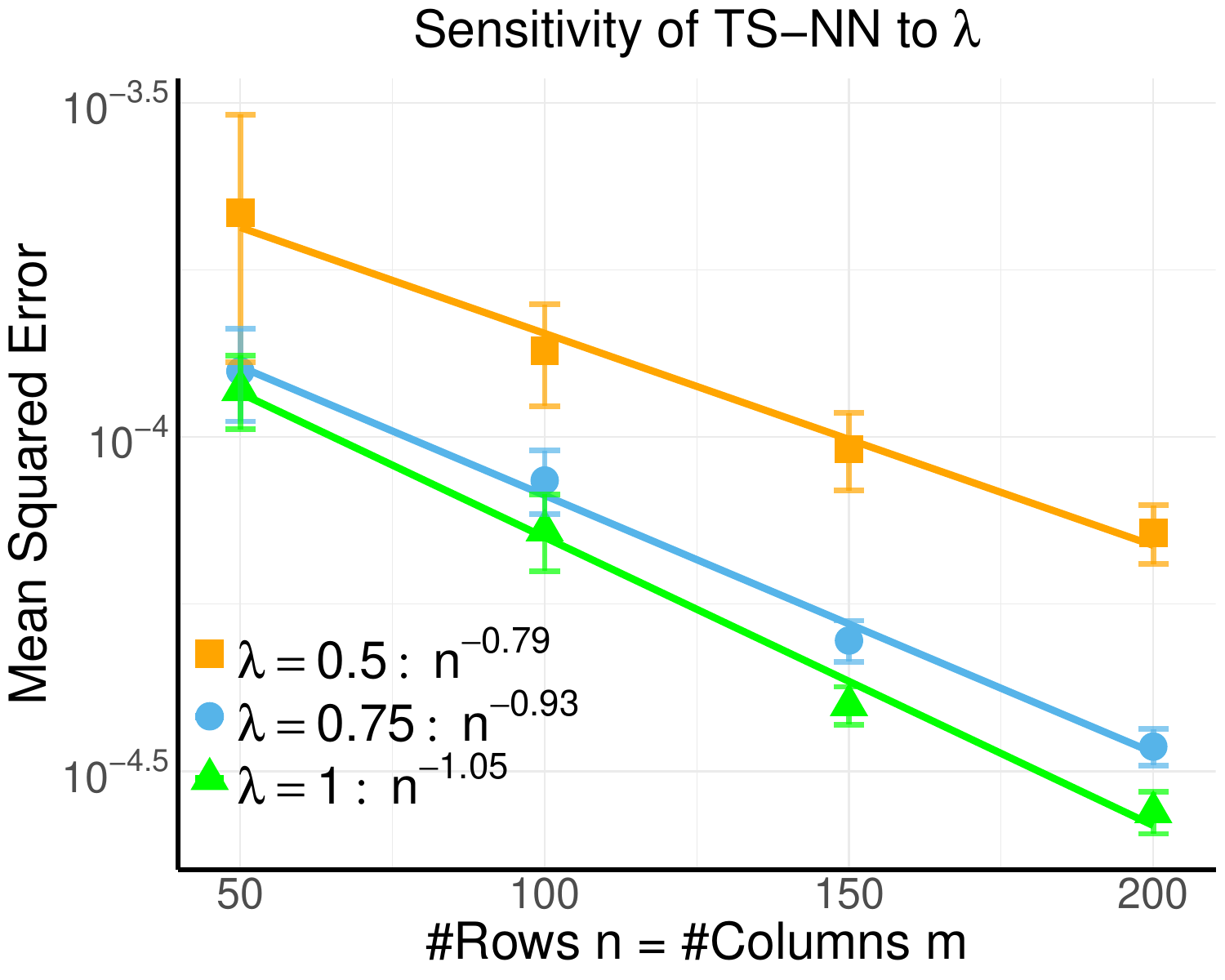} &
        \includegraphics[trim=0in 0.5in 0in 0.5in, clip, width=0.45\textwidth]{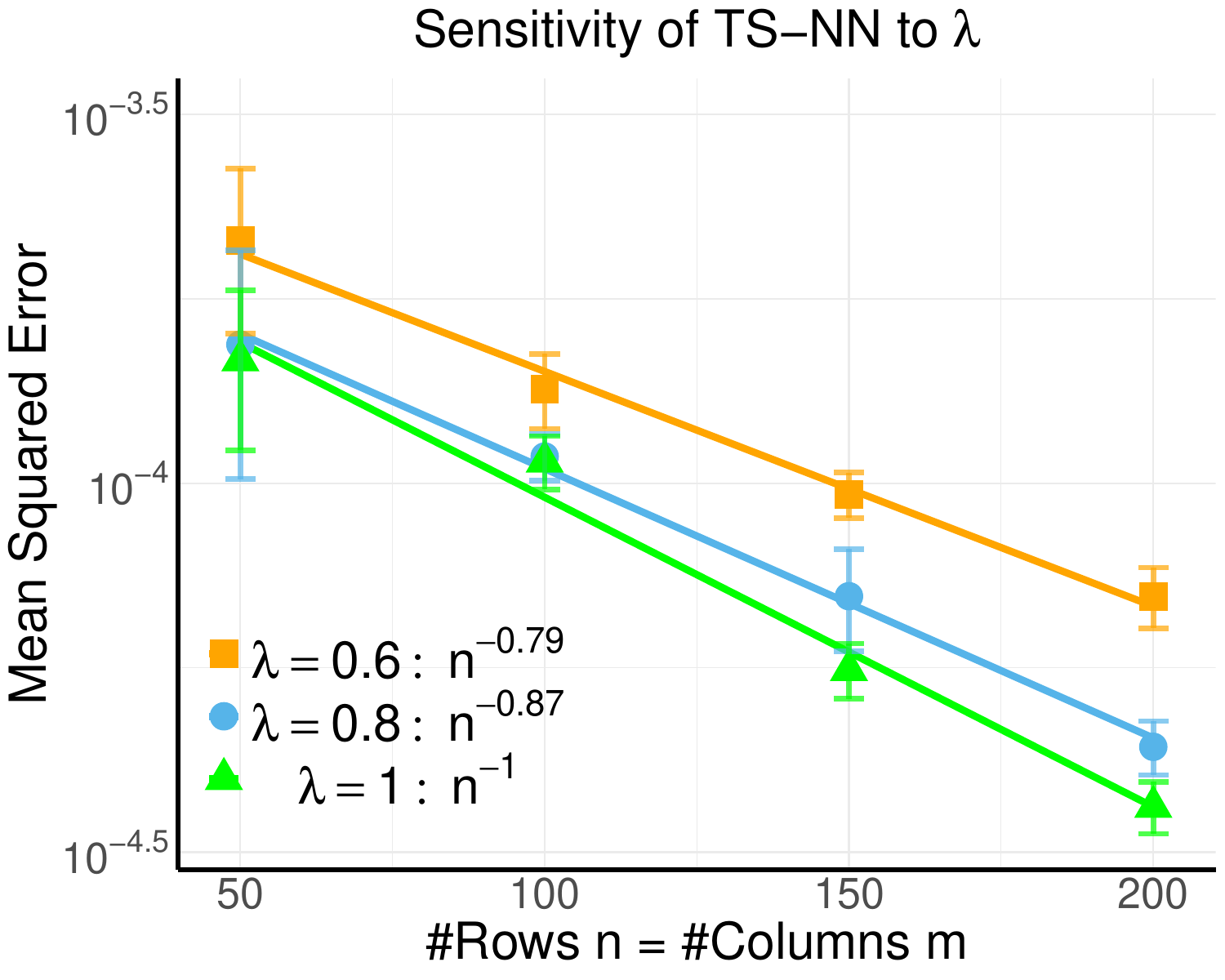} \\[-1mm]
        \ \ \quad \# Rows $n$ &\ \  \quad  \# Rows $n$
    \end{tabular}
    \\[2mm]
    {(b) Variations in TS-NN MSE with smoothness parameter $\lambda$ for SNR $\approx 31$.}
    
    \caption{\tbf{MSE comparison for various benchmarks.} Results are averaged across 10 runs.}
    \label{fig: merged comparison}
\end{figure*}
\subsection{Simulation Study}
\label{subsec: simulation study}
We use the following $(\lambda, 2)$ \Holder smooth $f:\real^2\to\real$:
\begin{align}
    f(u, v) = |u +v|^{\lambda}\mathrm{sgn}(u +v),
\end{align}
and generate \iid latent factors and noise as follows:
\begin{align}
    u_i\sim \Unif[-0.5, 0.5];&\quad v_j\sim \Unif[-0.5, 0.5];\\
   \eps\sim\Gsn(0, \sigma_{\eps}^2);&\quad
    \theta_{i,j} = f(u_i, v_j),
\end{align}
and simulate two distinct missingness mechanisms:
\begin{align}
    &\qtext{MCAR:} A_{i,j} \iidsim \Ber(0.75),\\
    &\qtext{MNAR:}A_{i,j}(u_i, v_j) \sim \Ber(p_{i,j}(u_i, v_j))\\
    &p_{i,j}(u_i, v_j) = \begin{cases}
        0 \qquad &\text{with prob } 0.2\\
        \frac{2}{5} + \frac{1}{5}\mathbb{I}\parenth{u_i + v_j > 0}&\text{with prob } 0.8
    \end{cases}
\end{align}
 The latter formulation, captures an MNAR setting, where data  20\% of entries are deterministically missing and for 80\%  points, larger signals will have a larger probability of being observed. For example, in a movie recommendation system, if a user strongly likes or dislikes a movie, they are more inclined to give a rating than if they have mixed feelings about it.

We present results for a couple of different signal-to-noise ratio in our simulations, which is defined as:
\begin{align}
    \mrm{SNR} = 
    \sqrt{\frac{\sum_{i = 1}^n\sum_{j = 1}^m f^2(u_i, v_j)}{mn\sige^2}}.
\end{align}


\paragraph{Baselines} 
We compare TS-NN with the vanilla NNs, namely row nearest neighbors (Row-NN) and its column counterpart 
 (Col-NN) \cite{li2019nearest,dwivedi2022counterfactual} each of which only use one set of neighbors, and then its doubly robust variant (DR-NN)~\cite{dwivedi2022doubly} which uses both row and column neighbors. We also include the conventional matrix completion methods Universal Singular Value Thresholding (USVT) \cite{Chatterjee15} and SoftImpute \cite{softimpute14}.

We also provide the performance of Oracle TS-NN (O-TS-NN), which has access to both row and column latent factors but doesn't know the $f$. It uses the latent vectors in place of observed rows and columns to compute the inter-row and inter-column distances, eventually yielding the neighborhood set. 


\paragraph{Experiment Setup} We set the number of rows and columns equal, i.e., $m=n$ so that Row and Col-NN perform similarly and we omit one of them. In all NN methods, hyper-parameter tuning is done via cross-validation and SoftImpute is implemented over a log lambda grid and the best MSE was reported (details in \cref{sec: appendix D}). USVT is implemented using the $\texttt{filling}$ R package.
We repeat the experiment 10 times and plot the mean MSE~\cref{eq:mse} along with 1 standard deviation for it (which can be too small to notice), as 
a function of $n$ in \cref{fig: merged comparison} for $\lambda=0.75$ and \cref{fig: lambda = 0.5 benchmark comparison} in Appendix for $\lambda=0.5$. 
We also provide a least squares fit for $\log(\MSE)$ with respect to $\log n$ and report the slope of the regression line in the plot. If the slope of the line is $-0.9$ then, MSE decreases in the order of $n^{-0.9}$. We show it for quantifying the MSE decay of algorithms in simulation studies.

\paragraph{Illustrative Example} We portray the difference between point estimates of TS-NN and SoftImpute(as a standard non-NN matrix completion algorithm) in \cref{fig: scatterplots}. They were compared on a 200$\times 200$ matrix with SNR$^2=2$, MNAR missingness, and $\lambda=0.75$. We plot the true signals $\theta_{i,j}$ on y-axis and their corresponding estimates $\hatij$ on x-axis for each algorithm and look at how they deviate from the $y=x$ line. We highlight the $y=x$ line as a reference because as the value $\abss{\hatij - \signalij}$ gets smaller, the corresponding point comes closer to the $y=x$ line. 

\begin{figure*}[ht]
    \centering
    \begin{tabular}{cc}
         \quad\qquad \textbf{Soft Impute} & \quad \qquad \textbf{TS-NN} \\
        \includegraphics[ width=0.45\textwidth]{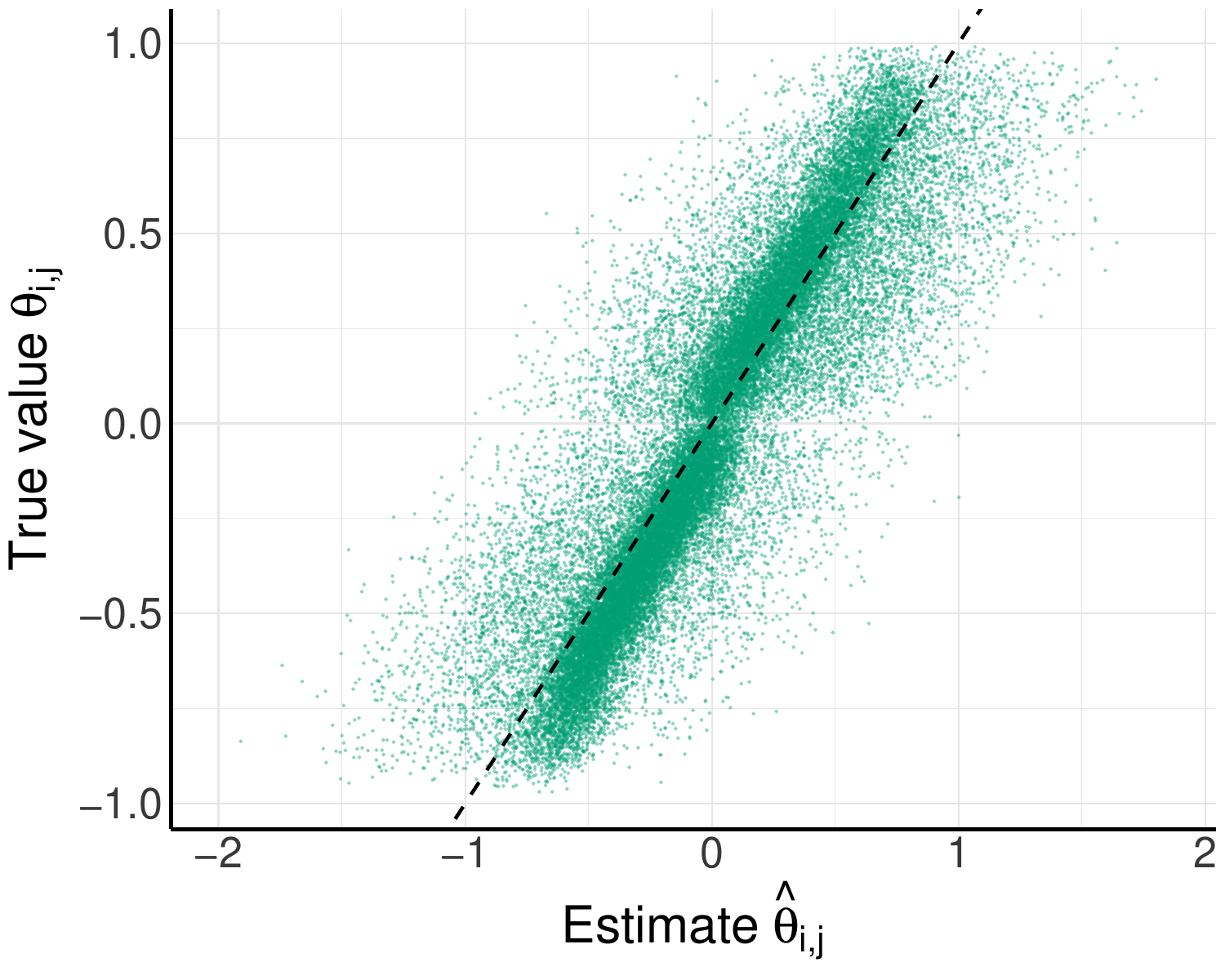} &
        \includegraphics[ width=0.45\textwidth]{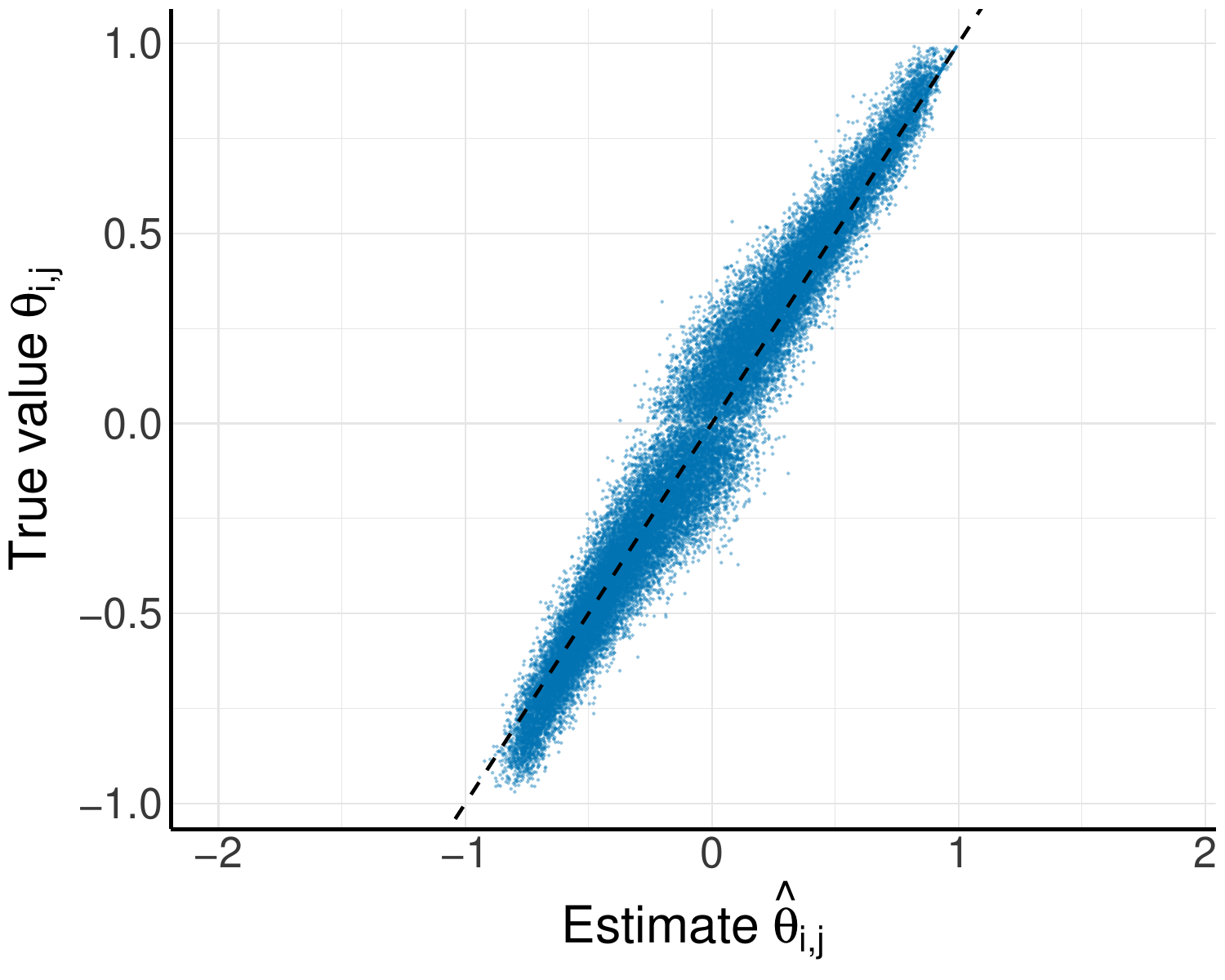}
    \end{tabular}
    \caption{\textbf{Scatterplots of estimated and true signals for TS-NN and SoftImpute under MNAR missingness.} In TS-NN the points are much more concentrated around $y=x$ line, while SoftImpute struggles with controlling the bias. For SoftImpute, in the region $\theta_{i,j}\leq0$, majority of the points lie below the $y=x$ line indicating a pervasive negative bias among it's estimates and vice versa for the region $\theta_{i,j}>0$.} 
    \label{fig: scatterplots}
\end{figure*}

\paragraph{Results} Following our \cref{cor:main_result} and \cref{thm:general result}, the theoretical MSE decay rate of TS-NN becomes $\mathcal{O}\parenth{n^{-\frac{4\lambda}{2\lambda + 2}}}$ in setup where $n= m$ and $d_1 = d_2 = 1$. We note that the empirical MSE decay rates of TS-NN are better than the theoretical ones. TS-NN and its oracle version O-TS-NN show the best MSE decay rates among all the baseline algorithms. Infact in the MCAR setup, TS-NN shows similar MSE decay rate as O-TS-NN for both the lambdas 0.75 and 0.5.

Overall, TS-NN and DR-NN perform better than the one-sided counterparts. On the other hand, USVT and SoftImpute exhibit no-to-weak MSE decay. We observe that only TS-NN maintains its non-trivial error decay while transitioning from MCAR to MNAR setup. While DR-NN is competitive to TS-NN for MCAR, it fairs much worse for MNAR. We highlight that TS-NN exhibits the minimum MSE among all the baselines in both settings for all $n \geq 100$.


\paragraph{Sensitivity to Smoothness of latent function} 
Next, we verify the adaptivity of TS-NN to the smoothness of the underlying signal (quantified by $\lambda$), an implication of \cref{thm:general result} in a high SNR regime to highlight the change in decay rates in low-sample size.  (Note when the SNR is not too high, the effect is not as pronounced, e.g., when we change $\lambda=0.75$ in  \cref{fig: merged comparison}(a) to $\lambda = 0.5$ in \cref{fig: lambda = 0.5 benchmark comparison}.)

Overall, we see an improvement in the estimation accuracy of TS-NN as the smoothness of $f$ increases. Interestingly in MCAR, we obtain empirical MSE decay rates of $n^{-0.79}, n^{-0.93}$ and $n^{-1.05}$ at $\lambda = 0.5, 0.75$ and $1$; the trends is consistent with the theoretical rates of $n^{-0.67}, n^{-0.86}$ and $n^{-1}$ respectively. Theoretical rates are obtained by plugging the $\lambda$ in $\mathcal{O}\parenth{n^{-\frac{4\lambda}{2\lambda + 2}}}$. Even in MNAR, at $\lambda = 0.6, 0.8$ and $1$ we get empirical MSE decay rates of $n^{-0.79}, n^{-0.87}$ and $n^{-1}$ where the theoretical rates are $n^{-0.75}, n^{-0.89}$ and $n^{-1}$ respectively. 


Overall, we observe that TS-NN's performance is adaptive with respect to the smoothness parameter $\lambda$ and the missingness mechanism being MCAR / MNAR, consistent with our theory, namely \cref{cor:main_result} and \cref{thm:general result}. Our simulations provide evidence of the resilience of TS-NN towards arbitrary missingness/intervention patterns while delivering MSE-decay rates similar to that of the oracle-TS-NN. (However, as expected the oracle's performance is strictly better (primarily in terms of constant scaling factors), compared to TS-NN.)

\begin{figure*}[ht]
    \centering
    \begin{tabular}{cc}
         \quad\qquad \textbf{MCAR} & \quad \qquad \textbf{MNAR} \\
        \includegraphics[trim=0in 0.5in 0in 0in, clip, width=0.45\textwidth]{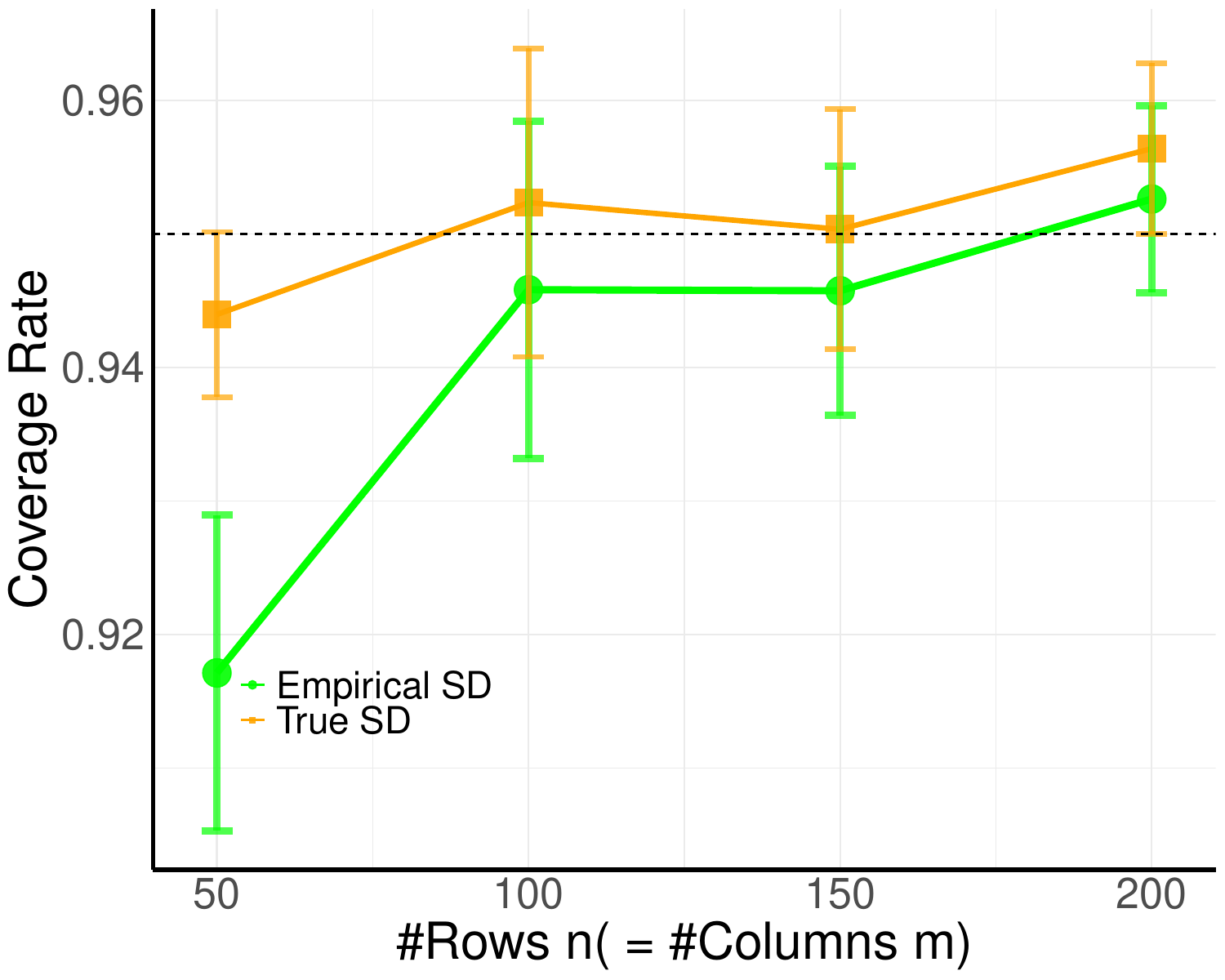} &
        \includegraphics[trim=0in 0.5in 0in 0in, clip, width=0.45\textwidth]{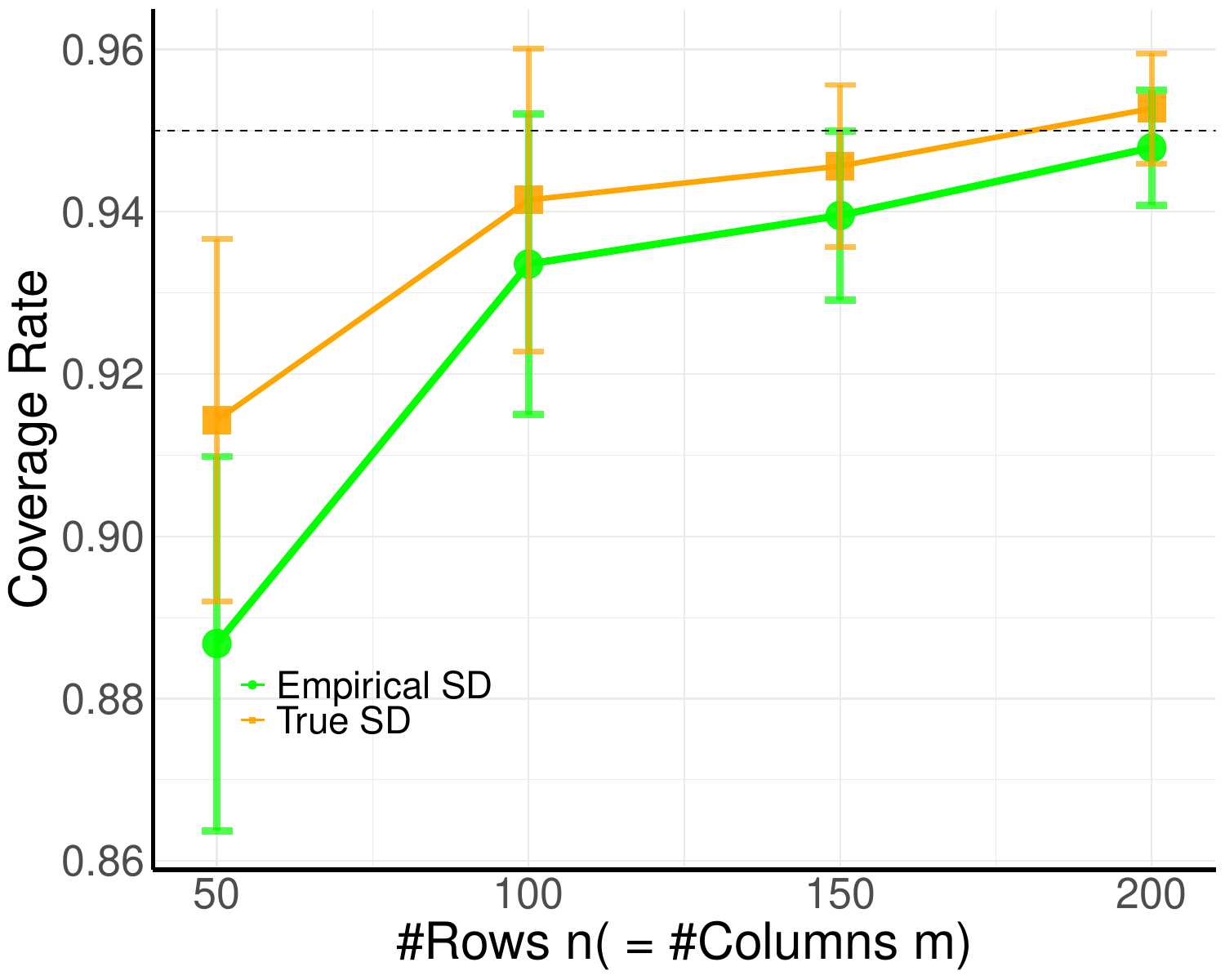}\\[-1mm]
        \ \ \quad \# Rows $n$ &\ \  \quad  \# Rows $n$
    \end{tabular}
    \\[2mm]
    {(a) Empirical study of CLT results for $\lambda=1$-smooth function and SNR $ \approx 1.41$(SNR$^2=2$).}\\
    \rule{\textwidth}{0.5pt}\vspace{1em}
    \begin{tabular}{cc}
         \quad\qquad \textbf{MCAR} & \quad \qquad \textbf{MNAR} \\
        \includegraphics[trim=0in 0.5in 0in 0in, clip, width=0.45\textwidth]{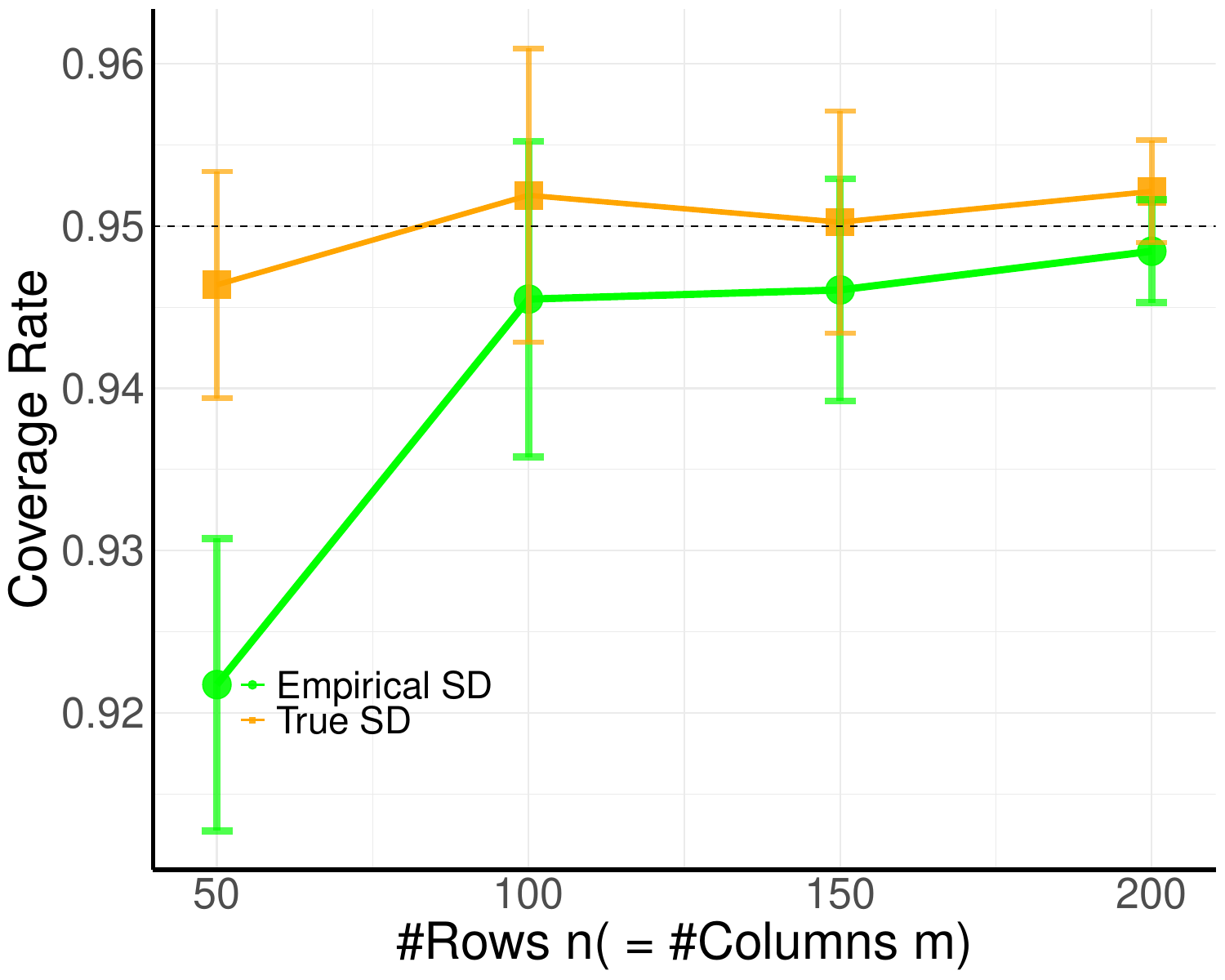} &
        \includegraphics[trim=0in 0.5in 0in 0in, clip, width=0.45\textwidth]{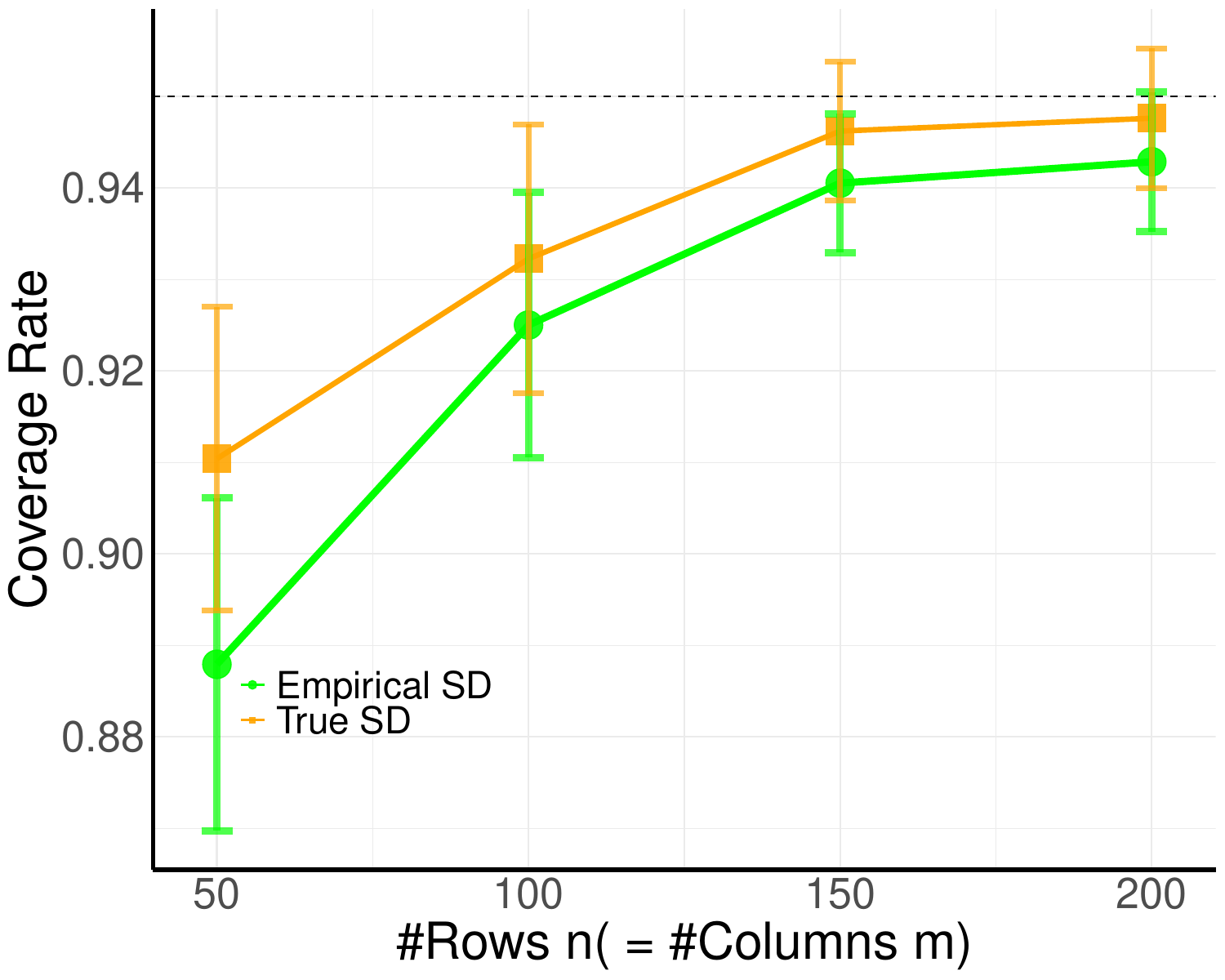} \\[-1mm]
        \ \ \quad \# Rows $n$ &\ \  \quad  \# Rows $n$
    \end{tabular}
    \\[2mm]
    {(b) Empirical study of CLT results for $\lambda=0.75$-smooth function and SNR $ \approx 1.41$(SNR$^2=2$).}
    
    \caption{\tbf{Coverage as a function of $n(=m)$.} Green lines and orange lines indicate coverage given by 95\% CIs obtained using the interval \cref{eq: conf interval} with appropriate  $\what\sig_{\eps}$ and oracle $\sige$ respectively.}
    \label{fig: merged clt comparison}
\end{figure*}

\paragraph{Coverage}

We report the coverage rates of 95\% confidence intervals(CIs) centered at $\what\theta_{i,j}$ as prescribed by \cref{thm: TSNN CLT}. For TS-NN, we split the matrix into 5 folds, and one of them is held out as a test dataset. TS-NN is trained on the remaining 4 folds, and then the TS-NN's coverage rates is evaluated on the test fold. We demonstrate the results for 2 types of CIs, one constructed using oracle $\sige$ and one using estimated $\what{\sig}_{\eps}$, call them $\mbi{CI}_o$ and $\what{\mbi{CI}}$ respectively (\cref{subsubsec: clt appendix}). We refer the reader to \cref{subsec: simulation appendix} for the complete description of data-splitting in TS-NN. We do 5 fold-CV and average the coverage of 95\% $\mbi{CI}_o$ and 95\%  $\what{\mbi{CI}}$ on the test data fold.

We replicate this process 10 times and report the mean and 1 SD bars of empirical coverage of $\mbi{CI}_o$ and $\what{\mbi{CI}}$ as a function of $n(=m)$ in \cref{fig: merged clt comparison}. To boost small sample coverage, we augment our noise SD estimate with the within neighborhood SD. In MCAR setup, we see that 95\% $\what{\mbi{CI}}$ achieves nearly 95\% coverage even for small matrices ($n,m\geq 100$) while 95\% $\mbi{CI}_o$ does the same even for $n,m\geq 50$. In MNAR, both 95\% $\mbi{CI}_o$ and 95\%  $\what{\mbi{CI}}$ need bigger sample sizes ($n,m\geq 150$) for delivering 95\% coverage. The closeness and decreasing gap of coverage for 95\% $\mbi{CI}_o$ and 95\% $\what{\mbi{CI}}$ indicates empirically the consistency of $\what{\sig}_{\eps}$.

\subsection{Case-study with HeartSteps}

\begin{figure*}[ht]
    \centering
    \begin{tabular}{cc}
         \quad\qquad \textbf{Notification Sent} & \quad \qquad \textbf{Notification Not Sent} \\
        \includegraphics[trim=0in 0.5in 0in 0.5in, clip, width=0.45\textwidth]{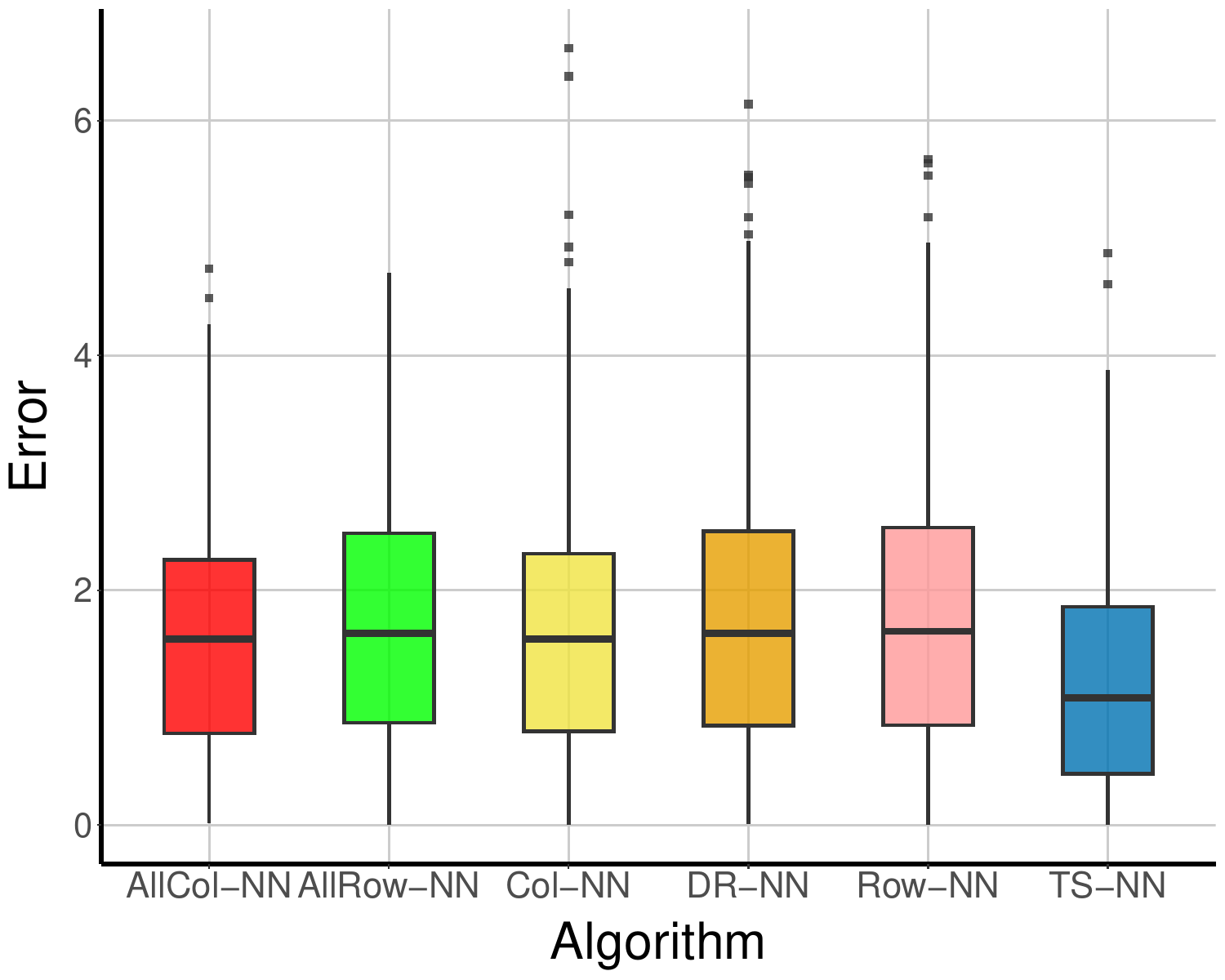} &
        \includegraphics[trim=0in 0.5in 0in 0.5in, clip, width=0.45\textwidth]{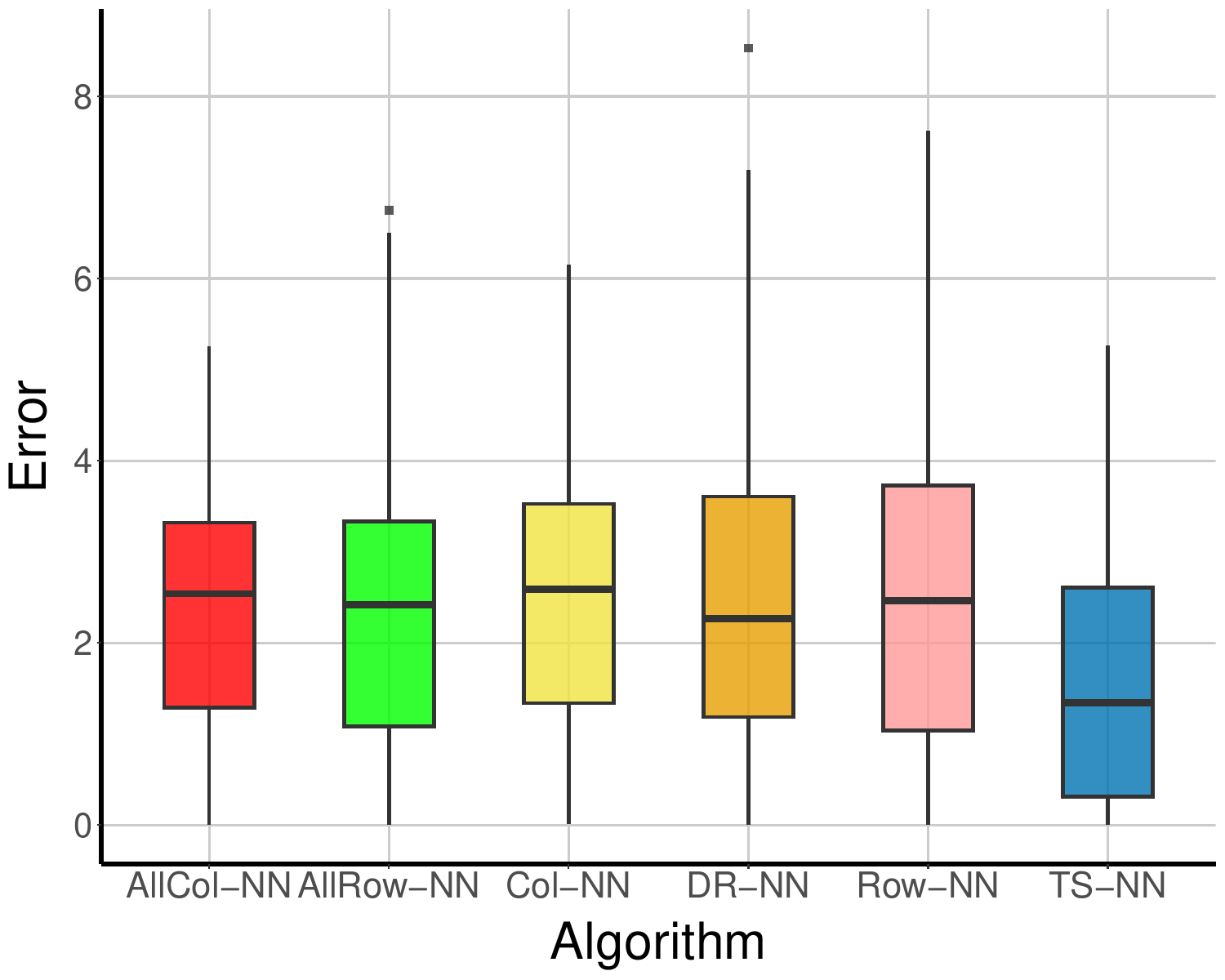}\\[-1mm]
        \ \ \quad Algorithms &\ \  \quad  Algorithms
    \end{tabular}
    \\[2mm]
    \caption{\textbf{Held-out data root MSE (RMSE) for various algorithms  in HeartSteps.} RMSE of USVT and SoftImpute were too high to be competitive and are omitted for plot clarity.}
    \label{fig: HeartStep plots}
\end{figure*}

HeartSteps~\cite{klasnja2019efficacy} is a micro-randomized trial aimed at improving participants' walking activity via mobile notifications, resulting in a health intervention dataset spanning 6 weeks with 37 users. \cite{klasnja2019efficacy} looked for healthy sedentary adults who intended to improve their fitness and walking. They did the recruitment in between August 2015 and January 2016 via fliers and facebook ads. Selected applicants were invited to an interview and were provided with a Jawbone tracker and HeartSteps app installed on their phones for tracking physical activity pre and post-intervention. The underlying recommender system could send notifications (driven by user's context) to user's phone up to five times a day, at user-specific times. For every decision point when the participant is available(refer to \cref{subsec: heartsteps appendix} for details), delivery of the notification was randomized with the following probabilities: 0.4 no notification, 0.3 walking suggestion, and 0.3 anti-sedentary suggestion. This means there is a 60\% chance of sending a notification to a HeartSteps participant at each decision time. The final quantity of interest is the log of the user's step counts within 30 minutes after the decision point.

For us, a matrix of interest will have rows denoting users, columns denoting decision time(for a randomized decision), and entries will be $\log($step counts) after a certain intervention. The intervention is whether a notification (walking or anti-sedentary) was sent or not sent, resulting in 2 types of intervention. 
We will focus on one intervention, say ``notification sent" to understand the generality of missing matrix completion setups that \cref{assump:min_row_col} encapsulates. If a user is available at a decision time point, they receive a notification with a probability of 0.6. Also, a user is available on average around 80\% of the decision times. So, we have around 20\% deterministic missingness in the matrix corresponding to the intervention ``notification sent". This exactly matches the prescription of the example in \cref{rem: MNAR example 2} satisfying \cref{assump:min_row_col} even with deterministic missingness. Following the same chain of arguments, we have condition \cref{eq:cond_p} satisfied over here with $c= 0.6/2 = 0.3$. Therefore, \cref{assump:min_row_col} holds over here, making the HeartSteps a MNAR dataset which can be tackled by TS-NN.

We aim to estimate the counterfactual of both the interventions on each user at every time point, with HeartSteps data from \href{https://github.com/klasnja/HeartStepsV1?tab=readme-ov-file}{this Github repository}. At a macro-level, we want to do matrix completion of the 2 missing matrices arising due to different interventions. We work with $37\times 210$ matrix as majority of the users did not experience more than 210 decision times due to availability issues. 
Also in this real-life case study, we ignore all the context information available in the HeartSteps, which reduces the SNR. We treat this experiment as an empirical demonstration of the usefulness of our methodology for estimation. Since true underlying counterfactuals are unknown to us, we use a 5 fold blocked cross-validation approach to evaluate all the algorithms. We divide the rows/users of the matrix into 5 folds, and in each fold of CV, we hold out the entries in the last 40 decision times of the rows in that particular fold as our test dataset. The remaining entries are used to train our TS-NN and other benchmark algorithms.

For NN-based algorithms, we do not allow an entry to be its self-neighbor (since we are comparing the estimate to the observed entry). After training, the differences between the entries of test dataset and their estimates were recorded and presented as a boxplot in \cref{fig: HeartStep plots}. Both USVT and SoftImpute performed poorly and are dropped from the figure for clarity. For an alternative baseline, we took the simple ``allRow-NN" (and ``allCol-NN") which basically takes all the available rows (and columns) as neighbors. \cref{fig: HeartStep plots} shows how well different NN strategies perform in estimating the held-out test dataset. The small size of the dataset (37 rows and 210 columns) is causing problems for one-sided NNs to show better results than the baseline counterparts. One-sided NNs had a proclivity towards smaller neighborhoods during training which performed poorly during test matrix prediction. Only TS-NN is convincingly beating the baseline algorithms allRow-NN and allCol-NN, with the best median error and least amount of error spread.

\section{Discussion}
\label{sec: Discussion}
We have studied the performance of the two-sided nearest neighbor in the setting where the latent function $f$ is \Holder smooth and the row and column latent factors are unknown. We have seen that it is possible to achieve the optimal minimax non-parametric rate of the oracle algorithm using the two-sided nearest neighbor in certain scalings of rows and columns for a wide range of MCAR and MNAR missingness. Thus the error rate of TS-NN does not suffer from the lack of knowledge of row and column latent factors. The simulations and the real data analysis support the theoretical guarantee derived in this work.

In this work, we analyzed the adaptivity for functions less smooth than Lipschitz. In some settings, functions might have higher-order smoothness (when $f$ belongs in a smooth reproducing kernel Hilbert space). Analyzing whether nearest neighbors adapt to the model smoothness in such settings is an interesting venue for future work.

We assumed independence of row/column latent factors and the exogenous noise, both which are often violated in real life. For example, in movie recommendation systems, a user's perception and rating are both susceptible to their peer group's preference. In other causal panel data settings, the columns of the matrix denote time and the missingness is dependent on the assigned treatments. For settings with such network interference or dependence over time arises due to spillover effects or due to sequentially assigned treatments, designing a correctly adjusted TS-NN is another interesting direction.

\acks{Give acknowledgements}


\newpage

\appendix





\section{Proof of \Cref{thm:main_result}}
\label{sec: appendix A}
For implementing the algorithm we partition the data-set into two subsets and then use one part for learning the row and column distances, and use the other part for generating final predictions. For improving readability, we perform all the computations on a single data-set (without sample splitting) in the proof. However all the computations will continue to hold even if we partition the data-set. From here-on for notational simplicity, we write $f(i, j)$ for $f(u_i, v_j)$, ${d}^2(i, i' )$ for ${d}_{row}^2(i, i' )$ and ${d}^2(j, j' )$ for ${d}_{col}^2(j, j' )$. In the algorithm, we also perform a sub-sampling procedure after picking the full set of nearest neighbors $\cn_{row}^s(i)$ and $\cn_{col}^s(j)$,
\begin{align}
\label{eq: pre-sub-sampling ngd definition}
        \cn_{row}^s(i) &=\{i' \in [n]: \widehat{d}^2(i, i' )\leq \eta_{row}^2\},\\
        \cn_{col}^s(j) &=\{j' \in [m]: \widehat{d}^2(j, j')\leq \eta_{col}^2\}.
\end{align}

The sub-sampling is done by thresholding $|\cn_{row}^s(i)|$ at $\tau n \eta_{row}^{d_1/\lambda}$ and $|\cn_{col}^s(j)|$ at $\tau m\eta_{col}^{d_2/\lambda}$ where $\tau > 1$. We name the subsampled nearest neighbors from $|\cn_{row}^s(i)|$ and $|\cn_{col}^s(j)|$ as $|\cn_{row}(i)|$ and $|\cn_{col}(j)|$ respectively. Recall the definition of $\deno$ from Step-2 of the TS-NN($\mbi{\eta}$) algorithm in \Cref{sec:algorithm}. For the ease of proof we define the following, 
\begin{align}
\label{eq: denoised estimator}
    \Tilde{\theta}_{i, j}=\frac{\sum_{(i',j')\in \deno}f(u_{i'}, v_{j'})}{|\deno|} = \frac{\sum_{(i',j')\in \deno}f(i', j')}{|\deno|}.  
\end{align}
Note that $ \Tilde{\theta}_{i, j}$ is essentially $\widehat \theta_{i, j}$ where the noisy signals $X_{i',j'}$ appearing in the numerator of $\widehat \theta_{i, j}$ is replaced by the ground-truths $\theta_{i',j'} = f(u_{i'}, v_{j'})$. The MSE can then be decomposed into a bias and a variance term as follows, 
\begin{align}
\label{eq:mse_decomp}
    \mathrm{MSE} &= \frac{1}{mn} \sum_{i \in [n], j \in [m]} \left( \widehat \theta_{i, j} - f(i, j) \right)^2 \\
    &= \frac{1}{mn}\sum_{i \in [n], j \in [m]}  \left(\left(\Tilde{\theta}_{i, j} - f(i, j) \right) + \left(\widehat \theta_{i, j} - \Tilde{\theta}_{i, j} \right)   \right) ^2 \\
    &\leq \frac{2}{mn}\sum_{i \in [n], j \in [m]} \left(\Tilde{\theta}_{i, j} - f(i, j) \right)^2 + \frac{2}{mn} \sum_{i \in [n], j \in [m]}\left(\widehat \theta_{i, j} - \Tilde{\theta}_{i, j} \right)^2 \\
    &= 2  \mathbb{B} + 2 \mathbb{V}. 
\end{align}
We prove the theorem by bounding the bias ($\mathbb{B}$) and the variance ($\mathbb{V}$) term separately. The main tool that we repeatedly use for bounding these terms is the following distance concentration lemma (see proof in \cref{proof_of_lem:nn_conc_lemma}).
\begin{lemma}[Distance concentration]
    \label{lem:nn_conc_lemma}
Under \cref{asump_row_col,asump_low_rank,asump_bounded_noise,assump:MNAR}, for any $\delta \in (0,1]$ we have, 
\begin{align}
\label{eq:nn_conc_prob}
    \mathbb{P}(E_1 \cap E_2 | \mathcal{U}, \mathcal{V}) \geq  1- \delta,
\end{align}
where,
    \begin{align}
    \label{eq:nn_conc_bound}
   E_1 = & \left\{   \sup_{i \neq i'}|\widehat d^2(i,i') - d^2(i,i')| \leq \frac{\Delta_r}{\sqrt{\Bar{\mbi{p}}_{i,i'} m}} \right\}, \\
    \quad E_2 = &\left\{ \sup_{j \neq j'}|\widehat d^2(j,j') - d^2(j,j')| \leq \frac{\Delta_c}{\sqrt{\Bar{\mbi{p}}_{j,j'} n}} \right\}.
    \end{align}
Here the mean of the vectors $\mbi{p}_{i,i'} = [p_{i,j}p_{i',j}]_{j = 1}^m$ and $\mbi{p}_{j,j'} = [p_{i,j}p_{i,j'}]_{i = 1}^n$ are denoted by $\Bar{\mbi{p}}_{i,i'}$ and $\Bar{\mbi{p}}_{j,j'}$ respectively. $\Delta_r$ and $\Delta_c$ are constants free of $m,n$. 
\end{lemma}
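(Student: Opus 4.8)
The plan is to prove the row event $E_1$ (the column event $E_2$ being symmetric, with $m,n$ and the roles of rows/columns swapped) by expanding $\widehat d^2(i,i')$ into one signal part and two noise parts, concentrating each by conditioning appropriately, and finally taking a union bound over the $\binom{n}{2}$ row pairs. Fix $i\neq i'$ and abbreviate $w_j = A_{i,j}A_{i',j}$, $N=\sum_{j\in[m]}w_j$, $\Delta f_j = f(i,j)-f(i',j)$, and $\Delta\eps_j = \eps_{i,j}-\eps_{i',j}$. Since $(X_{i,j}-X_{i',j})^2 = \Delta f_j^2 + 2\Delta f_j\Delta\eps_j + \Delta\eps_j^2$ and $\mathbb{E}[\Delta\eps_j^2]=2\sigma^2$, the $2\sigma^2$ correction in the definition of $\widehat d^2$ cancels the noise mean and yields
\begin{align}
\widehat d^2(i,i') - d^2(i,i') = \underbrace{\frac1N\sum_j w_j\Delta f_j^2 - \frac1m\sum_j\Delta f_j^2}_{T_1} + \underbrace{\frac2N\sum_j w_j\Delta f_j\Delta\eps_j}_{T_2} + \underbrace{\frac1N\sum_j w_j\bigparenth{\Delta\eps_j^2-2\sigma^2}}_{T_3}.
\end{align}

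I would first control the random denominator: conditioned on $\mathcal U,\mathcal V$ the $w_j$ are independent $\trm{Ber}(p_{i,j}p_{i',j})$ with $\mathbb E[N\mid\mathcal U,\mathcal V]=m\Bar{\mbi p}_{i,i'}$, so a multiplicative Chernoff bound gives $N\geq\tfrac12 m\Bar{\mbi p}_{i,i'}$ off an event of probability $\exp(-c\,m\Bar{\mbi p}_{i,i'})$, making every $1/N$ factor at most $2/(m\Bar{\mbi p}_{i,i'})$. The noise terms $T_2,T_3$ are then handled by further conditioning on the entire missingness pattern $\{A_{i,j}\}$ (hence on $N$): by \cref{asump_bounded_noise} the noise is independent of the indicators, so conditionally $T_2$ is a weighted sum of independent mean-zero sub-gaussians with coefficients $|w_j\Delta f_j|\leq 2M$, giving $|T_2|\lesssim M\sigma\sqrt{\log(1/\delta')/N}$, while $T_3$ is a weighted sum of the centered squared sub-gaussians $\Delta\eps_j^2-2\sigma^2$, which are sub-exponential, so Bernstein's inequality gives $|T_3|\lesssim\sigma^2\sqrt{\log(1/\delta')/N}+\sigma^2\log(1/\delta')/N$. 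On $\{N\gtrsim m\Bar{\mbi p}_{i,i'}\}$ both are $O(1/\sqrt{m\Bar{\mbi p}_{i,i'}})$ with probability $1-\delta'$.

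The remaining signal term $T_1$ carries no noise, so here I would instead concentrate over the randomness of the $w_j$ (without conditioning on them), using $\Delta f_j^2\leq 4M^2$ and a Bernstein bound on the numerator together with the Chernoff bound on $N$, to show $\frac1N\sum_j w_j\Delta f_j^2$ concentrates around its conditional mean at rate $O(1/\sqrt{m\Bar{\mbi p}_{i,i'}})$. Assembling the three bounds, setting $\delta'=\delta/(2n^2)$ (resp.\ $\delta/(2m^2)$ for the columns), and union bounding over all pairs yields $E_1\cap E_2$ with probability at least $1-\delta$; the resulting $\sqrt{\log(n/\delta)}$ and $\sqrt{\log(m/\delta)}$ factors are absorbed into $\Delta_r,\Delta_c$.

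The hard part lives in $T_1$. The delicate point is that, conditioned on the latent factors, the mean of $\frac1N\sum_j w_j\Delta f_j^2$ is the $q_j$-weighted average $\bigparenth{\sum_j q_j\Delta f_j^2}\big/\bigparenth{\sum_j q_j}$ with $q_j=p_{i,j}p_{i',j}$, which under MNAR need not coincide with the uniform oracle distance $d^2(i,i')=\frac1m\sum_j\Delta f_j^2$; this discrepancy vanishes in the MCAR case ($q_j\equiv p^2$) but under \cref{assump:MNAR} it is exactly where the missingness structure must be exploited to keep $T_1$ at the stated $O(1/\sqrt{m\Bar{\mbi p}_{i,i'}})$ scale. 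A secondary, more mechanical difficulty is the ratio form with a random denominator, which couples numerator and denominator and forces the Chernoff lower bound on $N$ before any concentration inequality can be applied.
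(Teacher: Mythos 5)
Your decomposition into a signal term $T_1$, a cross term $T_2$, and a centered noise-square term $T_3$ is a genuinely different route from the paper's. The paper never compares $\widehat d^2(i,i')$ to $d^2(i,i')$ directly: it introduces the population quantity $\rho^*_{i,i'} = \mathbb{E}_{v}[(f(u_i,v)-f(u_{i'},v))^2\mid\mathcal U]$, shows separately that $\widehat d^2(i,i')$ and $d^2(i,i')$ each concentrate around $\rho^*_{i,i'}$, and concludes by the triangle inequality. The concentration of $\widehat d^2$ is obtained by reindexing the sum by the stopping times $t_{(l)}(i,i')$ at which an entry is observed in both rows, checking that the summands $W_l=\mathbf 1(t_{(l)}\leq m)[(X_{i,t_{(l)}}-X_{i',t_{(l)}})^2-2\sigma^2-\rho^*_{i,i'}]$ form a bounded martingale difference sequence, applying Azuma, and then lower-bounding the common-observation count $T_{i,i'}$ by a weighted-Bernoulli concentration bound; your handling of $T_2$ and $T_3$ (conditioning on the missingness pattern, sub-gaussian/sub-exponential tail bounds) plays essentially the same role as that Azuma step and is fine.

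The genuine gap is the one you yourself flag in $T_1$, and it is fatal to the proof as written rather than a deferred technicality. Conditioned on $\mathcal U,\mathcal V$ and under \cref{assump:MNAR}, the quantity $\frac1N\sum_j w_j\Delta f_j^2$ concentrates around the weighted average $\bigl(\sum_j q_j\Delta f_j^2\bigr)/\bigl(\sum_j q_j\bigr)$ with $q_j=p_{i,j}p_{i',j}$, and the discrepancy between this and $d^2(i,i')=\frac1m\sum_j\Delta f_j^2$ is a systematic bias of order $O(1)$ for generic latent-factor-dependent $p_{i,j}$ — it does not shrink with $m$, so no concentration inequality applied to $T_1$ can bring it down to the claimed $\Delta_r/\sqrt{\Bar{\mbi{p}}_{i,i'}m}$ scale. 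You correctly observe that "the missingness structure must be exploited" here, but you do not supply the exploiting step, so the argument does not close. The paper's mechanism for avoiding this is precisely the detour through $\rho^*_{i,i'}$: both the observed-weighted average and the uniform average are compared to the expectation over a fresh draw of the column latent factor, so the source of concentration is the i.i.d.\ randomness of the $v_j$ from \cref{asump_row_col} rather than the randomness of the indicators, and the selection effect of the missingness is absorbed into the martingale structure indexed by the stopping times. Any repair of your argument would need either an analogous recentering or an explicit additional assumption controlling $\bigl(\sum_j q_j\Delta f_j^2\bigr)/\bigl(\sum_j q_j\bigr)-\frac1m\sum_j\Delta f_j^2$.
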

Note that for MCAR missingness, the bound \cref{eq:nn_conc_bound} simplifies with $\Bar{\mbi{p}}_{i,i'} = p^2$ and $\Bar{\mbi{p}}_{j,j'} = p^2$. We use the assumption of \Holder-continuity of the latent function $f$ (\cref{asump_low_rank}) to obtain lower bounds on the number of nearest rows $\cn_{row}^s(i)$ and columns $\cn_{col}^s(j)$ under the events $E_1, E_2$. 
\begin{lemma}
\label{lem:bound_nn} 
The full set of row and column nearest neighbors before subsampling ($|\cn_{row}^s(i)|$ and $|\cn_{col}^s(j)|$ respectively) satisfy the following bounds,
\begin{align}
\label{eq:nn_bounds}
&\Prob\parenth{ |\cn_{row}^s(i)| \geq \parenth{1-\delta}n\parenth{\frac{\eta_{row}^2-\frac{\Delta_r}{p\sqrt{m}}}{L^2}}^{\frac{d_1}{2\lambda}} \mbox{ } \mathrm{for} \mbox{ } i \in [n]\Bigg| E_1, E_2, \mathcal{U}, \mathcal{V}} \\
&\geq  1-n\exp\parenth{-\frac{\delta ^2n}{2}\parenth{\frac{\eta_{row}^2-\frac{\Delta_r}{p\sqrt{m}}}{L^2}}^{\frac{d_1}{2\lambda}}}, \\
 & \Prob\parenth{ |\cn_{col}^s(j)| \geq \parenth{1-\delta}m\parenth{\frac{\eta_{col}^2-\frac{\Delta_c}{p\sqrt{n}}}{L^2}}^{\frac{d_2}{2\lambda}} \mbox{ } \mathrm{for} \mbox{ } j \in [m]\Bigg| E_1, E_2, \mathcal{U}, \mathcal{V}}\\
 &\geq  1-m\exp\parenth{-\frac{\delta^2m}{2}\parenth{\frac{\eta_{col}^2-\frac{\Delta_c}{p\sqrt{n}}}{L^2}}^{\frac{d_2}{2\lambda}}}.
\end{align}
\end{lemma}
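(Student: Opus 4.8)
The plan is to lower bound $|\cn_{row}^s(i)|$ by the number of row latent factors lying in a small $\ell_\infty$-ball around $u_i$, and then control this count via a multiplicative Chernoff bound over the draw of the latent factors. First I would use the distance-concentration event $E_1$ from \Cref{lem:nn_conc_lemma} to replace the estimated-distance membership criterion by a true-distance one. Under $E_1$ (with $\Bar{\mbi{p}}_{i,i'} = p^2$ in the MCAR setting so that the slack is $\Delta_r/(p\sqrt{m})$), we have $\widehat{d}^2(i,i') \leq d^2(i,i') + \Delta_r/(p\sqrt{m})$, so every $i'$ with $d^2(i,i') \leq \eta_{row}^2 - \Delta_r/(p\sqrt{m})$ automatically satisfies $\widehat{d}^2(i,i') \leq \eta_{row}^2$ and hence belongs to $\cn_{row}^s(i)$. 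This yields the (conditional on $E_1$) deterministic inequality $|\cn_{row}^s(i)| \geq |\{i' \in [n] : d^2(i,i') \leq \eta_{row}^2 - \Delta_r/(p\sqrt{m})\}|$.

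Next I would invoke the \Holder continuity of $f$ (\Cref{asump_low_rank}) to pass from the functional distance to a distance between the latent factors. Since the two arguments share the same column factor $v_j$, we get $|f(u_i,v_j) - f(u_{i'},v_j)| \leq L\norm{u_i - u_{i'}}_\infty^{\lambda}$ for every $j$, and averaging the squares gives $d^2(i,i') \leq L^2 \norm{u_i - u_{i'}}_\infty^{2\lambda}$. Consequently, writing $r = \big((\eta_{row}^2 - \Delta_r/(p\sqrt{m}))/L^2\big)^{1/(2\lambda)}$, any $i'$ with $\norm{u_i - u_{i'}}_\infty \leq r$ meets the true-distance condition, so $|\cn_{row}^s(i)| \geq N_i := |\{i' \in [n] : \norm{u_i - u_{i'}}_\infty \leq r\}|$. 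Observe that $r^{d_1}$ is exactly the factor $\big((\eta_{row}^2 - \Delta_r/(p\sqrt{m}))/L^2\big)^{d_1/(2\lambda)}$ appearing in the claimed bound.

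It then remains to show that $N_i$ concentrates above $(1-\delta)\, n\, r^{d_1}$, where the relevant randomness is over the draws $u_{i'} \distiid \Unif[0,1]^{d_1}$. Conditioning on $u_i$, in each coordinate $k$ the interval $[u_{i,k}-r,\, u_{i,k}+r]\cap[0,1]$ has length at least $r$ (the extreme case being $u_{i,k}$ at an endpoint of $[0,1]$), so by independence across coordinates each $u_{i'}$ lands in the $\ell_\infty$-ball of radius $r$ about $u_i$ with probability at least $r^{d_1}$. Counting $i'=i$ as a sure hit, $N_i$ stochastically dominates a $\Bin(n, r^{d_1})$ variable, and the multiplicative Chernoff lower-tail bound gives $\Prob\big(N_i \leq (1-\delta)\, n\, r^{d_1}\big) \leq \exp(-\delta^2 n r^{d_1}/2)$. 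A union bound over $i \in [n]$ then produces the stated failure probability $n\exp(-\tfrac{\delta^2 n}{2} r^{d_1})$, and the column bound follows verbatim by the symmetric argument with $(m, d_2, \eta_{col}, \Delta_c, \sqrt{n})$ replacing $(n, d_1, \eta_{row}, \Delta_r, \sqrt{m})$.

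The step I expect to be the main obstacle is the boundary/volume control: one must ensure the per-point ball probability is lower bounded by $r^{d_1}$ (rather than the interior value $(2r)^{d_1}$) \emph{uniformly} over all $u_i \in [0,1]^{d_1}$, and that $r \leq 1$ so the interval-length argument is valid, which holds in the relevant regime where $0 < \eta_{row}^2 - \Delta_r/(p\sqrt{m}) \leq L^2$. A secondary bookkeeping point is the layering of conditioning: the events $E_1, E_2$ are driven by the noise and missingness, whereas $N_i$ is driven by the latent factors, so the two sources of randomness can be treated in separate conditioning layers without interference.
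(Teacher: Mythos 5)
Your proposal is correct and follows essentially the same route as the paper's proof: use the event $E_1$ to pass from estimated to true distances, invoke \Holder continuity to reduce to an $\ell_\infty$-ball count in latent-factor space, establish stochastic domination of that count by a $\mathrm{Bin}(n, r^{d_1})$ variable via the lower bound $r^{d_1}$ on the per-point ball probability, and finish with a multiplicative Chernoff lower-tail bound plus a union bound over rows. Your explicit treatment of the boundary/volume control (the uniform lower bound $r^{d_1}$ rather than $(2r)^{d_1}$, and the requirement $r\leq 1$) is a detail the paper leaves implicit, but the argument is the same.
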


We consider the event of lower bounding the number of nearest rows and columns in the subsampled neighborhoods $\etarow$ and $\etacol$ as follows
\begin{align}
    A_1(z_1) = \bigcap_i \braces{|\etarow|\geq z_1}, 
   \quad A_2(z_2) = \bigcap_j \braces{|\etacol|\geq z_2}.
\end{align}
where $z_1,z_2$ denote the lower bounds of $|\cn_{row}^s(i)|$ and $|\cn_{col}^s(j)|$ derived in \Cref{lem:bound_nn}. We will now show that  $A_1(z_1) = \bigcap_i \{|\cn_{row}^s(i)|\geq z_1\}$ and $A_2(z_2) = \bigcap_j \{|\cn_{col}^s(j)|\geq z_2\}$ under the events $E_1,E_2$. In the regime $\eta_{row}^2 \geq \Delta_r/\sqrt{m}$ and $\eta_{col}^2 \geq \Delta_c/\sqrt{n}$ we have the equivalence, 
\begin{align}
\begin{cases}
   |\cn_{row}^s(i)| \geq z_1  &\longleftrightarrow \quad |\cn_{row}(i)| \geq z_1 ,\\
    |\cn_{col}^s(j)| \geq z_2  &\longleftrightarrow \quad |\cn_{col}(j)| \geq z_2.  
\end{cases}
\end{align}
The above equivalence holds because the thresholding of the size of $\cn_{row}^s(i)$ is at $\tau n \eta_{row}^{d_1/\lambda}$ which is strictly larger than the lower bound $z_1$. Similarly the thresholding of the size of $\cn_{col}^s(j)$ is at $\tau n \eta_{col}^{d_2/\lambda}$ which is strictly larger than the lower bound $z_2$. This equivalence proves our assertion.

\begin{proof}[Proof of \Cref{lem:bound_nn}]
 We show that under event $E_1$, the first probability statement holds true. The proof of the other bound is similar. We observe that, 
 \begin{align}
  |\cn_{row}^s(i)| & \stackrel{\cref{eq: pre-sub-sampling ngd definition},\cref{eq:nn_conc_bound}}{\geq} \sum_{i'\in [n]}\indic{{{d}^2(i,i') + \Delta_r/(p\sqrt{m})\leq \eta_{row}^2}} \\
  & =  \sum_{i'\in [n]}\indic{{d}^2(i,i') \leq \eta_{row}^2 - (\Delta_r/(p\sqrt{m}))} \\
  & \stackrel{(A\ref{asump_low_rank})}{\geq}  \sum_{i' \in [n]}\indic{L^2\|u_i - u_{i'}\|^{2\lambda} \leq \eta_{row}^2 - (\Delta_r/(p\sqrt{m}))} \\
  & = \sum_{i' \in [n]}\mathbb{I}\left[{\|u_i - u_{i'}\| \leq \left(\frac{\eta_{row}^2 - (\Delta_r/p\sqrt{m})}{L^2} \right)^{1/(2\lambda)}} \right]
\end{align}

Since $u_i \sim \Unif([0,1]^{d_1})$ we get the following by applying Chernoff bound [\cite{hagerup1990guided}] on $\|u_i - u_{i'}\|$,
\begin{align}
\mathbb{P}\left( \|u_i - u_{i'}\| \leq \left(\frac{\eta_{row}^2 - (\Delta_r/(p\sqrt{m}))}{L^2} \right)^{1/(2\lambda)}\right) \geq \left(\frac{\eta_{row}^2 - (\Delta_r/(p\sqrt{m}))}{L^2} \right)^{d_1/(2\lambda)}.
\end{align}
This implies that $\cn_{row}(i)$ stochastically dominates $\mbox{Bin}(n, q)$ distribution where, 
\begin{align}
q = \left(\frac{\eta_{row}^2 - (\Delta_r/(p\sqrt{m}))}{L^2} \right)^{d_1/(2\lambda)}.
\end{align} 
The proof of \eqref{eq:nn_bounds} is completed by using Chernoff bound [\cite{hagerup1990guided}] on this binomial random variable and then using union bound to account for all the rows.
\end{proof}

Let us first analyse the variance part. We consider the event $A_3$ for $0 <\delta <1$ where, 
\begin{align}
    A_3  = \braces{|\deno| \geq (1 - \delta)p|\cn_{row}(i)||\cn_{col}(j)| \mbox{ for all } i,j \in [n] \times [m] }
\end{align}
We apply Chernoff bound [\cite{hagerup1990guided}] on the indicator random variables $A_{i,j}$ and union bound (to account for all $i,j \in [n] \times [m]$) to show that, 
\begin{align}
\label{eq: double ngd lower bound in terms of n_row and n_col}
    \mathbb{P}(A_3) \geq 1 - mn\exp \left(- \frac{\delta^2|\cn_{row}(i)||\cn_{col}(j)|p}{2} \right).
\end{align}
Under the events $A_1(z_1),A_2(z_2),A_3$, applying Hoeffding's inequality [\cite{bentkus2004hoeffding}] on the noise terms $\epsilon_{i,j}$, yield the following for all $i, j \in [n] \times [m]$ ,
\begin{align}
\label{eq:var_proof_hoeffding}
     &\Prob\parenth{|\Tilde{\theta}_{i, j}-\widehat{\theta}_{i, j}|> \zeta\bigg|A_1(z_1),A_2(z_2),A_3}\leq 2e^{-\frac{\zeta^2(1-\delta)pz_1z_2}{2\sigma^2}}\\
\implies& \mathbb{P} \left( |\Tilde{\theta}_{i, j}-\widehat{\theta}_{i, j}|^2> \frac{2\sigma^2 \log(2/\delta)}{(1 - \delta)pz_1z_2}\bigg|A_1(z_1),A_2(z_2),A_3\right) \leq \delta,     
\end{align}
for some $0 < \delta <1$. This implies that conditioned on the events $A_1(z_1),A_2(z_2),A_3$, the following bound holds for the variance term $\mathbb{V}$,
\begin{align}
    &\mathbb{P}\parenth{\mathbb{V} \leq \frac{2\sigma^2 \log(2/\delta)}{(1 - \delta)pz_1z_2} \Bigg | A_1(z_1),A_2(z_2),A_3  } \\
     &\stackrel{\eqref{eq:mse_decomp}}{=} \mathbb{P}\parenth{\frac{1}{mn} \sum_{i \in [n], j \in [m]}\left(\widehat \theta_{i, j} - \Tilde{\theta}_{i, j} \right)^2 \leq \frac{2\sigma^2 \log(2/\delta)}{(1 - \delta)pz_1z_2} \Bigg | A_1(z_1),A_2(z_2),A_3  } \\
    &\stackrel{\eqref{eq:var_proof_hoeffding}}{\geq} 1- \delta.
\end{align}
Now that we have managed to bound the variance term ($\mathbb{V}$), we focus on the bias term ($\mathbb{B}$) in the decomposition of MSE. 


We start by decomposing the bias term into further two parts, 
\begin{align}
\label{eq:bias_decomp}
    \mathbb{B} 
    \stackrel{\cref{eq: denoised estimator},\cref{eq:mse_decomp}}{=}&\frac{1}{nm}\sum_{i \in [n], j \in [m]}\parenth{\parenth{\frac{\sum_{i' \in \cn_{row}(i)}\sum_{j'\in\cn_{col}(t)}f\parenth{i' , j'}A_{i' , j'}}{|\deno|}}-f(i, j)}^2\\
    =&\frac{1}{nm}\sum_{i \in [n], j \in [m]}\parenth{\frac{\sum_{i'\in \cn_{row}(i)}\sum_{j'\in\cn_{col}(j)}\parenth{f\parenth{i' , j'}-f\parenth{i, j}}A_{i', j'}}{|\deno|}}^2\\
    \stackrel{(i)}{\leq} &\frac{1}{nm}\sum_{i \in [n], j \in [m]}\frac{1}{|\deno|}\parenth{\sum_{i'\in \cn_{row}(i)}\sum_{j'\in\cn_{col}(t)}\parenth{f\parenth{i', j'}-f\parenth{i, j}}^2A_{i', j'}}\\
    =&\frac{1}{nm}\sum_{i \in [n], j \in [m]}\frac{1}{|\deno|}\parenth{\sum_{i'\in \cn_{row}(i)}\sum_{j'\in\cn_{col}(j)}\parenth{f\parenth{i' , j'}-f(i' , j)+f(i' , j)-f\parenth{i, j}}^2A_{i', j'}}\\
    \stackrel{(ii)}{\leq} &\frac{2}{nm}\sum_{i \in [n], j \in [m]}\frac{1}{|\deno|}\parenth{\sum_{i'\in \cn_{row}(i)}\sum_{j'\in\cn_{col}(j)}\parenth{f\parenth{i', j'}-f(i' , j)}^2A_{i', j'} +\parenth{f(i' , j)-f\parenth{i, j}}^2A_{i', j'}}\\
     =& \mathbb{B}_1 + \mathbb{B}_2.
\end{align}
Here step-$(i)$ follows from the AM-QM (arithmetic mean - quadratic mean) inequality that $(\sum_{i = 1}^n a_i /n)^2 \leq \sum_{i = 1}^n a_i^2/n$. Step-$(ii)$ follows from the basic inequality $(a + b)^2 \leq 2(a^2 + b^2)$. Thus the bias term $\mathbb{B}$ can be further decomposed into two terms viz $\mathbb{B}_1$ and $\mathbb{B}_2$. The $\mathbb{B}_1$ term arises because of averaging the response across the neighboring rows and the $\mathbb{B}_2$ term arises because of averaging the response across the neighboring columns. We bound the bias term by separately obtaining bounds for $\mathbb{B}_1$ and $\mathbb{B}_2$. We start by obtaining an upper bound (with high probability) on $\mathbb{B}_2$ under the events $E_1, E_2, A_1(z_1), A_2(z_2), A_3$.  
\begin{align}
\label{eq: 2nd term of bias}
  \bias_2  \stackrel{\eqref{eq:bias_decomp}}{=}&\frac{2}{nm}\sum_{i \in [n], j \in [m]}\frac{1}{|\deno|}\parenth{\sum_{i' \in \cn_{row}(i)}\sum_{j' \in\cn_{col}(j)}\parenth{f\parenth{i' , j}-f\parenth{i , j}}^2A_{i' , j'}}\\
    =&\frac{2}{nm} \sum_{i \in [n], j \in [m]}\frac{1}{|\deno|} \parenth{\sum_{i' \in \cn_{row}(i)}\parenth{f\parenth{i' , j}-f\parenth{i , j}}^2\sum_{j'\in\cn_{col}(j)}A_{i' , j'}}\\
    \stackrel{\cref{eq: double ngd lower bound in terms of n_row and n_col}}{\leq} & \frac{2}{nm}\sum_{i \in [n], j \in [m]}\frac{1}{(1-\delta)p|\cn_{row}(i)||\cn_{col}(j)|}\parenth{\sum_{i' \in \cn_{row}(i)}\parenth{f\parenth{i' , j}-f\parenth{i , j}}^2|\cn_{col}(j)|p(1+\delta)}.\label{eq: bias2 1st stop}
\end{align}
The inequality follows because with probability at least $ 1- \delta$ we have $\sum_{j' \in \cn_{col}(j)} A_{i' , j'} \leq | \cn_{col}(j)|p(1 + \delta)$ (by Chernoff bound[\cite{hagerup1990guided}] on $A_{i',j'}$'s) for all $i' \in [n]$. 

\begin{align}
    \cref{eq: bias2 1st stop}= & \frac{2}{nm}\sum_{i \in [n]}\sum_{i' \in \cn_{row}(i)}\frac{1+\delta}{(1-\delta)|\cn_{row}(i)|}\parenth{\sum_{ j \in [m]}\parenth{f\parenth{i' , j}-f\parenth{i , j}}^2}\\
    \leq & \frac{2}{n}\sum_{i \in [n]}\sum_{i' \in \cn_{row}(i)} \frac{1+\delta}{(1-\delta)|\cn_{row}(i)|}d^2(i , i')\\
    \stackrel{\cref{eq:nn_conc_prob}}{\leq} & \frac{2}{n}\sum_{i \in [n]}\sum_{i' \in \cn_{row}(i)}\frac{1+\delta}{(1-\delta)|\cn_{row}(i)|}\parenth{\what{d}^2(i , i')+\frac{\Delta_r}{p\sqrt{m}}}\\
    \stackrel{\cref{eq:defn_nrowcol}}{\leq} & \frac{2}{n}\sum_{i \in [n]}\sum_{i' \in \cn_{row}(i)}\frac{1+\delta}{(1-\delta)|\cn_{row}(i)|}\parenth{\eta_{row}^2+\frac{\Delta_r}{p\sqrt{m}}}\\
    = & 2\parenth{\eta_{row}^2+\frac{\Delta_r}{p\sqrt{m}}}\frac{1+\delta}{1-\delta}.
\end{align}

Similarly we can show that with a high probability the following bound holds for the first term of the bias decomposition $\bias_1$ under the regime $\eta_{row}^2 \geq \Delta_r/\sqrt{m}$ and under the events $E_1, E_2, A_1(z_1), A_2(z_2), A_3$, 
\begin{align}
\label{eq:bias_1}
  \bias_1 \stackrel{\eqref{eq:bias_decomp}}{=}  & \frac{2}{mn} \sum_{i \in [n], j \in [m]} \frac{1}{|\deno|} \sum_{i' \in \cn_{row}(i)} \sum_{j' \in \cn_{col}(j)} (f(i', j') - f(i' , j))^2 A_{i' , j'} \\
    \stackrel{\eqref{eq: double ngd lower bound in terms of n_row and n_col}}{\leq} & \frac{2}{mn} \sum_{i \in [n], j \in [m]} \frac{1}{|\cn_{row}(i)||\cn_{col}(j)|(1 - \delta)p} \sum_{i' \in \cn_{row}(i)} \sum_{j' \in \cn_{col}(j)} (f(i', j') - f(i' , j))^2 A_{i', j'} \\
   \stackrel{\eqref{eq:nn_bounds}}{\leq} &   \frac{2}{mn} \sum_{i \in [n], j \in [m]} \frac{1}{\parenth{1-\delta}n\parenth{(\eta_{row}^2-\frac{\Delta_r}{p\sqrt{m}})/L^2}^{\frac{d_1}{2\lambda}}|\cn_{col}(j)|(1 - \delta)p} \sum_{i' \in \cn_{row}(i)} \sum_{j' \in \cn_{col}(j)} (f(i', j') - f(i' , j))^2 A_{i', j'} \\
    \stackrel{(i)}{\leq} &   \frac{2}{mn} \sum_{i \in [n], j \in [m]} \frac{L^{d_1/\lambda}}{\parenth{1-\delta}n\parenth{\eta_{row}^2-\frac{\Delta_r}{p\sqrt{m}}}^{\frac{d_1}{2\lambda}}|\cn_{col}(j)|(1 - \delta)p}  \sum_{j' \in \cn_{col}(j)} (f(i,j') - f(i, j))^2 A_{i , j'} | \cn_{row}(i)|
\end{align}
Here step-$(i)$ follows by aggregating all the terms which belong to set of neighboring rows of $u_i$. We can further simplify the bound as follows, 
\begin{align}
  \eqref{eq:bias_1}  =&  \frac{2}{mn} \sum_{i \in [n], j \in [m]} \frac{L^{d_1/\lambda}}{\parenth{1-\delta}n\parenth{\eta_{row}^2-\frac{\Delta_r}{p\sqrt{m}}}^{\frac{d_1}{2\lambda}}|\cn_{col}(j)|(1 - \delta)p}  \sum_{j' \in \cn_{col}(j)} (f(i,j') - f(i, j))^2 A_{i , j'} | \cn_{row}(i)| \\
    \stackrel{(i)}{\leq} &  \frac{2}{mn} \sum_{j \in [m]} \frac{\tau n \eta_{row}^{d_1/\lambda}L^{d_1/\lambda}}{\parenth{1-\delta}n\parenth{\eta_{row}^2-\frac{\Delta_r}{p\sqrt{m}}}^{\frac{d_1}{2\lambda}}|\cn_{col}(j)|(1 - \delta)p}  \sum_{j' \in \cn_{col}(j)} \sum_{i \in [n]} (f(i,j') - f(i, j))^2A_{i,j'}  \\
    \stackrel{\eqref{eq:nn_conc_prob}, (ii)}{\leq} &  \frac{2}{mn} \sum_{ j \in [m]} \frac{\tau'}{(1 - \delta)^2|\cn_{col}(j)|p}  \sum_{j' \in \cn_{col}(j)} (\sum_{i \in [n]} A_{i,j'}) \left(d^2(j, j') + \frac{\Delta_c}{p\sqrt{n}} \right) \\
    \label{bias_1_2}
    \stackrel{(iii)}{\leq} & \frac{2}{mn} \sum_{ j \in [m]} \frac{\tau'}{(1 - \delta)^2 |\cn_{col}(j)|p}  \sum_{j'\in \cn_{col}(j)} (1+ \delta) np \left(d^2(j, j') + \frac{\Delta_c}{p\sqrt{n}} \right).
\end{align}
The step-$(i)$ is a consequence of the fact that $|\cn_{row}(i)| \leq \tau n \eta_{row}^{d_1/\lambda}$ because of the subsampling procedure. In the inequality $(ii)$ we used the fact that under the regime $\eta_{row}^2 \geq \Delta/\sqrt{m} $ there exists a constant $\tau' > 0$ such that, 
\begin{align}
\frac{\tau n \eta_{row}^{d_1/\lambda}L^{d_1/\lambda}}{\parenth{1-\delta}n\parenth{\eta_{row}^2-\frac{\Delta_r}{p\sqrt{m}}}^{\frac{d_1}{2\lambda}}} \leq \tau' \quad \mbox{for all} \quad n,
\end{align}
 The inequality $(iii)$ holds because of the fact that with probability at least $1 - \delta$ we have $\sum_{i \in [n]} A_{i,j'} \leq (1 + \delta) np$ (using Chernoff bound [\cite{hagerup1990guided}] on $A_{i,j'}$'s) for all $j' \in [m]$.
 \begin{align}
     \eqref{bias_1_2} \stackrel{\eqref{eq:nn_conc_prob}}{\leq} &\frac{2}{m} \sum_{ j \in [m]} \frac{\tau'(1 + \delta)}{(1 - \delta)^2 |\cn_{col}(j)|}  \sum_{j' \in \cn_{col}(j)}  \left(\widehat{d}^2(j,j') + \frac{2\Delta_c}{p\sqrt{n}} \right) \\
     \stackrel{\eqref{eq:defn_nrowcol}}{\leq} & 2\left(\eta_{col}^2 + \frac{2\Delta_c}{p\sqrt{n}} \right)\frac{\tau'(1 + \delta)}{( 1- \delta)^2}. 
 \end{align}

 If we put together all the bounds that we have shown till now, we get that with probability at least $1 - 7 \delta$,
\begin{align}
    \mbox{MSE} &\leq 2 \mathbb{V} + 2 \bias_1 + 2 \bias_2 \\
    &\leq \frac{4\sigma^2 \log(2/\delta)}{(1 - \delta)pz_1z_2} + 4\parenth{\eta_{row}^2+\frac{\Delta_r}{p\sqrt{m}}}\frac{1+\delta}{1-\delta} + 4\left(\eta_{col}^2 + \frac{2\Delta_c}{p\sqrt{n}} \right)\frac{\tau'(1 + \delta)}{( 1- \delta)^2} . 
\end{align}
It can be easily shown that this is equivalent to the statement made in \Cref{thm:main_result}. This completes of the proof of the theorem. 

\subsection{Proof of \Cref{lem:nn_conc_lemma}}
\label{proof_of_lem:nn_conc_lemma}
We prove the distance concentration lemma for the rows. The result for the columns will follow analogously. Recall the definitions of $\widehat d^2(i,i')$ and $d^2(i,i')$ from \Cref{sec:algorithm}, 
\begin{align}
\label{eq:defns_d_hat_prime}
    \widehat d^2(i,i') &= \frac{\sum_{j \in [m]} (X_{i,j} - X_{i',j})^2A_{i,j}A_{i',j}}{\sum_{j \in [m]} A_{i,j}A_{i',j}} - 2 \sigma^2, \\
    d^2(i,i') &= \frac{1}{m} \sum_{j \in [m]} ( f(u_i, v_j) - f(u_{i'}, v_j))^2. 
\end{align}
We also define the population mean over the column latent factors, 
\begin{align}
\label{eq:rho**}
    \rho^*_{i,i'} = \mathbb{E}_{v}[(f(u_i, v) - f(u_{i'}, v))^2| \mathcal{U}].
\end{align}
To prove \Cref{lem:nn_conc_lemma} we derive concentration bounds of both $\widehat d^2(i,i') - \rho^*_{i,i'}$ and $d^2(i,i') - \rho^*_{i,i'}$. \Cref{lem:nn_conc_lemma} then follows by applying triangle inequality on these two concentration bounds. We start by proving the concentration bounds for $\widehat d^2(i,i') - \rho^*_{i,i'}$. We denote the number of columns corresponding to which entries are observed in both the rows $i,i'$ by $T_{i,i'} = \sum_{j \in [m]} A_{i,j}A_{i',j}$. For the purpose of proof we also define the stopping times $t_{(l)}(i,i')$ for observing an entry in both the rows $i,i'$ for the $l$-th time. To put it rigorously we set $t_{(0)}(i,i') = 0$. For $l \geq 1$ we define iteratively, 
\begin{align}
\label{eq:t_l}
  t_{(l)}(i,i') = \begin{cases}
      \min \{ t: t_{(l - 1)}(i,i') < t \leq m \mbox{ such that } A_{i,j}A_{i',j} = 1\} & \mbox{ if such a $j$ exists}, \\
      m + 1 & \mbox{ otherwise}. 
  \end{cases}  
\end{align}
We observe that $\widehat d^2(i,i') - \rho^*_{i,i'}$ has the following representation.
\begin{align}
\label{eq:dhat_expand}
    \widehat d^2(i,i') - \rho^*_{i,i'} & \stackrel{\eqref{eq:defns_d_hat_prime}}{=} \frac{\sum_{j \in [m]} [(X_{i,j} - X_{i',j})^2 - 2 \sigma^2 - \rho^*_{i,i'}] A_{i,j}A_{i',j}}{\sum_{j \in [m]} A_{i,j}A_{i',j}} \\
    &\stackrel{\eqref{eq:t_l}}{=} \frac{\sum_{l = 1}^{T_{i,i'}} \textbf{1}(t_l(i,i') \leq m) [(X_{i,t_l(i,i')} - X_{i', t_l(i,i')})^2 - 2 \sigma^2 - \rho^*_{i,i'}]}{T_{i,i'}} \\
    &= \frac{\sum_{l = 1}^{T_{i,i'}} W_l}{T_{i,i'}}, 
\end{align}
where $W_l = \textbf{1}(t_l(i,i') \leq m)[(X_{i,t_l(i,i')} - X_{i', t_l(i,i')})^2 - 2 \sigma^2 - \rho^*_{i,i'}]$ for $l = 1, \cdots, T_{i,i'}$. By Hoeffding's inequality [\cite{bentkus2004hoeffding}] on the error terms $\epsilon_{i,j}$ we can show that $\max|\epsilon_{i,j}| \leq \sigma \sqrt{2 \log((2mn)/\delta)} = c_{\epsilon}$. This implies that $|W_l|$ is bounded above by $8 D^2$ where $D = M + c_{\epsilon}$. Let us denote the sigma algebra containing all the information upto time $t$ as $\mathcal{F}_t$ for $ t= 1, \cdots, m$. Let us denote the sigma field generated by the stopping time $t_{l}(i,i')$ by $\mathcal{H}_l$. We observe the following, 
\begin{align}
    \mathbb{E}[W_l | H_l, \mathcal{U}] =& \mathbb{E}[\textbf{1}(t_l(i,i') \leq m)[(X_{i,t_l(i,i')} - X_{i', t_l(i,i')})^2 - 2 \sigma^2 - \rho^*_{i,i'}] | H_l, \mathcal{U}] \\
    =& \textbf{1}(t_l(i,i') \leq m) \mathbb{E}[[(X_{i,t_l(i,i')} - X_{i', t_l(i,i')})^2 - 2 \sigma^2 - \rho^*_{i,i'}] | H_l, \mathcal{U}] \\
    \stackrel{\eqref{eq:rho**}}{=}& \textbf{1}(t_l(i,i') \leq m) ( \rho^*_{i,i'} - \rho^*_{i,i'}) \\
    =& 0. 
\end{align}
Thus we conclude that $\{W_l\}_{l = 0}^{\infty}$ conditioned on the row latent factors $\mathcal{U}$ is a bounded martingale difference w.r.t.\ the sigma algebra $\{ \mathcal{H}_l \}_{l = 0}^{\infty}$. We shall use the Azuma martingale concentration result in this set-up. 
\begin{result}[Azuma martingale concentration]
\label{result_azuma}
Consider a bounded martingale difference sequence $\{ S_n \}_{n=1}^{\infty}$ adapted to the filtration $\{ \mathcal{F}_n \}_{n = 1}^{\infty}$ i.e.\ $\mathbb{E}[S_n | \mathcal{F}_{n - 1}] = 0 $ for all $n \in \mathbb{N}$. Suppose $|S_n| \leq M$ for all $n \in \mathbb{N}$. Then the following event holds with probability at least $ 1- \delta$, 
\begin{align}
    \abss{\sumn S_i} \leq M \sqrt{n \log(2/\delta)}. 
\end{align}
\end{result}
We can make the following computations, 
\begin{align}
\label{eq:d_conc}
    &\mathbb{P}\left( | \widehat d^2(i,i') - \rho^*_{i,i'}| \leq 8 D^2 \sqrt{\frac{\log(2/\delta)}{T_{i,i'}}}, T_{i,i'} > 0 | \mathcal{U} \right) \\
    \stackrel{\eqref{eq:dhat_expand}}{=} & \mathbb{P}\left( \left| \frac{\sum_{l = 1}^{T_{i,i'}} W_l}{T_{i,i'}} \right| \leq 8 D^2 \sqrt{\frac{\log(2/\delta)}{T_{i,i'}}}, T_{i,i'} > 0 | \mathcal{U} \right) \\
    = & \sum_{k = 1}^m \mathbb{P} \left( \left| \frac{\sum_{l = 1}^{T_{i,i'}} W_l}{T_{i,i'}} \right| \leq 8 D^2 \sqrt{\frac{\log(2/\delta)}{k}} , T_{i,i'} = k | \mathcal{U}\right) \\
    \geq & \sum_{k = 1}^m \mathbb{P} \left( \left| \frac{\sum_{l = 1}^{T_{i,i'}} W_l}{T_{i,i'}} \right| \leq 8 D^2 \sqrt{\frac{\log(2/\delta)}{k}}  \mbox{ for all } k \in [m], T_{i,i'} = k | \mathcal{U}\right) \\
    = & \mathbb{P} \left( \left| \frac{\sum_{l = 1}^{T_{i,i'}} W_l}{T_{i,i'}} \right| \leq 8 D^2 \sqrt{\frac{\log(2/\delta)}{k}}  \mbox{ for all } k \in [m], T_{i,i'} > 0  | \mathcal{U}\right) \\
    = & \mathbb{P} \left( \left| \frac{\sum_{l = 1}^{T_{i,i'}} W_l}{T_{i,i'}} \right| \leq 8 D^2 \sqrt{\frac{\log(2/\delta)}{k}}  \mbox{ for all } k \in [m] | \mathcal{U}\right) + \mathbb{P} \left(  T_{i,i'} > 0  | \mathcal{U}\right) - 1 \\
    \stackrel{(i)}{\geq} & \mathbb{P} \left(  T_{i,i'} > 0  | \mathcal{U}\right) - m \delta \\
    \stackrel{(ii)}{\geq} & 1 - (m + 1) \delta. 
\end{align}
The inequality in $(i)$ follows from the application of \Cref{result_azuma} and the fact that for any two events $A, B$ defined in the same probability space we have $\mathbb{P}(A \cap B) \geq \mathbb{P}(A) + \mathbb{P}(B) - 1$. In step $(ii)$ we used that the following probability statement holds for any $\delta > 0$,
\begin{align}
   \mathbb{P}\parenth{ T_{i,i'} - 1^T \mbi{p}_{i,i'} > - \sqrt{2 (1^T \mbi{p}_{i,i'} ) \log(1/\delta) } \bigg| \mathcal{U}, \mathcal{V} } \geq 1- \delta.
\end{align}
This probability statement follows from the concentration of sum of weighted Bernoulli random variables (Lemma-$2$ of \cite{dwivedi2022doubly}). If we use this bound on $T_{i,i'}$ in \eqref{eq:d_conc} we can say with probability at least $ 1 - (m+ 1) \binom{n}{2} \delta$, the following event holds for all rows $i,i'$, 
\begin{align}
    |\widehat d^2 (i,i') - \rho^*_{i,i'} | \leq \frac{8 D^2}{\sqrt{m}} \sqrt{\frac{\log(2/\delta)}{\Bar{\mbi{p}}_{i,i'}\left[1 - \sqrt{\frac{2 \log(1/\delta)}{m \Bar{\mbi{p}}_{i,i'}}} \right]}},
\end{align}
where $\Bar{\mbi{p}}_{i,i'} = (1^T \mbi{p}_{i,i'})/m$. The above equation gives us the concentration of $\widehat d^2 (i,i')$ about $\rho^*_{i,i'}$. To get the concentration of $d^2 (i,i')$ about $\rho^*_{i,i'}$ we use the same argument as above by replacing all the probabilities $p_{i,j}$'s with $1$. In particular, we can show  that with probability at least $ 1 - (m+ 1) \binom{n}{2} \delta$, the following event holds for all rows $i,i'$, 
\begin{align}
    | d^2 (i,i') - \rho^*_{i,i'} | \leq \frac{8 D^2}{\sqrt{m}} \sqrt{\frac{\log(2/\delta)}{1 - \sqrt{\frac{2 \log(1/\delta)}{m }} }}.
\end{align}
Combining both the above inequalities we can say that with probability at least $ 1 - 2(m+ 1) \binom{n}{2} \delta$, the following event holds for all rows $i,i'$, 
\begin{align}
    |\widehat d^2 (i,i') - d^2 (i,i') | &\leq \frac{8 D^2}{\sqrt{m}} \left( \sqrt{\frac{\log(2/\delta)}{\Bar{\mbi{p}}_{i,i'}\left[1 - \sqrt{\frac{2 \log(1/\delta)}{m \Bar{\mbi{p}}_{i,i'}}} \right]}} +  \sqrt{\frac{\log(2/\delta)}{1 - \sqrt{\frac{2 \log(1/\delta)}{m }} }} \right) \\
    & \leq \frac{\Delta_r}{\sqrt{\Bar{\mbi{p}}_{i,i'} m}} \quad \mbox{(for a suitable constant $\Delta_r$)}. 
\end{align}
This completes the proof of the lemma. 

\section{Proof of \Cref{cor:main_result}}
\label{sec: appendix_B}
We start with the upper bound on MSE proved in \Cref{thm:main_result}. Thereafter we substitute the values of $z_1, z_2$ in the result and carefully choose the value of the tuning parameters $\eta_{row}, \eta_{col}$ to get the optimal MSE. 
 \begin{align}
    \mbox{MSE} &\leq \frac{4\sigma^2 \log(2/\delta)}{(1 - \delta)pz_1z_2} + 4\parenth{\eta_{row}^2+\frac{\Delta_r}{p\sqrt{m}}}\frac{1+\delta}{1-\delta} + 4\left(\eta_{col}^2 + \frac{2\Delta_c}{p\sqrt{n}} \right)\frac{\tau'(1 + \delta)}{( 1- \delta)^2} \\
    &\leq C \left[ \frac{1}{n\left(\eta_{row}^2 - \frac{\Delta_r}{\sqrt{m}} \right)^{\frac{d_1}{2\lambda}} m \left(\eta_{col}^2 - \frac{\Delta_c}{\sqrt{n}} \right)^{\frac{d_2}{2\lambda}}}+ \eta_{row}^2 + \frac{\Delta_r}{\sqrt{m}} + \eta_{col}^2 + \frac{\Delta_c}{\sqrt{n}} \right]
\end{align}
Here without loss of generality $C$ is used to denote any arbitrary constant. From the expression of MSE it can be seen that under the regime $n = O(m^{\frac{2\lambda + d_1}{d_2}})$ (this ensures $\eta_{row,opt}^2 \geq \frac{C\Delta_r}{\sqrt{m}}$) and $n = \omega(m^{\frac{d_1}{2\lambda + d_2}})$ (this ensures $\eta_{col,opt}^2 \geq \frac{C\Delta_c}{\sqrt{n}}$), the two-sided NN estimator with $\eta_{row, opt}=\eta_{col, opt}=\Theta\parenth{(mn)^\frac{-\lambda}{2\lambda+d_1+d_2}}$ obtains the rate, 
\begin{align}
    \mathrm{MSE}=O\parenth{(mn)^\frac{-2\lambda}{2\lambda+d_1+d_2}}.
\end{align}   
This completes the proof of the corollary. 

\section{Proof of \Cref{thm:general result}}
\label{sec: appendix_C}
We know from the proof of \Cref{thm:main_result} that with probability at least $1 - \delta$ each of the events $E_1, E_2, A_1(z_1), A_2(z_2)$ hold true. Like in the proof of \Cref{thm:main_result} we start by decomposing the MSE into a bias part and a variance part, 
\begin{align}
\label{eq:mse_decomp_mnar}
        \mbox{MSE} &\leq \frac{2}{mn}\sum_{i \in [n], j \in [m]} \left(\Tilde{\theta}_{i, j} - f(i, j) \right)^2 + \frac{2}{mn} \sum_{i \in [n], j \in [m]}\left(\widehat \theta_{i, j} - \Tilde{\theta}_{i, j} \right)^2 \\
    &= 2 \mathbb{B} + 2 \mathbb{V}.
\end{align}
We bound the MSE by obtaining bounds on the bias and the variance term in the decomposition above.
Let us first look at the variance part. We use $A_4$ to denote the event that $|\deno|\Big|\mathcal{U}, \mathcal{V} \geq g(\delta) |\cn_{row}(i)||\cn_{col}(j)|$. From the assumption made in \Cref{thm:general result} (\Cref{assump:min_row_col}) we know that $\mathbb{P}(A_4) \geq 1- \delta$. From our previous discussions we have the following concentration bound using the Hoeffding's inequality [\cite{bentkus2004hoeffding}] on the noise terms $\epsilon_{i,j}$'s under the events $E_1, E_2, A_1(z_1), A_2(z_2), A_4$, 
\begin{align}
\label{eq:hoeffding_mnar}
    \mathbb{P}\parenth{|\widehat \theta_{i, j} - \Tilde{\theta}_{i, j}| > \zeta \Big| E_1, E_2, A_1(z_1), A_2(z_2), A_4} \leq 2 \exp \left\{- \frac{\zeta^2 g(\delta) z_1 z_2}{2 \sigma^2} \right\}.
\end{align}
This allows us to bound the variance term $\mathbb{V}$,
\begin{align}
 & \mathbb{P}\parenth{\mathbb{V} \leq  \frac{2 \sigma^2 \log(2/ \delta)}{g(\delta) z_1 z_2} \Big|  E_1, E_2, A_1(z_1), A_2(z_2), A_4}  \\
   \stackrel{\eqref{eq:mse_decomp_mnar}}{=} &  \mathbb{P} \parenth{\frac{1}{mn} \sum_{i \in [n], j \in [m]}\left(\widehat \theta_{i, j} - \Tilde{\theta}_{i, j} \right)^2 \leq \frac{2 \sigma^2 \log(2/ \delta)}{g(\delta) z_1 z_2} \Big|  E_1, E_2, A_1(z_1), A_2(z_2), A_4 } \\
   \stackrel{\eqref{eq:hoeffding_mnar}}{\geq}& 1 - mn \delta. 
\end{align}

Now let us come to the bias part. From previous discussions we know that the $\mathbb{B}$ term can be bounded above by,  
\begin{align}
\label{eq:bias_decomp_mnar}
   &\mathbb{B} \\
   \stackrel{\eqref{eq:mse_decomp_mnar}}{\leq} &\frac{2}{nm}\sum_{i \in [n], j \in [m]}\frac{1}{|\deno|}\parenth{\sum_{i'\in \cn_{row}(i)}\sum_{j'\in\cn_{col}(j)}\parenth{f\parenth{i', j'}-f(i' , j)}^2A_{i', j'} +\parenth{f(i' , j)-f\parenth{i, j}}^2A_{i', j'}} \\
    =& \mathbb{B}_1 + \mathbb{B}_2.
\end{align}
Let us first put an upper bound (with high probability) on the second term in the bias decomposition, $\mathbb{B}_2$ under the events $E_1, E_2, A_1(z_1), A_2(z_2), A_4$. We can re-write $\mathbb{B}_2$ as,
\begin{align}
\label{eq: 2nd term of bias_1}
    \mathbb{B}_2 = 
    \frac{2}{nm} \sum_{i \in [n], j \in [m]} \parenth{\sum_{i' \in \cn_{row}(i)}\parenth{f\parenth{i' , j}-f\parenth{i , j}}^2\left(\frac{\sum_{j'\in\cn_{col}(j)}A_{i' , j'}}{\sum_{i' \in \cn_{row}(i), j' \in \cn_{col}(j)} A_{i',j'}}\right)}. 
\end{align}
Let us denote the weights by $ w_{i',j'}^{i,j} = A_{i',j'}/(\sum_{i' \in \cn_{row}(i), j' \in \cn_{col}(j)} A_{i',j'})$. We note that, 
\begin{align}
\label{eq:weight_bound}
 w_{i',j'}^{i,j} = \frac{ A_{i',j'}}{\sum_{i' \in \cn_{row}(i), j' \in \cn_{col}(j)} A_{i',j'}} \leq  \frac{1}{\deno} \stackrel{(A\ref{assump:min_row_col})}{\leq} \frac{1}{g(\delta) \cn_{row}(i)\cn_{col}(j)}.  
\end{align}
This implies that, 
\begin{align}
\label{eq:weight_avg_bound}
 \frac{\sum_{j'\in\cn_{col}(j)}A_{i' , j'}}{\sum_{i' \in \cn_{row}(i), j' \in \cn_{col}(j)} A_{i',j'}} = \sum_{j' \in \cn_{col}(j)}  w_{i',j'}^{i,j} \stackrel{\eqref{eq:weight_bound}}{\leq} \frac{1}{g(\delta) \cn_{row}(i)} . 
\end{align}
We use this bound in the expression of the $\mathbb{B}_2$ to get the following, 
\begin{align}
\label{eq:b2_expr}
\mathbb{B}_2 \stackrel{\eqref{eq: 2nd term of bias_1}}{=}& \frac{2}{nm} \sum_{i \in [n], j \in [m]} \parenth{\sum_{i' \in \cn_{row}(i)}\parenth{f\parenth{i' , j}-f\parenth{i , j}}^2\left(\frac{\sum_{j'\in\cn_{col}(j)}A_{i' , j'}}{\sum_{i' \in \cn_{row}(i), j' \in \cn_{col}(j)} A_{i',j'}}\right)} \\
    \stackrel{\eqref{eq:weight_avg_bound}}{\leq} & \frac{2}{g(\delta)nm}\sum_{i \in [n], j \in [m]}\frac{1}{|\cn_{row}(i)|}\parenth{\sum_{i' \in \cn_{row}(i)}\parenth{f\parenth{i' , j}-f\parenth{i , j}}^2}\\
    =  & \frac{2}{g(\delta)nm}\sum_{i \in [n]}\sum_{i' \in \cn_{row}(i)}\frac{1}{|\cn_{row}(i)|}\parenth{\sum_{ j \in [m]}\parenth{f\parenth{i' , j}-f\parenth{i , j}}^2}\\
    \stackrel{\eqref{eq:defns_d_hat_prime}}{=} & \frac{2}{g(\delta)n}\sum_{i \in [n]}\sum_{i' \in \cn_{row}(i)} \frac{1}{|\cn_{row}(i)|}d^2(i , i').
\end{align}
We can further simplify the bound using \Cref{lem:nn_conc_lemma}, 
\begin{align}
 \eqref{eq:b2_expr} \stackrel{L\ref{lem:nn_conc_lemma}}{\leq} & \frac{2}{g(\delta)n}\sum_{i \in [n]}\sum_{i' \in \cn_{row}(i)}\frac{1}{|\cn_{row}(i)|}\parenth{\hat{d}^2(i , i')+\frac{\Delta_r}{\sqrt{\Bar{\mbi{p}}_{i,i'} m}}}\\
    \stackrel{\eqref{eq:defn_nrowcol}}{\leq} & \frac{2}{g(\delta)n}\sum_{i \in [n]}\sum_{i' \in \cn_{row}(i)}\frac{1}{|\cn_{row}(i)|}\parenth{\eta_{row}^2+\frac{\Delta_r}{\sqrt{\Bar{\mbi{p}}_{i,i'} m}}}\\
    = & \frac{2}{g(\delta)}\parenth{\eta_{row}^2+\frac{\Delta_r}{\sqrt{\Bar{\mbi{p}}_{i,i'} m}}}. 
\end{align}

Similarly under the regime $\eta_{row}^2 \geq \Delta_r/\sqrt{m}$ we can bound $\mathbb{B}_1$ under the events $E_1, E_2, A_1(z_1), A_2(z_2), A_4$, 
\begin{align}
\label{eq:b1_mnar}
   \mathbb{B}_1  \stackrel{\eqref{eq:bias_decomp_mnar}}{=}& \frac{2}{mn} \sum_{i \in [n], j \in [m]} \frac{1}{|\deno|} \sum_{i' \in \cn_{row}(i)} \sum_{j' \in \cn_{col}(j)} (f(i', j') - f(i' , j))^2 A_{i' , j'} \\
    \stackrel{(A\ref{assump:min_row_col})}{\leq} & \frac{2}{g(\delta)mn} \sum_{i \in [n], j \in [m]} \frac{1}{|\cn_{row}(i)||\cn_{col}(j)|} \sum_{i' \in \cn_{row}(i)} \sum_{j' \in \cn_{col}(j)} (f(i', j') - f(i' , j))^2  \\
    \stackrel{(L\ref{lem:bound_nn})}{\leq} &   \frac{2}{g(\delta)mn} \sum_{i \in [n], j \in [m]} \frac{1}{\parenth{1-\delta}n\parenth{\frac{\eta_{row}^2-\frac{\Delta_r}{\sqrt{m}}}{L^2}}^{\frac{d_1}{2\lambda}}|\cn_{col}(j)|} \sum_{i' \in \cn_{row}(i)} \sum_{j' \in \cn_{col}(j)} (f(i', j') - f(i' , j))^2 \\
    \stackrel{(i)}{\leq} &   \frac{2}{g(\delta)mn} \sum_{i \in [n], j \in [m]} \frac{1}{\parenth{1-\delta}n\parenth{\frac{\eta_{row}^2-\frac{\Delta_r}{\sqrt{m}}}{L^2}}^{\frac{d_1}{2\lambda}}|\cn_{col}(j)|}  \sum_{j' \in \cn_{col}(j)} (f(i,j') - f(i, j))^2  | \cn_{row}(i)| .
\end{align}
Here step-$(i)$ follows by aggregating all the terms which belong to the set of neighboring rows of $u_i$. We further simplify the bound as follows, 
\begin{align}
    \eqref{eq:b1_mnar} \stackrel{(ii)}{\leq} &  \frac{2}{g(\delta)mn} \sum_{j \in [m]} \frac{\tau n \eta_{row}^{d_1/\lambda}}{\parenth{1-\delta}n\parenth{\frac{\eta_{row}^2-\frac{\Delta_r}{\sqrt{m}}}{L^2}}^{\frac{d_1}{2\lambda}}|\cn_{col}(j)|}  \sum_{j' \in \cn_{col}(j)} \sum_{i \in [n]} (f(i,j') - f(i, j))^2  \\
    \stackrel{(iii), \eqref{eq:nn_conc_prob}}{\leq} & \frac{2}{g(\delta)m} \sum_{ j \in [m]} \frac{\tau'}{(1 - \delta) |\cn_{col}(j)|}  \sum_{j'\in \cn_{col}(j)}  \left(d^2(j, j') + \frac{\Delta_c}{\sqrt{ \Bar{\mbi{p}}_{j,j'}n}} \right) \\
     \stackrel{\eqref{eq:nn_conc_prob}}{\leq} & \frac{2}{g(\delta)m} \sum_{ j \in [m]} \frac{\tau'}{(1 - \delta) |\cn_{col}(j)|}  \sum_{j' \in \cn_{col}(j)}  \left(\widehat{d}^2(j,j') + \frac{2\Delta_c}{\sqrt{\Bar{\mbi{p}}_{j,j'}n}} \right) \\
     \stackrel{\eqref{eq:defn_nrowcol}}{\leq} & \frac{2}{g(\delta)}\left(\eta_{col}^2 + \frac{2\Delta_c}{\sqrt{\Bar{\mbi{p}}_{j,j'}n}} \right)\frac{\tau'}{ (1- \delta)}.
\end{align}
The step-$(ii)$ follows since $|\cn_{row}(i)| \leq \tau n \eta_{row}^{d_1/ \lambda}$ (because of sub-sampling the neighboring rows). The inequality $(iii)$ follows as there exists a constant $\tau' > 0$ such that, 
\begin{align}
    \frac{\tau n \eta_{row}^{d_1/\lambda}}{\parenth{1-\delta}n\parenth{\frac{\eta_{row}^2-\frac{\Delta_r}{\sqrt{m}}}{L^2}}^{\frac{d_1}{2\lambda}}} \leq \tau' \mbox{ for all } n.
\end{align}
Combining all the computations above we get that the MSE of the two-sided nearest neighbor is bounded above by the following with probability at least $1 - 7 \delta$, 
\begin{align}
    \mbox{MSE} &\leq 2  \mathbb{V} + 2 \mathbb{B}_1 + 2 \mathbb{B}_2 \\
    &\leq \frac{4}{g(\delta)}\left[\frac{\sigma^2 \log(2/\delta)}{z_1z_2} + \parenth{\eta_{row}^2+\frac{\Delta_r}{\sqrt{\Bar{\mbi{p}}_{i,i'}m}}} + \left(\eta_{col}^2 + \frac{2\Delta_c}{\sqrt{\Bar{\mbi{p}}_{j,j'}n}} \right)\frac{\tau'}{( 1- \delta)} \right] .
\end{align}
It can be easily shown that this is equivalent to the statement made in \Cref{thm:general result}. Moreover we observe that, 
\begin{align}
    \mbox{MSE} &\leq \frac{4}{g(\delta)}\left[\frac{\sigma^2 \log(2/\delta)}{z_1z_2} + \parenth{\eta_{row}^2+\frac{\Delta_r}{\sqrt{\Bar{\mbi{p}}_{i,i'}m}}} + \left(\eta_{col}^2 + \frac{2\Delta_c}{\sqrt{\Bar{\mbi{p}}_{j,j'}n}} \right)\frac{\tau'}{( 1- \delta)} \right] \\
    &\leq C \left[ \frac{1}{n\left(\eta_{row}^2 - \frac{\Delta_r}{\sqrt{m}} \right)^{\frac{d_1}{2\lambda}} m \left(\eta_{col}^2 - \frac{\Delta_c}{\sqrt{n}} \right)^{\frac{d_2}{2\lambda}}}+ \eta_{row}^2 + \frac{\Delta_r}{\sqrt{m}} + \eta_{col}^2 + \frac{\Delta_c}{\sqrt{n}} \right].
\end{align}
Similar to the proof of \Cref{cor:main_result} we can show that for $n = \omega(\max\{m^{\frac{d_1}{2\lambda + d_2}},  m^{\frac{4\lambda}{d_1 + d_2 - 2\lambda}}\})$ and $n = O(\min\{m^{\frac{2\lambda + d_1}{d_2}},  m^{\frac{d_1 + d_2 - 2\lambda}{4\lambda}}\})$ the MSE of the two-sided nearest neighbor algorithm with $\eta_{row}=\eta_{col}=\Theta\parenth{(mn)^\frac{-\lambda}{2\lambda+d_1+d_2}}$ achieves the non-parametric minimax optimal rate $O((mn)^{\frac{-2\lambda}{2\lambda + d_1 + d_2}})$.

\section{Proof of \Cref{thm:point_wise_guarantees}}
\label{sec:pointwise_bounds}
In this section we derive the pointwise bounds for the estimates $\widehat \theta_{i, j}$. Like in the earlier proofs, we start with a bias-variance decomposition of the pointwise errors, 
\begin{align}
\label{eq:mse_decomp_ij}
    (\widehat \theta_{i, j} - f(i , j))^2 &= \sum_{i' \in \cn_{row}(i), j' \in \cn_{col}(j)} \frac{1}{|\deno|^2}\left((\theta_{i', j'} + \epsilon_{i',j'}) - \theta_{i , j} \right)^2 \\
    &\leq 2 \left(\frac{\sum_{i' \in \cn_{row}(i), j' \in \cn_{col}(j)} (\theta_{i', j'} - \theta_{i , j})}{|\deno|} \right)^2 + 2 \left( \frac{\sum_{i' \in \cn_{row}(i), j' \in \cn_{col}(j)} \epsilon_{i',j'}}{|\deno|} \right)^2 \\
    & = 2 \times \mathbb{B}_{i, j} +  2 \times \mathbb{V}_{i,j}. 
\end{align}
We know from prior discussions that under \Cref{assump:min_row_col} we can get an upper bound on the variance term using Hoeffding's inequality [\cite{bentkus2004hoeffding}] on the noise terms $\epsilon_{i',j'}$'s, 
\begin{align}
\label{eq:variance_bnd_ij}
  \mathbb{P} \left(\left|\frac{\sum_{i' \in \cn_{row}(i), j' \in \cn_{col}(j)} \epsilon_{i',j'}}{|\deno|} \right| > \zeta \Big| \mathcal{U}, \mathcal{V}\right) = \mathbb{P}\left(|\widehat \theta_{i, j} - \Tilde{\theta}_{i, j}| > \zeta | \mathcal{U}, \mathcal{V} \right) \leq 2 \exp \left\{- \frac{\zeta^2 g(\delta) z_1 z_2}{2 \sigma^2} \right\}.
\end{align}
Thus we get the following finite-sample guarantee on $\mathbb{V}_{i,j}$,
\begin{align}
    \mathbb{P}\parenth{\mathbb{V}_{i,j} \leq  \frac{2 \sigma^2 \log(2/\delta)}{g(\delta) z_1 z_2} \Big| \mathcal{U}, \mathcal{V}} &= \mathbb{P}\parenth{\left( \frac{\sum_{i' \in \cn_{row}(i), j' \in \cn_{col}(j)} \epsilon_{i',j'}}{|\deno|} \right)^2  \leq  \frac{2 \sigma^2 \log(2/\delta)}{g(\delta) z_1 z_2} \Big| \mathcal{U}, \mathcal{V}} \stackrel{\eqref{eq:variance_bnd_ij}}{\geq} 1- 2\delta. 
\end{align}

We now state and prove a lemma that would help us in getting a bound on the bias term. 
\begin{lemma}
    \label{lem:chaper_lemma}
Consider a function $g:[0,1]^d \mapsto R$ that is $(\lambda, L)$ holder continuous (w.r.t.\ $||\cdot||_{\infty}$). Let $X \sim \mathcal{P}$ where $\mathcal{P}$ is a distribution on $[0, 1]^d$ with density lower bounded by $c_{\mathcal{P}}$. Suppose $\mu_2$ denotes the second moment of $X$. Then we have the following as $\mu_2 \rightarrow 0$, 
\begin{align}
    ||g||_{\infty} = O_d \left(L^{\frac{d}{d + 2\lambda}} \left(\frac{\mu_2}{c_{\mathcal{P}}} \right)^{\frac{\lambda}{d + 2 \lambda}}\right). 
\end{align}
\end{lemma}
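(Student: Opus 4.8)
The plan is to establish the contrapositive intuition: a $(\lambda, L)$ Hölder function on the compact cube $[0,1]^d$ cannot maintain a large supremum while having a small averaged square under a distribution whose density is bounded below. I read $\mu_2 = \mathbb{E}[g(X)^2]$ as the second moment of $g(X)$, which is the scaling-consistent interpretation (rescaling $g \mapsto tg$ multiplies both sides of the claimed bound by $t$). First I would use compactness: since Hölder continuity implies continuity and $[0,1]^d$ is compact, the value $M \defeq \infnorm{g}$ is attained at some $x_0 \in [0,1]^d$, so $|g(x_0)| = M$.

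The second step localizes this peak. For every $x$ with $\infnorm{x - x_0} \le r$, the Hölder bound gives $|g(x)| \ge |g(x_0)| - L\infnorm{x - x_0}^\lambda \ge M - L r^\lambda$. Taking the radius $r \defeq (M/(2L))^{1/\lambda}$ therefore guarantees $|g(x)| \ge M/2$, and hence $g(x)^2 \ge M^2/4$, on the entire $\ell_\infty$-ball $B \defeq \{x : \infnorm{x - x_0} \le r\}$.

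The third step turns this into a lower bound on $\mu_2$. Restricting the integral to $B \cap [0,1]^d$ and invoking the density lower bound yields $\mu_2 = \mathbb{E}[g(X)^2] \ge \tfrac14 M^2\, \mathbb{P}(X \in B \cap [0,1]^d) \ge \tfrac14 M^2 c_{\mathcal{P}}\,\mathrm{Vol}(B \cap [0,1]^d)$. Since $x_0 \in [0,1]^d$, each coordinate of the side-$2r$ cube $B$ retains an interval of length at least $r$ inside $[0,1]$ (provided $r \le 1$), so $\mathrm{Vol}(B \cap [0,1]^d) \ge r^d$. Substituting $r = (M/(2L))^{1/\lambda}$ gives $\mu_2 \ge c_d\, c_{\mathcal{P}}\, M^{2 + d/\lambda} L^{-d/\lambda}$ with $c_d = \tfrac14 2^{-d/\lambda}$. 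Rearranging for $M$ and using the identity $2 + d/\lambda = (d + 2\lambda)/\lambda$ produces $M \le C_d\, L^{\frac{d}{d+2\lambda}} (\mu_2/c_{\mathcal{P}})^{\frac{\lambda}{d+2\lambda}}$, which is the claim.

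The main (and essentially only) obstacle is the boundary effect in the volume estimate: the bound $\mathrm{Vol}(B \cap [0,1]^d) \ge r^d$ requires $r \le 1$, equivalently $M \le 2L$. This is precisely why the statement is asymptotic in $\mu_2 \to 0$ — a vanishing second moment forces $M$ to be small, so the chosen radius $r = (M/(2L))^{1/\lambda}$ is eventually below $1$ and the corner-volume estimate is legitimate. A minor point worth recording is that working with $\ell_\infty$-balls matches the Hölder condition stated in $\infnorm{\cdot}$ exactly, so no norm-equivalence constants are incurred; the only constants are the dimensional $c_d$ and the $2^{-d}$ corner factor.
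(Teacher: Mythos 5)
Your proof is correct and takes essentially the same route as the paper's: both arguments lower-bound $\mu_2=\mathbb{E}[g(X)^2]$ (the reading the paper's own proof uses, which you correctly adopt) by integrating the H\"older lower bound $|g(x)|\ge \|g\|_\infty - L\|x-x_0\|_\infty^{\lambda}$ over a neighborhood of the maximizer and invoking the density lower bound $c_{\mathcal{P}}$. The only difference is cosmetic: the paper evaluates the resulting integral $\int_{[0,1]^d}[(B-L\|x\|_\infty^{\lambda})_+]^2\,dx$ exactly (yielding a $\min$ of two expressions whose second branch gives the rate), whereas you use the cruder but equally valid bound $g^2\ge \|g\|_\infty^2/4$ on an $\ell_\infty$-ball of radius $(\|g\|_\infty/(2L))^{1/\lambda}$, and your handling of the $r\le 1$ boundary case via the $\mu_2\to 0$ asymptotics plays exactly the role of the paper's $\min$ and its closing ``since $L$ is fixed'' step.
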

\begin{proof}[Proof of \Cref{lem:chaper_lemma}]
Let $||g||_{\infty} = B$ and let $x^* \in [0,1]^d$ such that $g(x^*) = B$. By the holder-continuity property of the function $g$ we can make the following derivations, 
\begin{align}
    & g(x) \geq g(x^*) - L||x - x^*||_{\infty}^{\lambda} \\
    \implies & \mu_2 \geq c_{\mathcal{P}} \int_{[0, 1]^d} \left[\left( B - L||x - x^*||_{\infty}^{\lambda} \right)_+ \right]^2 dx \\
    \implies & \mu_2 \geq c_{\mathcal{P}} \int_{[0, 1]^d} \left[\left( B - L||x ||_{\infty}^{\lambda} \right)_+ \right]^2 dx .
\end{align}
The last line follows by imitating the proof of $(116)$ in the proof of Lemma $H.2$ in \cite{dwivedi2022counterfactual}. Using the symmetry of $||x||_{\infty}$ yields, 
\begin{align}
  \int_{[0, 1]^d} \left[\left( B - L||x ||_{\infty}^{\lambda} \right)_+ \right]^2 dx &= d  \int_{\{x_1 \geq x_2, \cdots, x_d\} \cap [0,1]^d } \left[\left( B - L|x_1 |^{\lambda} \right)_+ \right]^2 dx  \\
  &= d \int_{0}^1 \left[\left( B - L|x_1 |^{\lambda} \right)_+ \right]^2 x_1^{d - 1} dx_1 \\
  &= d \int_{0}^{\min\{ 1, (B/L)^{1/\lambda}\}} \left(B - Lx_1^{\lambda} \right)^2 x_1^{d - 1} dx_1 \\
  &= d \left[\frac{B^2 x_1^d}{d} + \frac{L^2 x_1^{2\lambda + d}}{2 \lambda + d} - \frac{2 BL x_1^{\lambda + d}}{\lambda + d} \right]_0^{\min\{ 1, (B/L)^{1/\lambda}\}} \\
  &= d \min \left\{\frac{B^2}{d} + \frac{L^2}{d + 2 \lambda} - \frac{2BL}{d + \lambda}, \frac{B^{2 + \frac{d}{\lambda}}}{L^{d/\lambda}} \left[ \frac{1}{d} + \frac{1}{d + 2\lambda} - \frac{2}{d+ \lambda}\right] \right\}. 
\end{align}
Combining the pieces we have for suitable constant $c$, 
\begin{align}
    \mu_2 \geq c_{\mathcal{P}}d \min \left\{c(B^2 + L^2), cB^{2 + (d/\lambda)}/L^{d/\lambda} \right\}. 
\end{align}
Since $L$ is fixed as $\mu_2 \rightarrow 0$ we have the following, 
\begin{align}
    B = O_d \left(L^{\frac{d}{d + 2\lambda}} \left(\frac{\mu_2}{c_{\mathcal{P}}} \right)^{\frac{\lambda}{d + 2 \lambda}}\right). 
\end{align}
This completes the proof of the lemma. 
\end{proof}
We now return to the step of obtaining a bound on the bias term. Let $g^j_{row, i'}$ denote the $(\lambda, L)$ holder continuous map $v_j \mapsto f(u_{i'}, v_j) - f(u_i, v_j)$ and $g^i_{col, j'}$ denote the $(\lambda, L)$ holder continuous map $u_i \mapsto f(u_{i}, v_{j'}) - f(u_i, v_j)$. We observe that, 
\begin{align}
\label{eq:bias_decomp_ij}
    \mathbb{B}_{i,j} \stackrel{\eqref{eq:mse_decomp_ij}}{=}&  \left(\frac{\sum_{i' \in \cn_{row}(i), j' \in \cn_{col}(j)} (\theta_{i', j'} - \theta_{i , j})}{|\deno|} \right)^2 \\
   \stackrel{(i)}{\leq} & 2 \left(\frac{\sum_{i' \in \cn_{row}(i), j' \in \cn_{col}(j)} (\theta_{i', j'} - \theta_{i' , j})}{|\deno|} \right)^2 + 2 \left(\frac{\sum_{i' \in \cn_{row}(i), j' \in \cn_{col}(j)} (\theta_{i', j} - \theta_{i , j})}{|\deno|} \right)^2 \\
   \stackrel{(ii)}{\leq} & 2 \max_{j' \in \cn_{col}(j)} ||g^{i'}_{col, j'}||_{\infty}^2 + 2 \max_{i' \in \cn_{row}(i)} ||g^{j}_{row, i'}||_{\infty}^2 .
\end{align}
The step-$(i)$ follows from the basic inequality $(a+b)^2 \leq 2(a^2 + b^2)$. The step-$(ii)$ is a consequence of the definitions of the functions $g^{i'}_{col, j'}$ and $g^{j}_{row, i'}$. As in the proof of \Cref{lem:nn_conc_lemma} we define $\rho^*_{i, i'} = \mathbb{E}_{v}[(f(u_i, v) - f(u_{i'}, v))^2| \mathcal{U}]$ and $\rho^*_{j, j'} = \mathbb{E}_{u}[(f(u, v_j) - f(u, v_{j'}))^2| \mathcal{V}]$. From $\Cref{asump_row_col}$ we know that: $(i)$ the row latent factors $u_i$'s are drawn iid from uniform distribution with support $[0,1]^{d_1}$ which has a density lower bounded by a constant $c_{\mathcal{P}, u} > 0$ (say), $(ii)$ the column latent factors $v_i$'s are drawn iid from uniform distribution with support $[0,1]^{d_2}$ which has a density lower bounded by a constant $c_{\mathcal{P}, v} > 0$ (say). We bound $\mathbb{B}_{i,j}$ as follows. 
\begin{align}
\label{eq:bias2_decomp_ij}
 &\mathbb B_{i,j}  \\
 \stackrel{\eqref{eq:bias_decomp_ij}}{\leq} & 2 \max_{j' \in \cn_{col}(j)} ||g^{i'}_{col, j'}||_{\infty}^2 + 2 \max_{i' \in \cn_{row}(i)} ||g^{j}_{row, i'}||_{\infty}^2
 \stackrel{(L\ref{lem:chaper_lemma})}{\leq} & 2 \max_{j' \in \cn_{col}(j)} c_{d_1} L^{\frac{2d_1}{d_1 + 2 \lambda}} \left(\frac{\rho^*_{j, j'}}{c_{\mathcal{P}, u}} \right)^{\frac{2\lambda}{d_1 + 2 \lambda}} + 2 \max_{i' \in \cn_{row}(i)} c_{d_2} L^{\frac{2d_2}{d_2 + 2 \lambda}} \left(\frac{\rho^*_{i, i'}}{c_{\mathcal{P}, v}} \right)^{\frac{2\lambda}{d_2 + 2 \lambda}} \\
 \stackrel{(L\ref{lem:nn_conc_lemma})}{\leq} & 2 \max_{j' \in \cn_{col}(j)} c_{d_1} L^{\frac{2d_1}{d_1 + 2 \lambda}} \left(\frac{\widehat d^2(j, j') + \Delta_c/\sqrt{\Bar{\mbi{p}}_{j,j'}n}}{c_{\mathcal{P}, u}} \right)^{\frac{2\lambda}{d_1 + 2 \lambda}} + 2 \max_{i' \in \cn_{row}(i)} c_{d_2} L^{\frac{2d_2}{d_2 + 2 \lambda}} \left(\frac{\widehat d^2(i,i') + \Delta_r/\sqrt{\Bar{\mbi{p}}_{i,i'}m}}{c_{\mathcal{P}, v}} \right)^{\frac{2\lambda}{d_2 + 2 \lambda}} \\
 \stackrel{\eqref{eq:defn_nrowcol}}{\leq} & 2c_{d_1}' L^{\frac{2d_1}{d_1 + 2 \lambda}} \left(\frac{\eta_{col}^2 + \Delta_c/\sqrt{n}}{c_{\mathcal{P}, u}} \right)^{\frac{2\lambda}{d_1 + 2 \lambda}} + 2 c_{d_2}' L^{\frac{2d_2}{d_2 + 2 \lambda}} \left(\frac{\eta_{row}^2 + \Delta_r/\sqrt{m}}{c_{\mathcal{P}, v}} \right)^{\frac{2\lambda}{d_2 + 2 \lambda}}. 
\end{align}
In the above $c_{d_1}, c_{d_1}', c_{d_2}, c_{d_2}'$ are suitable constants. Thus we conclude that with probability at-least $ 1- 4\delta$ the following pointwise error bounds hold, 
\begin{align}
    (\widehat \theta_{i, j} - f(i , j))^2 
    \stackrel{\eqref{eq:mse_decomp_ij}}{\leq} & 2 \times \mathbb{B}_{i, j} +  2 \times \mathbb{V}_{i,j} \\
     \stackrel{\eqref{eq:bias2_decomp_ij}}{\leq} & 2c_{d_1}' L^{\frac{2d_1}{d_1 + 2 \lambda}} \left(\frac{\eta_{col}^2 + \Delta_c/\sqrt{n}}{c_{\mathcal{P}, u}} \right)^{\frac{2\lambda}{d_1 + 2 \lambda}} + 2 c_{d_2}' L^{\frac{2d_2}{d_2 + 2 \lambda}} \left(\frac{\eta_{row}^2 + \Delta_r/\sqrt{m}}{c_{\mathcal{P}, v}} \right)^{\frac{2\lambda}{d_2 + 2 \lambda}} +\frac{4 \sigma^2 \log(2/\delta)}{g(\delta) z_1 z_2} \\
    \stackrel{(L\ref{lem:bound_nn})}{\leq} & 2c_{d_1}' L^{\frac{2d_1}{d_1 + 2 \lambda}} \left(\frac{\eta_{col}^2 + \Delta_c/\sqrt{n}}{c_{\mathcal{P}, u}} \right)^{\frac{2\lambda}{d_1 + 2 \lambda}} + 2 c_{d_2}' L^{\frac{2d_2}{d_2 + 2 \lambda}} \left(\frac{\eta_{row}^2 + \Delta_r/\sqrt{m}}{c_{\mathcal{P}, v}} \right)^{\frac{2\lambda}{d_2 + 2 \lambda}} \\
    & + \frac{4 \sigma^2 \log(2/\delta)}{g(\delta) \parenth{1-\delta}^2 nm \parenth{\frac{\eta_{row}^2-\frac{\Delta_r}{\sqrt{m}}}{L^2}}^{\frac{d_1}{2\lambda}} \parenth{\frac{\eta_{col}^2-\frac{\Delta_c}{\sqrt{n}}}{L^2}}^{\frac{d_2}{2\lambda}}}.
\end{align}
We can minimize the above upper bound to obtain optimum choice of $\eta_{row}, \eta_{col}$. It can be checked that the optimal pointwise error rate comes out to be $O\left((mn)^{\frac{-2 \lambda}{(2\lambda + d_1 + d_2) + (d_1d_2/ \lambda)}}\right)$ for suitably chosen $\eta_{row}, \eta_{col}$. 

\section{Proof of \Cref{thm: TSNN CLT}}
\label{sec:clt_proof}
We have the standard decomposition for $\widehat \theta_{i, j} - f(i, j)$, 
\begin{align}
\label{eq:mse_Decomp_clt}
    \widehat \theta_{i, j} - f(i, j) &= \left(\frac{\sum_{i' \in \cn_{row}(i), j' \in \cn_{col}(j)} (\theta_{i', j'} - \theta_{i , j})}{|\deno|} \right) + \left( \frac{\sum_{i' \in \cn_{row}(i), j' \in \cn_{col}(j)} \epsilon_{i',j'}}{|\deno|} \right) \\
    &= \mathbb{B} + \mathbb{V}. 
\end{align}
Let us first analyse the bias term $\mathbb{B}$. We shall use the results derived in \Cref{sec:pointwise_bounds}. We note that, 
\begin{align}
\label{eq:bias_analysis_clt}
    |\mathbb{B}| \stackrel{\eqref{eq:mse_Decomp_clt}}{=}& \left| \frac{\sum_{i' \in \cn_{row}(i), j' \in \cn_{col}(j)} (\theta_{i', j'} - \theta_{i , j})}{|\deno|}  \right| \\
    \stackrel{(i)}{\leq} & \left|\frac{\sum_{i' \in \cn_{row}(i), j' \in \cn_{col}(j)} (\theta_{i', j'} - \theta_{i' , j})}{|\deno|}  \right| + \left|\frac{\sum_{i' \in \cn_{row}(i), j' \in \cn_{col}(j)} (\theta_{i', j} - \theta_{i , j})}{|\deno|}  \right| \\
    \leq &  \max_{j' \in \cn_{col}(j)} ||g^{i'}_{col, j'}||_{\infty} +  \max_{i' \in \cn_{row}(i)} ||g^{j}_{row, i'}||_{\infty} \\
    \stackrel{(L\ref{lem:chaper_lemma})}{\leq} &  \max_{j' \in \cn_{col}(j)} c_{d_1} L^{\frac{d_1}{d_1 + 2 \lambda}} \left(\frac{\rho^*_{j, j'}}{c_{\mathcal{P}, u}} \right)^{\frac{\lambda}{d_1 + 2 \lambda}} +  \max_{i' \in \cn_{row}(i)} c_{d_2} L^{\frac{d_2}{d_2 + 2 \lambda}} \left(\frac{\rho^*_{i, i'}}{c_{\mathcal{P}, v}} \right)^{\frac{\lambda}{d_2 + 2 \lambda}} \\
 \stackrel{(L\ref{lem:nn_conc_lemma})}{\leq} &  \max_{j' \in \cn_{col}(j)} c_{d_1} L^{\frac{d_1}{d_1 + 2 \lambda}} \left(\frac{\widehat d^2(j, j') + \Delta_c/\sqrt{\Bar{\mbi{p}}_{j,j'}n}}{c_{\mathcal{P}, u}} \right)^{\frac{\lambda}{d_1 + 2 \lambda}} + \max_{i' \in \cn_{row}(i)} c_{d_2} L^{\frac{d_2}{d_2 + 2 \lambda}} \left(\frac{\widehat d^2(i,i') + \Delta_r/\sqrt{\Bar{\mbi{p}}_{i,i'}m}}{c_{\mathcal{P}, v}} \right)^{\frac{\lambda}{d_2 + 2 \lambda}} \\
 \stackrel{\eqref{eq:defn_nrowcol}}{\leq} & c_{d_1}' L^{\frac{d_1}{d_1 + 2 \lambda}} \left(\frac{\eta_{col}^2 + \Delta_c/\sqrt{n}}{c_{\mathcal{P}, u}} \right)^{\frac{\lambda}{d_1 + 2 \lambda}} +  c_{d_2}' L^{\frac{d_2}{d_2 + 2 \lambda}} \left(\frac{\eta_{row}^2 + \Delta_r/\sqrt{m}}{c_{\mathcal{P}, v}} \right)^{\frac{\lambda}{d_2 + 2 \lambda}}. 
\end{align}
Here step-$(i)$ follows because of the fact that $|a + b| \leq |a | + |b|$. Hence we observe that, 
\begin{align}
\label{eq:bias_clt}
    \sqrt{|\deno|} |\mathbb{B}| &\stackrel{\eqref{eq:bias_analysis_clt}}{=} O_p\left(\sqrt{|\deno|} \left\{\eta_{row}^{\frac{2\lambda}{d_2 + 2 \lambda}} + \eta_{col}^{\frac{2\lambda}{d_1 + 2 \lambda}} \right\} \right) \\
   &\leq O_p\left(\sqrt{ T_{n,m}\left\{\eta_{row}^{\frac{4\lambda}{d_2 + 2 \lambda}} + \eta_{col}^{\frac{4\lambda}{d_1 + 2 \lambda}} \right\}} \right) \\
   &\stackrel{(A\ref{assump:subsample})}{=} O_p(o_P(1)) \\
   &\stackrel{(ii)}{=} o_P(1). 
\end{align}
The step-$(ii)$ uses the property that if we have two sequences of random variables $\{X_n\}_{n = 1}^{\infty}$ and $\{Y_n\}_{n = 1}^{\infty}$ such that $X_n \stackrel{P}{\rightarrow} 0$ and $Y_n = O_p(X_n)$, then $Y_n \stackrel{P}{\rightarrow} 0$ as well. By standard central limit theorem on the iid error terms $\epsilon_{i',j'}$ we have the following distributional convergence given $\mathcal{U, V}$. 
\begin{align}
\label{eq:var_clt}
    \sqrt{|\deno|} |\mathbb{V}| &\stackrel{\eqref{eq:mse_Decomp_clt}}{=} \sqrt{|\deno|} \frac{\sum_{i' \in \cn_{row}(i), j' \in \cn_{col}(j)} \epsilon_{i',j'}}{|\deno|} \\
    &= \frac{\sum_{i' \in \cn_{row}(i), j' \in \cn_{col}(j)} \epsilon_{i',j'}}{\sqrt{|\deno|}} \\
    &\stackrel{d}{\rightarrow} \mathcal{N}(0, \sigma^2). 
\end{align}
We obtain the distributional convergence of $\widehat \theta_{i,j}$ by combining all the above results, 
\begin{align}
\label{eq: TSNN CLT}
    \sqrt{\abss{\deno}}  \parenth{\what{\theta}_{i, j} - \theta_{i, j}} \stackrel{\eqref{eq:mse_Decomp_clt}}{=} &  \sqrt{|\deno|}  \left( \mathbb{B} + \mathbb{V} \right) \\
     \xrightarrow[\eqref{eq:bias_clt}, \eqref{eq:var_clt}]{d} & \mathcal{N}(0, \sigma^2) + o_P(1) \\
     \underset{(iii)}{\stackrel{d}{=}} &   \mathcal{N}(0, \sigma^2). 
\end{align}
In the above computation step-$(iii)$ follows from Slutsky's theorem.

\section{Deferred simulation details}
\label{sec: appendix D}
\subsection{Deffered details about Simulation experiments}
\label{subsec: simulation appendix}
\paragraph{Tuning $\eta$ and reporting test error} We do 5 fold cross-validation to tune the $\eta$ in TS-NN(other nearest neighbors are implemented in similar fashion) and report its test error. At first, matrix entries are arbitrarily assigned to 5 different folds. One of the folds is held out as the test data and the other 4 folds are used for training the NNs, denote them as $\testfold$ and $\mathcal{F}_{train}$ respectively.

We calculate the row-wise and column-wise distances $\braces{\what{d}^2_{row}(i,i')}_{i,i'\in [n]}$ and $\braces{\what{d}^2_{col}(j,j')}_{j,j'\in [m]}$ from the training dataset. For practical purposes, we use the following definitions of distances
\begin{align}
\label{eq: computation distances}
    \widehat d_{row}^2(i , i') &=  \frac{ \sum_{(i,j),(i',j)\in \trainfold} (X_{i,j} - X_{i', j})^2 A_{i, j}A_{i',j}}{\sum_{(i,j),(i',j)\in \trainfold} A_{i, j}A_{i', j}},\\
    \widehat d_{col}^2(j, j') &=  \frac{ \sum_{(i,j),(i',j)\in \trainfold} (X_{i,j} - X_{i,j'})^2 A_{i, j}A_{i, j'}}{\sum_{(i,j),(i',j)\in \trainfold} A_{i, j}A_{i, j'}}.
\end{align}
Now let $\what{\theta}_{i,j,\mbi{\eta}}$ be the TS-NN($\mbi{\eta}$)'s estimate of $\theta_{i,j}$ for a specified threshold $\mbi{\eta} = \parenth{\eta_{row}, \eta_{col}}$, using the calculated distances. We compute the grid of $t$ $\eta_{row}$'s and $\eta_{col}$'s using certain quantiles of $\braces{\what{d}^2_{row}(i,i')}_{i,i'\in [n]}$ and $\braces{\what{d}^2_{col}(j,j')}_{j,j'\in [m]}$ respectively. We denote it as the $\eta_{grid,row}:=\braces{\eta_{1,row},\dots,\eta_{t,row}}$ and $\eta_{grid,col}:=\braces{\eta_{1,col},\dots,\eta_{t,col}}$. Since percentiles of the distances are unaffected by the addition of the same term $2\sigma^2$ to all the distances, we don't calculate $\what{\sigma}^2$ for $\braces{\what{d}^2_{row}(i,i')}_{i,i'\in [n]}$ and $\braces{\what{d}^2_{col}(j,j')}_{j,j'\in [m]}$ and work with \cref{eq: computation distances}. Then we tune $\eta$ in the training folds as follows:
\begin{align}
    \eta_{tuned} = \arg\min_{\mbi{\eta}\in\eta_{grid,row}\times\eta_{grid,col} } \frac{\sum_{(i,j)\in\trainfold}\parenth{Y_{i,j} - \what{\theta}_{i,j,\mbi{\eta}}}^2A_{i,j}}{\sum_{(i,j)\in\trainfold}A_{i,j}}.
\end{align}
 Then the test error is calculated as the mean squared error on the test fold, denoted as $\testfold$
 \begin{align}
     \what{\sigma}^2_{test} = \frac{\sum_{(i,j)\in\testfold}\parenth{Y_{i,j} - \what{\theta}_{i,j,\mbi{\eta}_{tuned}}}^2A_{i,j}}{\sum_{(i,j)\in\testfold}A_{i,j}}.
 \end{align}
We repeat this process 5 times, each time assigning a different fold as the test fold and report the average of the $\what{\sigma}^2_{test}$'s as the final test error in \cref{fig: merged comparison} and \cref{fig: lambda = 0.5 benchmark comparison}.

For the $\eta_{grid,row},\eta_{grid,col}$, we work with the percentiles ranging from 1.5 to 10(same is true for DR-NN). For one-sided NNs, we expand the percentiles range to 1.5 - 30 to make them slightly more powerful and further highlight the importance of combining row and column neighbors for matrix estimation.

\paragraph{Fitting SoftImpute} We fit SoftImpute using the R package \texttt{softImpute} [\cite{softimpute}]. SoftImpute uses nuclear norm regularization and we fit SoftImpute with $\lambda$ varying over a log grid from 1 to 12. Then we report the minimum MSE among all the MSEs obtained via SoftImpute for various $\lambda$'s. We choose this grid as we found that the optimum lambda is almost always lied in the interior of this grid.

\subsubsection{Estimating $\sige$ for getting confidence intervals} 
\label{subsubsec: clt appendix}
We use the 5-fold data split in the coverage experiments (\cref{fig: merged clt comparison}). After training TS-NN, we set the estimate of noise variance $\sige$ as follows:
\begin{align}
\label{eq: err_sd_estimate}
    \what{\sig}_{\eps}:=\frac{\sum_{(i,j)\in\trainfold}\parenth{Y_{i,j} - \what{\theta}_{i,j,\mbi{\eta}_{tuned}}}^2A_{i,j}}{\sum_{(i,j)\in\trainfold}A_{i,j}}.
\end{align}
$\what{\sig}_{\eps}$ is a consistent estimator for noise SD $\sige$ (See \cref{rem: consistency of noise sd}). Empirically, \cref{fig: merged clt comparison} verifies the consistency of $\what{\sig}_{\eps}$.

\begin{remark}[finite sample adjustment]
    \label{rem: finite sample adjust}
    Now, for the following $\parenth{1-\alpha}$ CIs
\begin{align}
\label{eq: unboosted ci}
    \parenth{\what{\theta}_{i,j} -\frac{z_{\alpha/2}\what{\sigma}_{\eps} }{\sqrt{\abss{\deno}}} , \what{\theta}_{i,j} +\frac{z_{\alpha/2}\what{\sigma}_{\eps} }{\sqrt{\abss{\deno}}}};\quad \quad \parenth{\what{\theta}_{i,j} -\frac{z_{\alpha/2}\sige }{\sqrt{\abss{\deno}}} , \what{\theta}_{i,j} +\frac{z_{\alpha/2}\sige }{\sqrt{\abss{\deno}}}} 
\end{align}

coverage rate is coming out to be $\sim 70\%$ even for matrices with $n=m=300$. So, to boost finite sample coverage, we add the within-nearest-neighbors SD to the consistent estimate of noise SD i.e., we work with the following CI

\begin{align}
    \parenth{1-\alpha}\% \quad\mbi{\what{CI}}:=\parenth{\what{\theta}_{i,j} -\frac{z_{\alpha/2}\tilde{\sigma}_{i,j,\eps} }{\sqrt{\abss{\deno}}} , \what{\theta}_{i,j} +\frac{z_{\alpha/2}\tilde{\sigma}_{i,j,\eps} }{\sqrt{\abss{\deno}}}}\qtext{where} \tilde{\sigma}_{i,j,\eps} = \what{\sig}_\eps + \what\sig_{i,j}
\end{align}
where $\what\sig^2_{i,j}=\frac{\sum_{(i',j')\in\deno}\parenth{Y_{i',j'} - \what{\theta}_{i,j,\mbi{\eta}_{tuned}}}^2A_{i',j'}}{\abss{\deno} - 1}$ if $\abss{\deno} >1$ else $\what\sig^2_{i,j} = 0$ and $\what{\sig}_\eps$ is given by \cref{eq: err_sd_estimate}. In similar fashion, oracular CIs with finite sample adjustment is defined as
\begin{align}
    \parenth{1-\alpha}\% \quad\mbi{CI_o}:=\parenth{\what{\theta}_{i,j} -\frac{z_{\alpha/2}\sig^\ddagger_{i,j,\eps} }{\sqrt{\abss{\deno}}} , \what{\theta}_{i,j} +\frac{z_{\alpha/2}\sig^\ddagger_{i,j,\eps} }{\sqrt{\abss{\deno}}}}\qtext{where} \sig^\ddagger_{i,j,\eps} = \sig_\eps + \what\sig_{i,j}
\end{align}
\end{remark}

For evaluating the coverage in one data split $\trainfold$-$\testfold$, we train TS-NN on $\trainfold$ and then look at the average proportion of counterfactuals $\theta_{i,j}$ in $\testfold$ covered by $(1-\alpha)\%$ $\mbi{\what{CI}}$ and $(1-\alpha)\%$ $\mbi{CI_o}$. We repeat this process 5 times, in each iteration we assign a different fold to the $\testfold$. Ultimately, we report the average of 5 $\testfold$ coverages as the final empirical coverage rate of $(1-\alpha)\%$ $\mbi{\what{CI}}$ and $(1-\alpha)\%$ $\mbi{CI_o}$.


\subsubsection{Additional Details}
\paragraph{Runtime Complexity of NN based methods}
In terms of runtime complexity for a $n \times m$ matrix, Row - NN has $\mathcal{O}({n\choose2}m + n) = \mathcal{O}(n^2m +n)$. First term arises as there are ${n\choose 2}$ combinations of rows and computing $L_2$ distance between each pair of rows take $\mathcal{O}(m)$ time. The second term arises due to aggregation of $\mathcal{O}(n)$ terms. Similarly Col - NN has a runtime complexity of $\mathcal{O}(m^2n +m)$. Finally, TS-NN has a runtime complexity of $\mathcal{O}(m^2n +n^2m + mn)$. DR-NN can be expressed as a linear combination of Row-NN, Col-NN and TS-NN~\citep{dwivedi2022doubly}, hence it also has a runtime complexity of $\mathcal{O}(m^2n +n^2m + mn)$.
\paragraph{Row-NN algorithm}
\label{subsec: additional exp details}
Just for clarity, the Row-NN algorithm is outlined below: 
\begin{itemize}
    \item[Step-1:] Compute the pairwise row distance estimates $\widehat d^2_{row}(i , j)$ for all $i, j \in [n]$ and use it to construct the neighborhood of row $i$, 
    \begin{align}
         \cn_{row}(i) &=\{j \in [n]: \hat{d}_{row}^2(i, j )\leq \eta_{row}\}.
    \end{align}
    Here $\eta_{row}\geq 0$ is the tuning parameter. 
    \item[Step-2:] The estimate of $\theta_{i,t}$ is given by the row nearest neighbor estimate, 
    \begin{align}
        \widehat \theta_{i, t} = \frac{\sum_{j \in \cn_{row}(i)}X_{j,t}A_{j, t}}{|\cn_{row}(i)|}, 
    \end{align}
\end{itemize}
Col-NN has the column-counterpart algorithm of the above procedure.
\subsection{Deffered details of real-life case study: HeartSteps}
\label{subsec: heartsteps appendix}
We will now provide additional details about the HeartSteps dataset.

We observe that at each decision time point, the HeartSteps algorithm determined whether a user is available based on certain attributes like whether the user was driving a car, etc. Other features are whether the user had an active connection at or around the decision time, was not in transit and phone was not in snooze mode. We focus on the matrix completion at the ``available decision times". After screening out the ``non-avaliable times", we see that only few users have $>210$ decision times, resulting in filtering out the remaining columns/ decision times from consideration. Ultimately, we work with a dataset of 37 rows/users and 210 columns/decision times. For further details, we refer the readers to \cite{klasnja2019efficacy}.

For implementation, we use 5-fold blocked cross validation. HeartSteps' underlying recommender algorithm uses a user's history to sequentially assign interventions. So, there is a temporal dependence of the later columns on the previous columns. To tackle that, we first of all divide the the rows of the matrix into 5 folds. Now in each iteration we fix a fold, we hold out the entries in the last 40 columns of those rows as our test dataset. Remaining entries are used for training the NNs, USVT and SoftImpute. For NNs, we use the tuned $\eta_{row}$ and $\eta_{col}$(or one of them for Row-NN and Col-NN) to complete the matrix. Then we report the difference between hold-out entries and their corresponding fitted estimates. We see the estimated matrices in SoftImpute and USVT resulted in extremely high test errors compared to NN-based methods. For $\eta_{grid,row},\eta_{grid,col}$ in one-sided NNs and DR-NN, we work with the percentiles 25-85. Low percentiles were avoided as they gave the lowest training errors but the test errors were exorbitantly high. For TS-NN $\mbi{\eta}$ grid, we consider percentiles 8 - 50.

\subsection{Additional Plots}

\begin{figure*}[ht]
    \centering
    \begin{tabular}{cc}
         \quad\qquad \textbf{MCAR} & \quad \qquad \textbf{MNAR} \\
        \includegraphics[trim=0in 0.5in 0in 0in, clip, width=0.45\textwidth]{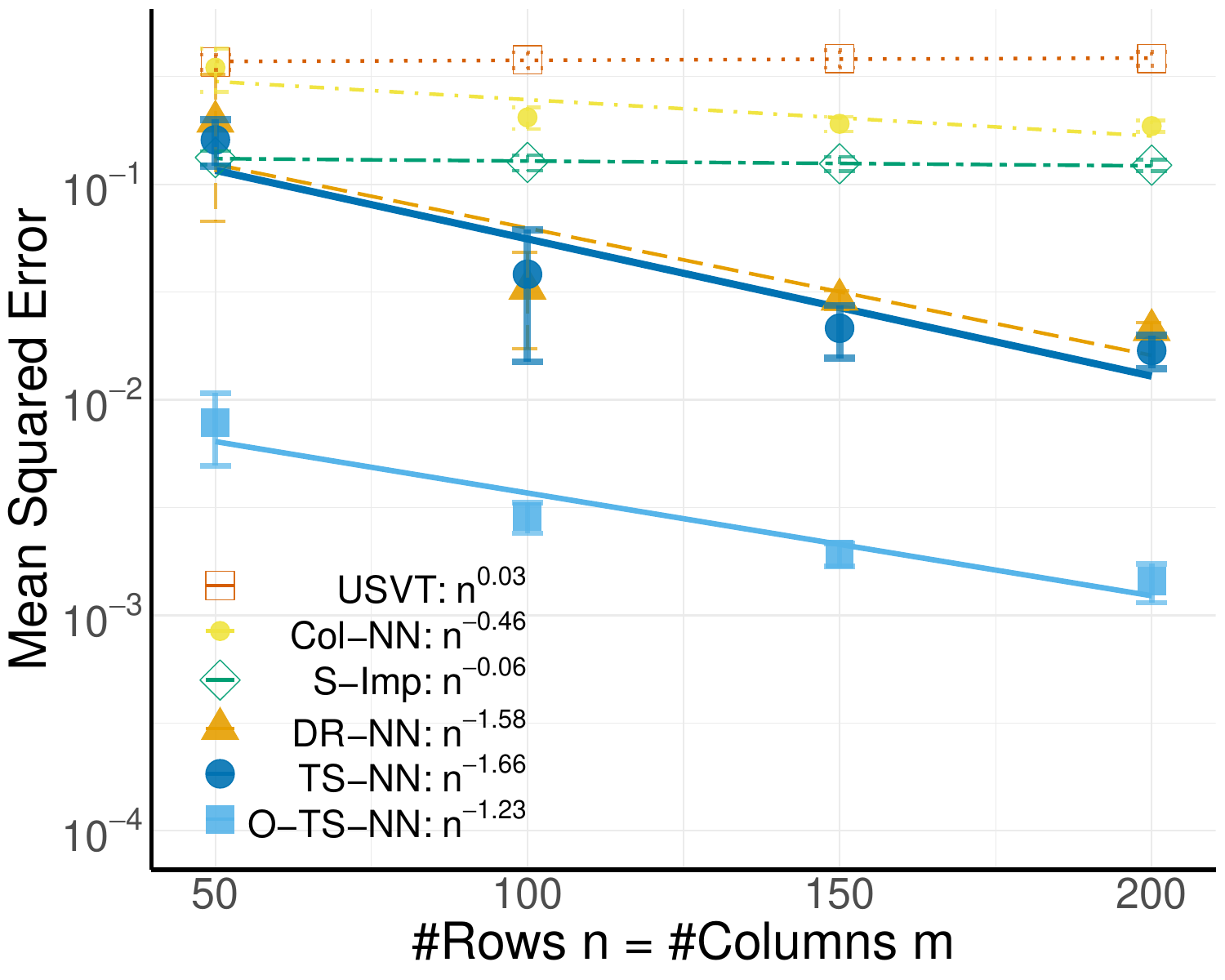}&
        \includegraphics[trim=0in 0.5in 0in 0in, clip, width=0.45\textwidth]{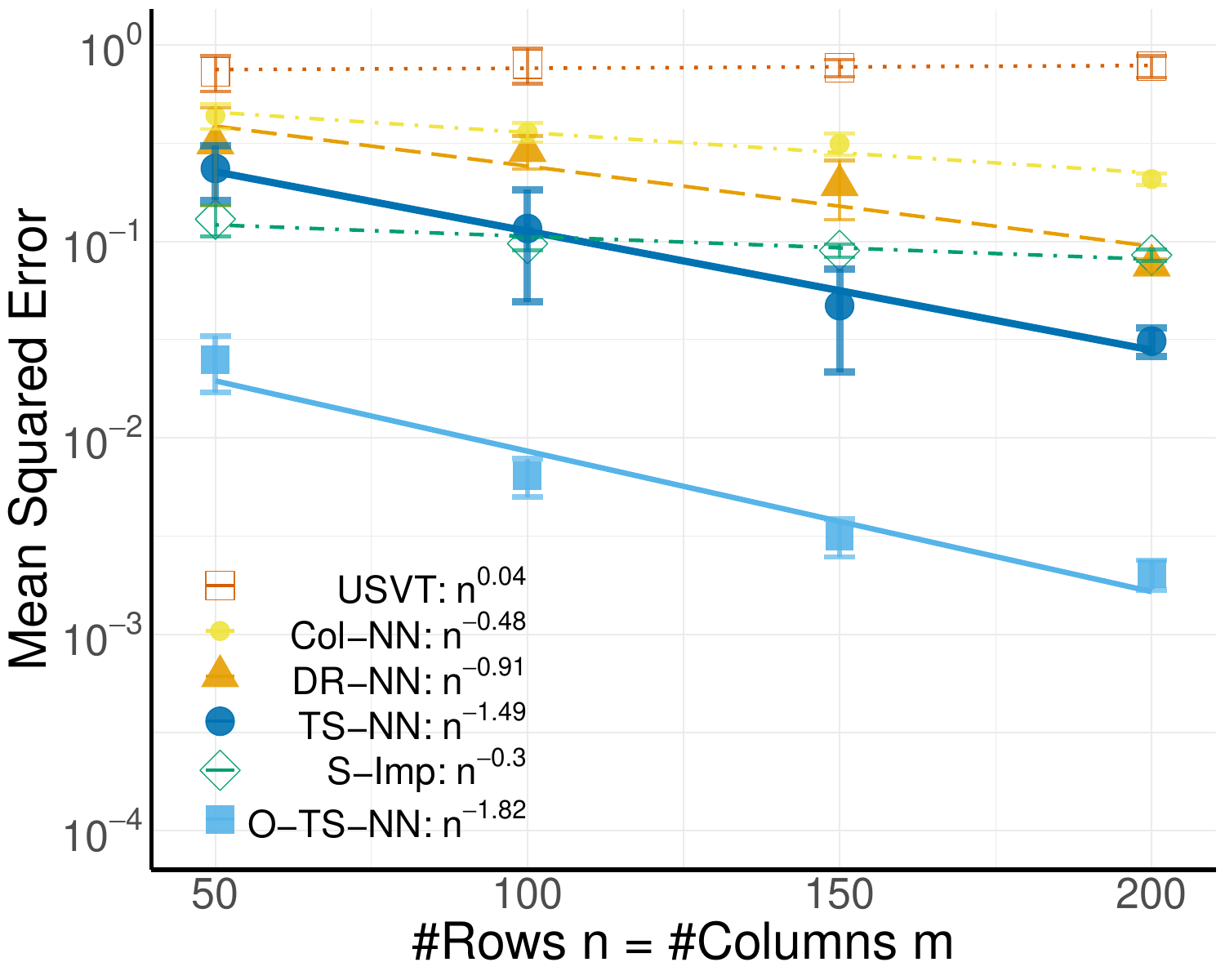}\\[-1mm]
        \ \ \quad \# Rows $n$ &\ \  \quad  \# Rows $n$
    \end{tabular}
    \\[2mm]
    \caption{\textbf{MSE of different algorithms for estimating $\theta_{i,t}$ as a function of $n$ when $\lambda=0.5$ and SNR $= 2$.} TS-NN demonstrates quantifiable improvements over USVT, SoftImpute, and other NNs in estimation both in terms of MSE value and MSE decay rate. Moreover, TS-NN shows similar (if not better) MSE decay rates with $n$ as compared to its oracle version in MCAR setup. Over here, we keep $n=m$ to keep the interpretation uncomplicated.}
    \label{fig: lambda = 0.5 benchmark comparison}
\end{figure*}

\vskip 0.2in
\bibliography{refs}

\end{document}